\DeclareMathOperator*{\argmin}{arg\,min} 
\let\inf\relax \DeclareMathOperator*\inf{\vphantom{p}inf}
\theoremstyle{plain}
\newtheorem{thm}{Theorem}
\newtheorem{lem}{Lemma}
\newtheorem{prop}{Proposition}
\newtheorem*{prop:repeat1}{Proposition \ref{thm: concentration_coefficient}}
\newtheorem*{prop:repeat2}{Proposition \ref{lem_2}}
\theoremstyle{definition}
\newtheorem{defn}{Definition}
\theoremstyle{remark}
\newtheorem{rem}{Remark}
\newcommand{\email}[1]{\protect\href{mailto:#1}{#1}}
\newcommand{\bR}{\mathbb {R}}
\newcommand{\bH}{H}
\newcommand{\bP}{\mathbb{P}}
\newcommand{\bS}{\mathcal{S}}
\newcommand{\bA}{\mathcal{A}}
\newcommand{\EE}{\mathbb{E}}
\newcommand{\cB}{\mathcal{B}}
\newcommand{\cH}{\mathcal{H}}
\newcommand{\cond}{\,|\,}
\newcommand{\transpose}{^{\operatorname{T}}}
\newcommand{\rmd}{\,\mathrm{d}}
\title{Perturbational Complexity by Distribution Mismatch: A Systematic Analysis of Reinforcement Learning in Reproducing Kernel Hilbert Space}
\author[1]{Jihao Long\thanks{\email{jihaol@princeton.edu}}}
\author[2]{Jiequn Han\thanks{\email{jiequnhan@gmail.com}}}
\affil[1]{Program of Applied and Computational Mathematics, Princeton University}
\affil[2]{Center for Computational Mathematics, Flatiron Institute}
\begin{document}

\maketitle
\begin{abstract}
 Most existing theoretical analysis of reinforcement learning (RL) is limited to the tabular setting or linear models due to the difficulty in dealing with function approximation in high dimensional space with an uncertain environment. This work offers a fresh perspective into this challenge by analyzing RL in a general reproducing kernel Hilbert space (RKHS). We consider a family of Markov decision processes $\mathcal{M}$ of which the reward functions lie in the unit ball of an RKHS and transition probabilities lie in a given arbitrary set. We define a quantity called \textit{perturbational complexity by distribution mismatch} $\Delta_{\mathcal{M}}(\epsilon)$ to characterize the complexity of the admissible state-action distribution space in response to a perturbation in the RKHS with scale $\epsilon$. We show that $\Delta_{\mathcal{M}}(\epsilon)$ gives both the lower bound of the error of all possible algorithms and the upper bound of two specific algorithms (fitted reward and fitted Q-iteration) for the RL problem. Hence, the decay of $\Delta_\mathcal{M}(\epsilon)$ with respect to $\epsilon$ measures the difficulty of the RL problem on $\mathcal{M}$. We further provide some concrete examples and discuss whether $\Delta_{\mathcal{M}}(\epsilon)$ decays fast or not in these examples. As a byproduct, we show that when the reward functions lie in a high dimensional RKHS, even if the transition probability is known and the action space is finite, it is still possible for RL problems to suffer from the curse of dimensionality.
    
\end{abstract}

\section{Introduction}
\label{sec_intro}
Modern reinforcement learning (RL) algorithms in practice often utilize function approximation tools to deal with problems involving an enormous amount of states in high dimensions. However, the majority of existing theoretical analysis of RL is only applicable to the tabular setting~(see, e.g.,~\cite{jaksch2010near,azar2012sample,osband2016generalization,azar2017minimax,dann2017unifying,jin2018q}), in which both the state and action spaces are discrete and finite, and no function approximation is involved. Relatively simple function approximation methods, such as 
the linear model in \cite{yang2019sample,jin2020provably} or generalized linear model in \cite{wang2019optimism,li2021sample}, have been recently studied in the context of RL with various statistical estimates. 
Yet, these results are not sufficient to explain the practical success of RL algorithms in high dimensions.
In the tabular setting, the number of samples required by an RL algorithm is proportional to the size of state-action pairs, which is enormous in practice. For linear model or generalized linear model, the assumption therein is pretty restrictive for practice.
The kernel function is a class of models that can approximate more general functions than the tabular setting or linear model, and it is widely used in practice.
Moreover, the kernel function approximation is closely related to neural network approximation, as established in the theory of neural tangent kernel in \cite{jacot2018neural} and Barron space in \cite{ma2019barron}.
RL with kernel function approximation has been recently studied in~\cite{farahmand2016regularized,domingues2020regret,yang2020provably,yang2020function,long20212}. Still, results therein either suffer from the curse of dimensionality or require stringent assumptions on the kernel or the dynamics. 
As pointed out in \cite{kuo2008multivariate} and  \cite[Section~5]{long20212}, $L^\infty$-estimation, as a widely used technique in classical RL analysis of the tabular setting, may give rise to a curse of dimensionality in kernel method, which signifies new difficulties in RL algorithms with kernel function approximation. 
In this paper, we aim at a systematic study of RL with kernel function approximation and consider the following question:\\

\emph{When can a reinforcement learning problem be solved efficiently using kernel function approximation?}
\\

Note that in this paper ``efficiency" is considered in terms of sample complexity, i.e., the number of data an algorithm needs to collect to achieve a specified performance criterion. Our analysis will focus on the sample complexity of the RL problem, while other complexities such as computational complexity will not be covered.
We are particularly interested in high dimensional state-action spaces and want to identify RL problems that can be solved efficiently with kernel function approximation even in high dimensions.
Similar questions have been studied in the realm of supervised learning. There the answer is quite clear: once the target function lies in a reproducing kernel Hilbert space (RKHS), no matter how large the dimension is, the corresponding supervised learning problem can be solved efficiently (see, e.g.,~\cite{scholkopf2002learning}). In RL, the reward function plays a similar role with the target function in supervised learning. This analogy motivates us to study a more concrete question:\\

\emph{If the unknown reward function lies in an RKHS, what is the condition of the RKHS and transition dynamics to ensure that the reinforcement learning problem can be solved efficiently?}
\\

Below we give some intuition of the main challenge to answer this question and the key concept we introduce in this paper.
Given an RKHS $\mathcal{H}$ and a probability distribution $\nu$, existing results in supervised learning have shown that for any target distribution $g$ lying in the unit ball of $\mathcal{H}$, one can efficiently obtain an estimation $\hat{g}$ such that
\begin{equation*}
    \|g - \hat{g}\|_\mathcal{H} \le 2, \|g - \hat{g}\|_{L^2(\nu)} \le \epsilon
\end{equation*}
for any $\epsilon > 0$ (see, e.g.,~\cite{scholkopf2002learning} or Lemma \ref{lem_1}). We can then view $\hat{g}$ as a $\nu$-perturbation of $g$ and define the $\nu$-perturbation space with scale $\epsilon$ as
\begin{equation*}
   \mathcal{H}_{\epsilon,\nu} = \{ g \in \mathcal{H}\colon \|g\|_\mathcal{H} \le 1, \|g\|_{L^2(\nu)}\le \epsilon\}.
\end{equation*}
While the distribution $\nu$ is given in supervised learning, in the theoretical analysis of RL,
one needs to control the difference of the expectation between the target function and estimation with a probability distribution unknown a priori. 
That probability distribution is unknown because that is the state distribution or state-action distribution induced by a particular policy, which is unknown a priori.
Take the estimation of the optimal Q-value function $Q_h^*$ (see \eqref{def_optimal_q} below for the detailed definition) for example. 
The optimal policy can be derived from the optimal Q-value function through the greedy policy. In practice, given any probability distribution $\nu$, under certain conditions, one can estimate the optimal Q-value function $Q_h^*$ in the sense of $L^2(\nu)$ using Q-learning algorithm in \cite{cai2019neural} or fitted Q-iteration algorithm (see Algorithm \ref{alg:FittedQ} or  \cite{fan2020theoretical} and \cite{long20212}). In other words, one can obtain $\hat{Q}^*_h$, a $\nu$-perturbation of $Q^*_h$. However, when evaluating the performance of $\hat{\pi}$, the greedy policy derived from $\hat{Q}_h^*$, one needs to control the error between $Q_h^*$ and $\hat{Q}_h^{*}$ under the state-action distribution induced by the policy $\hat{\pi}$ (see the performance difference lemma in~\cite{kakade2002approximately} or \eqref{performance_dif}), which is unknown before one obtains $\hat{Q}_h^*$. We call this phenomenon \textit{distribution mismatch}: mismatch between the distribution $\nu$ for estimation and the distribution for evaluation that is unknown a priori. This phenomenon is ubiquitous in the analysis of RL (see, e.g.,~\cite[Section~6]{kakade2002approximately}).
Although not detailed above, when estimating the optimal Q-value function in the sense of $L^2(\nu)$, one needs to deal with the error propagation between steps, and distribution mismatch also brings difficulty.

To quantify the error brought by distribution mismatch, we define a semi-norm
\begin{equation*}
    \|g\|_\Pi = \sup_{\rho \in \Pi} |\int g \rmd \rho|
\end{equation*}
where $\Pi$ is a set of probability distributions and introduce the \textit{perturbation response by distribution mismatch}: 
\begin{equation*}
    \mathcal{R}(\Pi,\mathcal{H},\epsilon,\nu) = \sup_{g \in \mathcal{H}_{\epsilon,\nu}}\|g\|_\Pi.
\end{equation*}
One shall notice that if $\Pi=\{\nu\}$, then $\mathcal{R}(\Pi,\mathcal{H},\epsilon,\nu)$ cannot be greater than $\epsilon$. However, in analysis, we usually can only choose $\Pi$ as the possible state-action distributions under a class of policies.
The scale of perturbation response by distribution mismatch measures the discrepancy between $\nu$ and $\Pi$ and reflects the error brought by the fact that we do not know the state-action distribution under the policy of interest. If $\Pi$ consists of all probability distributions, then the above semi-norm is just the $L^\infty$-norm, which is used to handle the distribution mismatch in the tabular and linear RL problems.
However, for many common RKHSs, the $L^\infty$-estimation may suffer from the curse of dimensionality; see \cite{kuo2008multivariate} and \cite[Section~5]{long20212} for a detailed discussion. The challenge of $L^\infty$-estimation in high dimensional space reveals the difficulty of RL problems in the RKHS compared to the tabular setting or linear function approximation.
In this sense, the introduced $\Pi$-norm can be understood as a generalization of the $L^\infty$-norm to overcome this difficulty. This concept takes into account the distribution structure of the RL problem and allows us to do a more delicate error analysis.
Following this idea, we introduce the \textit{perturbational complexity by distribution mismatch}  $\Delta_\mathcal{M}(\epsilon)$ for a large class of families of Markov decision processes (MDPs) and prove that once the perturbational complexity decreases fast with respect to $\epsilon$, the RL problem can be solved efficiently.
On the other hand, by considering the RL problem in which one only knows the reward function lies in the unit ball of a general RKHS and transition probability lies in a given arbitrary set, we show that the perturbational complexity $\Delta_\mathcal{M}(\epsilon)$ must decay fast with respect to $\epsilon$ if this RL problem can be solved efficiently.

Combining the above two types of results together, we show that the perturbational complexity $\Delta_\mathcal{M}(\epsilon)$ measures the intrinsic difficulty of an RL problem.
Note that most of our results still hold if we replace RKHS with a Banach space in which we can efficiently obtain an $L^2$-estimation, e.g., linear space or Barron space in \cite{ma2019barron}.
Furthermore, our results shed some light on studying practical RL algorithms.
First,  the structure of $\Pi$ has been used in the previous analysis of RL in various settings; see e.g.,~\cite{farahmand2010error,farahmand2016regularized,chen2019information,agarwal2021theory}. While previous works mainly focus on the so-called concentration coefficients of $\Pi$ and use related assumptions to prove upper bounds for RL problems, our work shows the necessity of additional assumptions on $\Pi$ in order to ensure that the RL problem in the RKHS can be efficiently solved.
As indicated by Proposition \ref{lem_2}, if the eigenvalue decay of the kernel is slow and $\Pi$ consists of all probability distributions, then $\Delta_\mathcal{M}(\epsilon)$ also decays slowly. Therefore, to design efficient RL algorithms, one needs to better understand the set $\Pi$, particularly when the eigenvalue decay of the kernel is slow.
Second, when the unknown reward function lies in the unit ball of an RKHS and the transition probability is known, Theorems \ref{thm: Known Transition} and \ref{upper_bound_known} show that solving the RL problem is equivalent to using finite values of a target function $g$ to obtain a function estimate $\hat{g}$ that is accurate with respect to the $\Pi$-norm; see Remark \ref{equivalence_2} for detailed discussions. 
Theorems \ref{thm: unknown_transition} and \ref{upper_bound_unknown} also establish a partial connection between these two problems in the case of unknown transition probability. Therefore, it is helpful to study this supervised learning problem as a prototype of the RL problem. %
\newline

\noindent\textbf{Our Contribution}
\begin{enumerate}
    \item We define the perturbational complexity by distribution mismatch $\Delta_\mathcal{M}(\epsilon)$ for the families of MDPs $\mathcal{M}$ of which the reward functions lie in the unit ball of an RKHS and transition probabilities lie in a given arbitrary set.  We then show that $\Delta_\mathcal{M}(\epsilon)$ gives a lower bound for the error of every algorithm on the corresponding RL problem (Theorems \ref{thm: Known Transition} and \ref{thm: unknown_transition}).
    \item  In the case of known transition (all transition probabilities in the families of MDPs are the same), we show that $\Delta_\mathcal{M}(\epsilon)$ also gives an upper bound of the error of the fitted reward algorithm (Algorithm \ref{alg:FittedReward}) without any further assumption (Theorem \ref{upper_bound_known}).
    \item In the case of unknown transition (general case), with an additional assumption on Bellman operators \eqref{Bellman_assumption}, we show that $\Delta_\mathcal{M}(\epsilon)$ gives an upper bound for the error of the fitted Q-iteration algorithm (Algorithm \ref{alg:FittedQ} and Theorem \ref{upper_bound_unknown}).
    \item We give a concrete form of the perturbation response by distribution mismatch (Lemma \ref{thm: concentration_coefficient}) and show that when the assumptions on concentration coefficients in the existing literature (see e.g., \cite{farahmand2016regularized,chen2019information,fan2020theoretical,long20212}) are satisfied or the eigenvalue decay of the kernel is fast, $\Delta_\mathcal{M}(\epsilon)$ decays fast with respect to $\epsilon$ (Proposition \ref{concentrated_case} and Proposition \ref{lem_2}).
    \item We give a concrete example in which the reward functions lie in a high dimensional RKHS, the transition probability is known, and the action space is finite, but the corresponding RL problem can not be solved without the curse of dimensionality (Proposition \ref{cod_case}).
\end{enumerate}

\noindent\textbf{Related Literature}
While the optimal lower bound of the error of RL algorithms in the tabular setting has been established in \cite{azar2012sample,azar2017minimax}, there are much fewer results about lower bounds of RL with function approximation. \cite{ni2019learning} proves an optimal lower bound for Lipschitz function approximation. \cite{du2019good} shows that even when the value function, policy function, reward function, and transition probability can be approximated by a linear function, it is still possible that solving the RL problem requires samples exponentially depending on the horizon. \cite{chen2019information} shows that even when the set of candidate approximating functions is finite and includes the optimal Q-value function, there does not exist an algorithm whose sample size is a polynomial function of the logarithm of the size of the candidate function set, the size of action space, horizon, and the reciprocal of accuracy. In other words, the previous works either consider function spaces (Lipschitz function space) that are too large to derive meaningful upper bound or only give lower bounds on special cases. Instead, we consider a fairly general class of RL problems associated with the RKHS and give both lower bound and upper bound through the perturbational complexity by distribution mismatch. 

Previous works establish several upper bounds for RL algorithms with kernel function approximation. Based on the type of used assumptions, these works can be divided into two categories. The first category of upper bounds in \cite{domingues2020regret,yang2020provably,yang2020function} depends on the eigenvalue decay of kernel while the second category in \cite{farahmand2010error,long20212} requires accessibility to reference distributions that can uniformly bound all possible state-action distributions under admissible policies (called assumption on concentration coefficients). In this work, we show that the perturbational complexity $\Delta_\mathcal{M}(\epsilon)$ decays fast in both situations and establish an upper bound for the fitted reward algorithm (see Algorithm \ref{alg:FittedReward} in Section~\ref{sec:upper_known}) and the fitted Q-iteration algorithm (see Algorithm \ref{alg:FittedQ} in Section~\ref{sec:upper_unknown}) under the assumption that $\Delta_\mathcal{M}(\epsilon)$ decays fast. In this sense, our work generalizes both categories of the previous work. %

Besides the error bounds of the RL algorithms, there is recent work in \cite{duan2021optimal} studying  policy evaluation in RKHS as a component of the RL algorithm and analyzing its optimal convergence rate.
\vspace{2.5ex}\newline
\noindent\textbf{Notation}: Let $\mathcal{X}$ be an arbitrary subset of a Euclidean space, we use $C(\mathcal{X})$ and $\mathcal{P}(\mathcal{X})$ to denote the bounded continuous function space and probability distribution space on $\mathcal{X}$, respectively. We use $\|\cdot\|_{C(\mathcal{X})}$ to denote the uniform norm on $C(\mathcal{X})$:
$$
    \|g\|_{C(\mathcal{X})} = \sup_{x \in \mathcal{X}}|g(x)|.
$$
Given a probability distribution $\nu$ on $\mathcal{X}$, we use $\|\cdot\|_{L^2(\nu)}$ and $\|\cdot\|_{\infty}$ to denote $L^2$-norm and $L^\infty$-norm, respectively.
Given two probability distributions $\mu$ and $\nu$ in $\mathcal{P}(\mathcal{X})$, define the total variation distance:
\begin{equation*}
    \|\mu-\nu\|_{TV} = \sup\{ |\mu(A) - \nu(A)|\colon A \text{ is a measurable subset of }\mathcal{X}\}.
\end{equation*}
When $\mu$ is absolute continuous with respect to $\nu$, define
 the Radon-Nikodym derivative $\frac{\rmd \mu}{\rmd \nu}$ 
and the Kullback-Leibler divergence:
\begin{equation*}
    \mathrm{KL}(\mu\,||\,\nu) = \int_{\mathcal{X}}\log(\frac{\rmd \mu}{\rmd \nu})\rmd \mu.
\end{equation*}
For any random variable, we use $\mathcal{L}(\cdot)$ to denote the law of the random variable.
Given a positive integer $H$, $[H]$ denotes the set $\{1,\dots,H\}$. $\mathbb{N}^+$ denotes the set of all positive integers. $\mathbb{S}^{d-1}$ denotes the unit sphere of $\bR^d$: $\{x \in \bR^d, \|x\|_2 = 1\}$.
Given a Banach space $\mathcal{B}$, we use  $\|\cdot\|_\mathcal{B}$ to denote  the norm of $\mathcal{B}$.
We say $f(n) = \Theta(g(n))$, if there exist two constants $c, C > 0$ independent of $n$ such that $cg(n) \le f(n) \le Cg(n)$.\footnote{Later we also use $\Theta$ to denote an index set associated with a family of MDPs. The specific meaning should be always clear from context.}

\section{Preliminary}
\subsection{Markov Decision Process}
We consider an episodic MDP $(\bS,\bA,H,P,r,\mu)$ as the mathematical model for the RL problem. Here $H$ is a positive constant integer indicating the length of each episode. $\bS$ and $\bA$ denote the set of all the states and actions, respectively. 
We assume $\bS$ is a subset of a Euclidean space and $\bA$ is a compact subset of a Euclidean space.
$P\colon [\bH]\times\bS\times\bA \mapsto \mathcal{P}(\bS)$ is the state transition probability. For each $(h,s,a) \in [\bH]\times\bS\times\bA$, $P(\,\cdot\cond h,s,a)$ denotes the transition probability for the next state at step $h$ if the current state is $s$ and action $a$ is taken. $r\colon [\bH] \times \bS \times\bA \mapsto \mathbb{R}$ is the reward function, denoting the expected reward at step $h$ if we choose action $a$ at the state $s$. We assume each observed reward is the sum of the expected reward and an independent standard Gaussian noise. $\mu \in \mathcal{P}(\bS)$ is the initial distribution.

We denote a policy by $\pi = \{\pi_h\}_{h=1}^H \in \mathcal{P}(\bA \cond \bS, H)$, where
\begin{align}\label{def_col_distribution}
    \mathcal{P}(\bA\,|\,\bS,H) = \Big\{\{\pi_h(\,\cdot \cond \cdot\,)\}_{h=1}^H\colon \pi_h(\,\cdot \cond s) \in \mathcal{P}(\bA)
    \text{ for any }s \in \bS \text{ and } h \in [H]\Big\}.
\end{align}
Given a time step $h$, a transition probability $P$, a policy $\pi$ and an initial distribution $\mu$, we use $\rho_{h,P,\pi,\mu}$ to denote the distribution of $(S_h,A_h)$ where $S_1 \sim \mu$, $A_h$ follows the policy $\pi_h(\,\cdot\cond S_h)$ and $S_{h+1}$ is distributed according to the transition probability $P(\,\cdot\cond h,S_h,A_h)$. Moreover, we use $\Pi(h,P,\mu)$ to denote the set of all the possible distributions of $\rho_{h,P,\pi,\mu}$ as follows
\begin{equation*}
    \Pi(h,P,\mu) = \{\rho_{h,P,\pi,\mu}\colon \pi \in \mathcal{P}(\bA \cond \bS,H)\}.
\end{equation*}
and let
\begin{equation*}
    \Pi(P,\mu) = \bigcup_{h \in [H]}\Pi(h,P,\mu).
\end{equation*}
Given an MDP $M$ and a policy $\pi$, we define the total reward as follows:
\begin{equation*}
    J(M,\pi) = \sum_{h=1}^H \int_{\bS\times\bA} r(h,s,a)\rmd \rho_{h,P,\pi,\mu}(s,a).
\end{equation*}
The optimal total reward is defined as $J^*(M) = \sup_{\pi \in \mathcal{P}(\bA \cond \bS,H)} J(M,\pi)$.
We assume there exists at least one optimal policy $\pi^*$ such that $J(M,\pi^*)=J^*(M)$.

\subsection{Reproducing Kernel Hilbert Space (RKHS)}
Suppose $k:(\bS\times\bA)\times(\bS\times\bA)\mapsto \bR$ is a continuous positive definite kernel that satisfies 
\begin{enumerate}
    \item $k(z,z') = k(z',z)$, $\forall z, z' \in \bS\times\bA$;
    \item $\forall\, m \ge 1$, $z_1,\dots,z_m \in \bS\times\bA$ and $c_1,\dots,c_m \in \bR$, we have:
    \begin{equation*}
        \sum_{i=1}^m\sum_{j=1}^mc_ic_jk(z_i,z_j) \ge 0.
    \end{equation*}
\end{enumerate}
Then, there exists a Hilbert space  $\mathcal{H}_k \subset C(\bS\times\bA)$ such that
\begin{enumerate}
    \item $\forall\, z \in \bS\times\bA$, $k(z,\,\cdot\,) \in \mathcal{H}_k$;
    \item $\forall\, z \in \bS\times\bA$ and $g \in \mathcal{H}_k$, $g(z) = \langle g, k(z,\,\cdot\,)\rangle_{k}$,
\end{enumerate}
and $k$ is called the reproducing kernel of $\mathcal{H}_{k}$ in \cite{aronszajn1950theory} and we use $\|\cdot\|_{k}$ and $\langle\,\cdot, \cdot\, \rangle_k$ to denote the norm and inner product in the Hilbert space $\mathcal{H}_k$, repsectively.

Given a probability distribution $\nu$ on $\bS \times \bA$, we will use $\{\Lambda_i^{\nu}\}_{i \in \mathbb{N}^{+}}$ and $\{\psi_i^{\nu}\}_{i \in \mathbb{N}^{+}}$ to denote the eigenvalues and eigenfunctions of the operator
\begin{equation*}
    (\mathcal{K}_{\nu}g) (z) \coloneqq \int_{\bS \times \bA}k(z,z')g(z')\rmd \nu(z') 
\end{equation*}
from $L^2(\nu)$ to $L^2(\nu)$. We futher require that $\{\Lambda_i^\nu\}_{i \in \mathbb{N}^{+}}$ is nonincreasing and $\{\psi_i^\nu\}_{i \in \mathbb{N}^{+}}$ is orthonormal in $L^2(\nu)$. The famous Mercer decomposition states that
\begin{equation}\label{mercer_decompo}
    k(z,z') = \sum_{i=1}^{+\infty}\Lambda_i^\nu \psi_i^\nu(z)\psi_i^{\nu}(z').
\end{equation}
Moreover, for any $g \in \mathcal{H}_k$
\begin{equation}\label{mercer_norm}
    \|g\|_k^2 = \sum_{i=1}^{+\infty}\frac{1}{\Lambda_i^\nu}|\langle g, \psi_i^\nu\rangle_{L^2(\nu)}|^2.
\end{equation}
See, e.g., \cite[Section~2.1]{bach2017equivalence}.

Given any two probability distributions $\rho$ and $\rho'$ on $\bS\times\bA$,  the maximum mean discrepancy (MMD) is defined as follows (see e.g. \cite{borgwardt2006integrating}):
\begin{equation*}
    \mathrm{MMD}_k(\rho,\rho') = \sup_{\|g\|_k \le 1}|\int_{\bS\times\bA}g(z)\rmd \rho(z) - \int_{\bS\times\bA}g(z)\rmd \rho'(z)|.
\end{equation*}
An equivalent but more concrete expression of MMD is
\begin{equation*}
    \mathrm{MMD}_k(\rho,\rho') = \sqrt{\int_{\bS\times\bA}\int_{\bS\times\bA}k(z,z')\rmd(\rho-\rho')(z)\rmd(\rho-\rho')(z')}.
\end{equation*}

\section{Problem Setup}
We first specify our prior knowledge of the RL problem. We want to solve an RL problem whose underlying MDP belongs to a family of MDPs 
\begin{equation*}
    \mathcal{M}= \{M_\theta = (\bS, \bA, P_\theta, r_\theta, H, \mu)\colon \theta \in \Theta\}
\end{equation*}
where $\bS$,  $\bA$, $H$ and $\mu$ are common state space, action space, length of each episode and initial distribution. The possible transition probability $P_\theta$ and reward function $r_\theta$ is indexed by $\theta$, and $\Theta$ is an index set. 
We do not know the exact value of $\theta$ but can access a generative simulator. In other words, for any step $h \in [H]$ and state-action pair $(s,a)$, we can observe a state $x \sim P_\theta(\,\cdot\,|\,h,s,a)$ and a noisy reward $y \sim \mathcal{N}(r_\theta(h,s,a),1)$, which is called one sample or one access to the generative simulator. So far we need to assume the noise of the reward is Gaussian to prove the lower bounds, but the noise can be relaxed to be sub-Gaussian in the upper bounds. Another popular form of the simulator is the so-called episodic simulator, through which one can only choose the initial state and a policy to observe the whole path and corresponding rewards. Our lower bound is still valid if we only have an episodic simulator but might be loose. How to obtain a tight lower bound in those settings is left to future work.

We assume $\theta=(\theta_P, \theta_r)$ and the index set is a Cartesian product
\begin{equation*}
    \Theta = \{(\theta_P, \theta_r) \colon \theta_P\in \Theta_P, \theta_r\in \Theta_r\}
\end{equation*}
where $\theta_P$ and $\theta_r$ are the actual indexes of the transition probability and reward function, \text{i.e.}, $P_\theta=P_{\theta_P}$, $r_\theta=r_{\theta_r}$.
We also assume
\begin{align*}
    \{r_{\theta_r}\colon \theta_r\in\Theta_r\} = \{r\colon  \|r(h,\cdot,\cdot)\|_\mathcal{B} \le 1, \forall h \in [H]\}
\end{align*}
where $\mathcal{B}$ is a Banach space such that $\mathcal{B}$ is a subset of $C(\bS \times \bA)$ and $\|\cdot\|_{C(\bS\times\bA)} \le B\|\cdot\|_{\mathcal{B}}$ with a positive constant $B$. 

Following the intuition introduced in Section~\ref{sec_intro}, we give the following definitions in preparation for the analysis.
\begin{defn}~
\begin{enumerate}[label={\upshape(\roman*)}, widest=iii]
\item 
For any set $\Pi$ consisting of probability distributions on $\bS\times\bA$, we define a semi-norm $\|\cdot\|_\Pi$ on $C(\bS\times\bA)$:
\begin{equation*}
    \|g\|_\Pi \coloneqq \sup_{\rho \in \Pi}|\int_{\bS\times\bA} g(s,a)\rmd\rho(s,a)|.
\end{equation*}
We call this semi-norm {\emph{$\Pi$-norm}}.

\item 
Given a Banach space, a positive constant $\epsilon > 0$ and a probability distribution $\nu \in \mathcal{P}(\bS\times\bA)$, we define $\mathcal{B}_{\epsilon,\nu}$, a {\emph{$\nu$-perturbation space with scale $\epsilon$}}, as follows:
\begin{equation*}
    \mathcal{B}_{\epsilon,\nu} \coloneqq \{g \in \mathcal{B}\colon \|g\|_\mathcal{B} \le 1, \|g\|_{L^2(\nu)} \le \epsilon\}.
\end{equation*}

\item
The \emph{perturbation response by distribution mismatch} is defined as the radius of $\mathcal{B}_{\epsilon,\nu}$ under $\Pi$-norm,
\begin{equation*}
    \mathcal{R}(\Pi,\mathcal{B},\epsilon,\nu) \coloneqq \sup_{g \in \mathcal{B}_{\epsilon,\nu}} \|g\|_\Pi.
\end{equation*}
\end{enumerate}
\end{defn}

{
\subsection{Properties of Perturbation Response by Distribution Mismatch}
We first state two propositions later used to give readers more understanding of the properties of perturbation response by distribution mismatch $\mathcal{R}(\Pi,\mathcal{B},\epsilon,\nu)$. The proofs of these two propositions are postponed to Section \ref{Sec_concentra}. 
The first proposition gives a more concrete formula of $\mathcal{R}(\Pi,\mathcal{B},\epsilon,\nu)$. Specifically, when $\mathcal{B}$ is an RKHS, $\mathcal{R}(\Pi,\mathcal{B},\epsilon,\nu)$ can be determined by a maximin problem related to MMD; see~\eqref{eq:kernel_dual_repeat}.
\begin{prop:repeat1}
We have
\begin{equation}
   \mathcal{R}(\Pi,\cB,\epsilon,\nu) = \sup_{\rho \in \Pi}\inf_{g \in L^2(\nu)}[\|\rho - g\circ \nu\|_{\mathcal{B}^{*}} + \epsilon\|g\|_{L^2(\nu)}],
\end{equation}
where $g\circ \nu$ is a signed measure such that 
\begin{equation*}
    \frac{\rmd g \circ \nu}{\rmd \nu} = g,
\end{equation*}
$\mathcal{B}^*$ is the dual space of $\mathcal{B}$
and $\|\rho\|_{\mathcal{B}^{*}}$ is the dual norm of linear functional 
\begin{equation*}
    \rho(g) \coloneqq \int_{\bS\times\bA}g(z)\rmd \rho(z),\,\forall g \in \mathcal{B},
\end{equation*}
for any signed measure $\rho$ on $\bS\times\bA$ (we slightly abuse the notation that $\rho$ are both the signed measure and linear functional in $\mathcal{B})$.
If $\mathcal{B}$ is an RKHS with kernel k, then
\begin{align}
    \mathcal{R}(\Pi,\cH_k,\epsilon,\nu) = \sup_{\rho \in \Pi}\inf_{g \in L^2(\nu)}
   [\mathrm{MMD}_k(\rho, g\circ \nu) + \epsilon\|g\|_{L^2(\nu)}].
\label{eq:kernel_dual_repeat}
\end{align} 

\end{prop:repeat1}

When $\mathcal{B}$ is an RKHS, the kernel's eigenvalues encode much information. The following proposition shows that the perturbation response of $\mathcal{P}(\bS\times\bA)$, the set of all probability distributions on $\bS\times\bA$, is closely related to the kernel's eigenvalues. Later in Section~\ref{Sec_concentra} we will discuss how this proposition gives us the implication in the efficiency of RL algorithms.
\begin{prop:repeat2}
Assume that 
\begin{equation*}
    \sup_{z \in \bS \times \bA} k(z,z) \le 1.
\end{equation*}
For any $\rho \in \mathcal{P}(\bS\times\bA)$, define
\begin{equation}
    n(\rho) =  \max\{ i \in \mathbb{N}^+: n \Lambda_i^{\rho} \ge 1\}.
\end{equation}
We have
\begin{equation}
   \mathcal{R}(\mathcal{P}(\bS\times\bA),\cH_k,n^{-\frac{1}{2}},\nu) \ge \frac{1}{2}\sqrt{\sup_{\rho \in \mathcal{P}(\bS\times \bA)}\sum_{i = n(\nu)+1}^{+\infty}\Lambda_i^{\rho}},
\end{equation}
and, by $n(\nu) \le n$,
\begin{equation}
    \inf_{\nu \in \mathcal{P}(\bS\times\bA)}\mathcal{R}(\mathcal{P}(\bS\times\bA),\cH_k,n^{-\frac{1}{2}},\nu) \ge \frac{1}{2}\sqrt{\sup_{\rho \in \mathcal{P}(\bS\times \bA)}\sum_{i = n+1}^{+\infty}\Lambda_i^{\rho}}.
\end{equation}
Moreover, if there exists a distribution $\hat{\nu} \in \mathcal{P}(\bS\times\bA)$ such that 
\begin{equation}
    \sup_{ i \in \mathbb{N}^+}\|\psi_i^{\hat{\nu}}\|_\infty < +\infty,
\end{equation}
then
\begin{equation}
    \mathcal{R}(\mathcal{P}(\bS\times\bA),\cH_k,n^{-\frac{1}{2}},\hat{\nu})\le2 \sqrt{\frac{n(\hat{\nu})}{n}+ \sum_{i=n(\hat{\nu})+1}^{_\infty}\Lambda_i^{\hat{\nu}}}\sup_{ i \in \mathbb{N}^+}\|\psi_i^{\hat{\nu}}\|_\infty .
\end{equation}
\end{prop:repeat2}
}

\subsection{General Algorithm}

Now we state in Algorithm~\ref{alg:general} the general RL algorithm for estimating the optimal value $J^*(M_{\theta})$ with $n$ samples. 
In Algorithm~\ref{alg:general}, the superscript $\theta$ indicates that the collected data depends on the underlying MDP $M_\theta$.
The superscript $\xi$ denotes the collection $\{f_1,\dots, f_n,F\}$, where $f_i$ are measurable mappings: $([H]\times\bS \times \bA \times \bS \times \bR)^{\otimes i-1}\times \bR \mapsto [H]\times\bS \times \bA $, $F$ is a measurable mapping: $([H]\times\bS \times \bA \times  \bS \times \bR)^{\otimes n}\times \bR \mapsto \bR$.
$\xi$ can be viewed as an RL algorithm, which adaptively chooses the step-state-action tuple $(h,s,a)$ at the $i$-th step based on all received data $\mathcal{D}_{i-1}^{\theta,\xi}$ according to function $f_i$ and receives a subsequent state and reward through the generative simulator. After collecting $n$ samples, the algorithm outputs an estimate of the optimal value based on all data according to function $F$. The randomness of the whole process in Algorithm~\ref{alg:general} is related to i.i.d. standard normal random variables $\{\epsilon_i\}_{1\le i \le n}$, $\{u_i\}_{1\le i\le n}$, and $\bar{u}$, which all live in a common probability space $(\Omega, \bP)$. $\epsilon_i$ denotes the noise in the observed reward. $u_i$ denotes the randomness of the transition, for which we assume that, by the isomorhism theorem \cite[Section~41]{halmos2013measure}, $p_\theta : [H]\times\bS \times \bA \times \mathbb{R} \mapsto \bS$ is a measurable function satisfying
$p_\theta(h,s,a,u_i) \sim P_\theta(\,\cdot\,|\,h,s,a)$ for any $\theta \in \Theta$ and $h \in [H]$.
Again by the isomorphism theorem, we use $\bar{u}$ to denote all the randomness of the algorithm $\xi$ itself besides the randomness within the simulator.
Mathematically, Algorithm~\ref{alg:general} can also be summarized as follows:
\begin{align}\label{Sampling_path}
\begin{cases}
    \mathcal{D}_0^{\theta,\xi} = \emptyset,\;
    \mathcal{D}_i^{\theta,\xi} = \mathcal{D}_{i-1}^{\theta,\xi} \cup \{(h_i^{\theta,\xi},s_i^{\theta,\xi},a_i^{\theta,\xi},x_i^{\theta,\xi},y_i^{\theta,\xi})\}, \, 1\le i \le n,\;
    J_n^{\theta,\xi} = F(\mathcal{D}_n^{\theta,\xi}, \bar{u}),\\
    (h_i^{\theta,\xi},s_i^{\theta,\xi},a_i^{\theta,\xi}) = f_i(\mathcal{D}_{i-1}^{\theta,\xi},\bar{u}),\;
    x_i^{\theta,\xi} = p_\theta(h_i^{\theta,\xi},s_i^{\theta,\xi},a_i^{\theta,\xi},u_i),\;
    y_i^{\theta,\xi} = r_\theta(h_i^{\theta,\xi},s_i^{\theta,\xi},a_i^{\theta,\xi})+\epsilon_i.
\end{cases}
\end{align}

\begin{algorithm}[ht]
\caption{General Reinforcement Learning Algorithm for Estimating the Optimal Value}
{
\KwIn{Number of samples $n$}
}
{
\textbf{Initialize:} $\mathcal{D}_0^{\theta,\xi} = \emptyset$. \\ 
}
\For{$i = 1,\dots,n$}{
Obtain $i$-th step-state-action tuple through $(h_i^{\theta,\xi},s_i^{\theta,\xi},a_i^{\theta,\xi}) = f_i(\mathcal{D}_{i-1}^{\theta,\xi},\bar{u})$\\
Collect the subsequent state $x_i^{\theta,\xi} = p_\theta(h_i^{\theta,\xi},s_i^{\theta,\xi},a_i^{\theta,\xi},u_i)$ and the
 noisy reward $y_i^{\theta,\xi} = r_\theta(h_i^{\theta,\xi},s_i^{\theta,\xi},a_i^{\theta,\xi})+\epsilon_i$ from the simulator\\
Set $\mathcal{D}_i^{\theta,\xi} = \mathcal{D}_{i-1}^{\theta,\xi} \cup \{(h_i^{\theta,\xi},s_i^{\theta,\xi},a_i^{\theta,\xi},x_i^{\theta,\xi},y_i^{\theta,\xi})\}$
 }

\KwOut{$J_n^{\theta,\xi} = F(\mathcal{D}_n^{\theta,\xi}, \bar{u})$ as an estimate of the optimal value $J^*(M_{\theta})$
}
\label{alg:general}
\end{algorithm}

We use $\Xi_n$ to denote the set of all possible choices of $\xi$. So $\Xi_n$ is the set of all possible RL algorithms which only access the generative simulator $n$ times.
Our goal is to find the best $\xi$, or the best RL algorithm, to minimize the worst-case error of the optimal total reward given $n$ opportunities to access the simulator:
\begin{equation*}
\label{eq:worst-case-error}
    \inf_{\xi \in \Xi_n}\sup_{\theta \in \Theta}\EE|J_n^{\theta,\xi} - J^*(M_\theta)|.
\end{equation*}
In Sections~\ref{sec:lb} and~\ref{sec:ub} below, we give lower and upper bounds for the worst-case error, respectively. In both sections, we first consider the special case where the transition probability is known and then generalize our results to the case where the transition probability is unknown.
In practice, it is often of interest to estimate the optimal policy as well.
In the upper bound part, we also provide algorithms to obtain the optimal policy that gives the estimated optimal total reward. Nevertheless, in the lower bound part, we abstractly estimate the optimal total reward without estimating the optimal policy.
Note that we can always use the Monte-Carlo method to estimate the optimal total reward accurately given an optimal or near-optimal policy. So our lower bound result still serves as a valid difficulty measure of the RL problem aiming at finding the optimal policy.

\section{Lower Bound}
\label{sec:lb}
\subsection{The Case of Known Transition}
We first consider the case that the transition probability $P_\theta$ is known, assuming $\Theta_P = \{0\}$ is a single-point set. In this case, we have the following definition of the perturbational complexity by distribution mismatch.

\begin{defn}
\label{def:delta1}
The \emph{perturbational complexity by distribution mismatch} in the case of known transition is
\begin{equation}
\label{eq:definition_delta1}
  \Delta_\mathcal{M}(\epsilon) \coloneqq \inf_{\nu \in \mathcal{P}(\bS\times\bA)} \mathcal{R}(\Pi(P_0,\mu),\mathcal{B},\epsilon,\nu).
\end{equation}
\end{defn}

The following theorem shows that this quantity can give a lower bound of the worst-case error. 

\begin{thm}\label{thm: Known Transition}
Assume $\Theta_P = \{0\}$, i.e., there is only one possible transition probability,
then
\begin{equation*}
   \inf_{\xi \in \Xi_n}\sup_{\theta \in \Theta} \bP(|J_n^{\theta,\xi} - J^*(M_\theta)| \ge \frac{1}{3}\Delta_\mathcal{M}(n^{-\frac{1}{2}}))\ge \frac{1}{4}.
\end{equation*}
Therefore,
\begin{equation*}
    \inf_{\xi \in \Xi_n}\sup_{\theta \in \Theta} \EE|J_n^{\theta,\xi} - J^*(M_\theta)| \ge \frac{1}{12}\Delta_\mathcal{M}(n^{-\frac{1}{2}}).
\end{equation*}
\end{thm}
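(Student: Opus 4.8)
The plan is to prove the high-probability statement by a two-point (Le Cam) argument and then deduce the expectation bound by a one-line application of Markov's inequality. Since the transition is known ($\Theta_P=\{0\}$, with $P_0$ shared by every instance), the only unknown is the reward, and the generative simulator at a query $(h,s,a)$ returns a subsequent state whose law does not depend on $\theta$ together with a reward corrupted by unit Gaussian noise. This reduces the problem to distinguishing two reward functions from $n$ noisy point evaluations under identical state dynamics. I would fix an arbitrary algorithm $\xi\in\Xi_n$ and exhibit two MDPs in $\mathcal{M}$: a null instance $M_0$ with $r\equiv 0$ (so $J^*(M_0)=0$) and an alternative $M_g$ carrying a small perturbation $g$, which are statistically close under $\xi$ yet have optimal values separated by roughly $\Delta_\mathcal{M}(n^{-1/2})$.

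To choose $g$, I would run $\xi$ on the null instance $M_0$ and let $\bar\nu_\xi=\frac1n\sum_{i=1}^n\mathcal{L}\big((s_i^{\theta,\xi},a_i^{\theta,\xi})\text{ under }M_0\big)\in\mathcal{P}(\bS\times\bA)$ be the query marginal averaged over the $n$ steps. Because $\Delta_\mathcal{M}(\epsilon)=\inf_{\nu}\mathcal{R}(\Pi(P_0,\mu),\mathcal{B},\epsilon,\nu)$ is an \emph{infimum} over $\nu$, we automatically have $\mathcal{R}(\Pi(P_0,\mu),\mathcal{B},n^{-1/2},\bar\nu_\xi)\ge\Delta_\mathcal{M}(n^{-1/2})$; this is the crucial device that bypasses the adaptivity of the queries, since the bound holds no matter which $\bar\nu_\xi$ the algorithm induces. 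Hence there is $g\in\mathcal{B}_{n^{-1/2},\bar\nu_\xi}$ with $\|g\|_\mathcal{B}\le 1$, $\|g\|_{L^2(\bar\nu_\xi)}\le n^{-1/2}$, and $\|g\|_{\Pi(P_0,\mu)}\ge\frac23\Delta_\mathcal{M}(n^{-1/2})$, where the gap $\frac13\Delta_\mathcal{M}(n^{-1/2})$ to the radius is the near-optimizer slack. After a sign flip of $g$ (which preserves all three bounds), there is a step $h^*$ and a policy attaining $\int g\,\mathrm{d}\rho\ge\frac23\Delta_\mathcal{M}(n^{-1/2})$ for some $\rho\in\Pi(h^*,P_0,\mu)$. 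I would then let $M_g$ carry reward $g$ at step $h^*$ and zero reward at all other steps; each per-step reward lies in the unit ball of $\mathcal{B}$, so $M_g\in\mathcal{M}$, and $J^*(M_g)\ge\int g\,\mathrm{d}\rho\ge\frac23\Delta_\mathcal{M}(n^{-1/2})$.

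The information-theoretic step is to bound $\mathrm{KL}(\bP_{M_0,\xi}\,\|\,\bP_{M_g,\xi})$ on the generated data. By the chain rule for the sequential law in \eqref{Sampling_path}, the conditionals for the query selection $f_i$ and for the subsequent state $x_i$ coincide under the two instances (the transition $P_0$ is shared), so only the reward channel contributes, giving $\mathrm{KL}(\bP_{M_0,\xi}\,\|\,\bP_{M_g,\xi})=\sum_{i=1}^n\EE_{M_0}\big[\tfrac12\,g(s_i,a_i)^2\,\mathbf{1}[h_i=h^*]\big]\le\tfrac12 n\|g\|_{L^2(\bar\nu_\xi)}^2\le\tfrac12$. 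Pinsker's inequality then yields $\|\bP_{M_0,\xi}-\bP_{M_g,\xi}\|_{TV}\le\tfrac12$, and the standard two-point testing bound gives $\max_{\theta\in\{0,g\}}\bP_{\theta,\xi}\big(|J_n^{\theta,\xi}-J^*(M_\theta)|\ge s\big)\ge\tfrac12(1-\tfrac12)=\tfrac14$ at separation $s=\tfrac12|J^*(M_g)-J^*(M_0)|\ge\tfrac13\Delta_\mathcal{M}(n^{-1/2})$, the factor $\tfrac13$ absorbing exactly the slack chosen above. Since $\xi$ was arbitrary, taking the infimum over $\Xi_n$ proves the first inequality, and the second follows from $\EE|J_n^{\theta,\xi}-J^*(M_\theta)|\ge\tfrac13\Delta_\mathcal{M}(n^{-1/2})\,\bP(|J_n^{\theta,\xi}-J^*(M_\theta)|\ge\tfrac13\Delta_\mathcal{M}(n^{-1/2}))\ge\tfrac1{12}\Delta_\mathcal{M}(n^{-1/2})$. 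The main obstacle is making the reduction to a single fixed $\nu$ rigorous despite adaptive, data-dependent queries: the key is that defining $\bar\nu_\xi$ as the null-run query marginal and invoking the infimum in the definition of $\Delta_\mathcal{M}$ converts the random query distribution into one fixed distribution for which the worst-case perturbation is still at least $\Delta_\mathcal{M}$, while the shared transition guarantees that only the reward observations enter the KL divergence.
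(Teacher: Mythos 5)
Your proposal is correct and follows essentially the same route as the paper's proof: a two-point Le Cam argument in which the perturbation $g$ is chosen against the algorithm's query marginal under the null (zero-reward) instance, the infimum over $\nu$ in the definition of $\Delta_\mathcal{M}$ guarantees $\mathcal{R}(\Pi(P_0,\mu),\mathcal{B},n^{-1/2},\bar\nu_\xi)\ge\Delta_\mathcal{M}(n^{-1/2})$, the KL divergence reduces to the Gaussian reward channel because the transition is shared, and Pinsker plus the standard testing bound yield the $1/4$ probability and the Markov-inequality expectation bound.
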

\begin{proof}
With fixed $g \in \mathcal{B}$ such that $\|g\|_\mathcal{B} \le 1$, $\xi \in \Xi_n$ and $h^* \in [H]$, we first estimate
\begin{equation*}
    \|\mathcal{L}(J_n^{\theta_1,\xi}) - \mathcal{L}(J_n^{\theta_2,\xi})\|_{TV},
\end{equation*}
where $\theta_1, \theta_2 \in \Theta$ satisfying
\begin{equation*}
    P_{\theta_1} = P_{\theta_2} = P_0,\; r_{\theta_1} = 0, \; r_{\theta_2}(h,s,a) = \begin{cases}0, \text{ when }h \neq h^*\\
    g(s,a), \text{ when } h = h^*
    \end{cases}
\end{equation*}

By the definition \eqref{Sampling_path} and Pinsker's inequality (see e.g. \cite{pinsker1964information,csiszar1967information}), we have
\begin{equation*}%
     \|\mathcal{L}(J_n^{\theta_1,\xi}) - \mathcal{L}(J_n^{\theta_2,\xi})\|_{TV}^2 \le \|\mathcal{L}(\mathcal{D}_n^{\theta_1,\xi},\bar{u})-\mathcal{L}(\mathcal{D}_n^{\theta_2,\xi},\bar{u})\|_{TV}^2 \le \frac{1}{2} \mathrm{KL}(\mathcal{L}(\mathcal{D}_n^{\theta_1,\xi},\bar{u})||\mathcal{L}(\mathcal{D}_n^{\theta_2,\xi},\bar{u})).
\end{equation*}

Calculation gives that
\begin{align*}
    &\mathrm{KL}(\mathcal{L}(\mathcal{D}_n^{\theta_1,\xi},\bar{u})||\mathcal{L}(\mathcal{D}_n^{\theta_2,\xi},\bar{u})) \notag \\
    =~& \EE\log(\prod_{i=1}^n\exp{(-\frac{(y_i^{\theta_1,\xi})^2}{2}+\frac{(y_i^{\theta_1,\xi}-r_{\theta_2}(h_i^{\theta_1,\xi},s_i^{\theta_1,\xi},a_i^{\theta_1,\xi})^2}{2})})\notag\\ 
    =~& \frac{1}{2}\sum_{i=1}^n\EE[|g(s_i^{\theta_1,\xi},a_i^{\theta_1,\xi})|^2 - 2g(s_i^{\theta_1,\xi},a_i^{\theta_1,\xi})\epsilon_i]\mathrm{1}_{h_i^{\theta_1,\xi} = h^*} \notag\\
    \le~& \frac{1}{2}\sum_{i=1}^n\EE[|g(s_i^{\theta_1,\xi},a_i^{\theta_1,\xi})|^2].
\end{align*}
Combining the last two inequalities, we know that
\begin{equation*}
     \|\mathcal{L}(J_n^{\theta_1,\xi}) - \mathcal{L}(J_n^{\theta_2,\xi})\|_{TV}^2 \le \frac{1}{4}\sum_{i=1}^n\EE[ g(s_i^{\theta_1,\xi},a_i^{\theta_1,\xi})]^2 = \frac{n}{4}\int_{\bS \times \bA} |g(s,a)|^2\rmd \nu^{\xi}(s,a),
\end{equation*}
where
\begin{equation*}
    \nu^{\xi} = \frac{1}{n}\sum_{i=1}^n\mathcal{L}(s_i^{\theta_1,\xi},a_i^{\theta_1,\xi}).
\end{equation*}
By definition, we have
\begin{equation*}
    \Delta_\mathcal{M}(n^{-\frac{1}{2}}) \le\sup_{h \in [H]} \sup_{\|g\|_\mathcal{B}\le 1,\sqrt{n}\|g\|_{L^2(\nu^\xi)} \le 1}\sup_{\rho \in \Pi(h,P_0,\mu)}\int_{\bS\times\bA}g(s,a)\rmd\rho(s,a).
\end{equation*}
So we can choose $g \in \mathcal{B}$ with $\|g\|_\mathcal{B} \le 1$ and $h^* \in [H]$ to define a reward function $r_{\theta_2}$ such that
\begin{align*}
    &\int_{\bS \times \bA} |g(s,a)|^2\rmd \nu^{\xi}(s,a) \le \frac{1}{n}, \\
    &J^*(M_{\theta_2}) \ge \frac{2}{3}\Delta_\mathcal{M}(n^{-\frac{1}{2}}),
\end{align*}
which means that
\begin{align*}
     &\|\mathcal{L}(J_n^{\theta_1,\xi}) - \mathcal{L}(J_n^{\theta_2,\xi})\|_{TV}\le \frac{1}{2}, \\
     &\{x \in \bR \colon |x- J^*(M_{\theta_2})|< \frac{1}{3}\Delta_\mathcal{M}(n^{-\frac{1}{2}})\}\cap \{x \in \bR \colon |x| < \frac{1}{3}\Delta_\mathcal{M}(n^{-\frac{1}{2}})\} = \emptyset.
\end{align*}
Noticing that $J^*(M_{\theta_1})=0$, we can use the second condition above to have
\begin{align*}
    &\bP(|J_n^{\theta_1,\xi} - J^*(M_{\theta_1})|\ge\frac{1}{3}\Delta_\mathcal{M}(n^{-\frac{1}{2}}) \\
    =~&\bP(|J_n^{\theta_1,\xi}|\ge \frac{1}{3}\Delta_\mathcal{M}(n^{-\frac{1}{2}}) \\
    \ge~&\bP(|J_n^{\theta_1,\xi} - J^*(M_{\theta_2})| < \frac{1}{3}\Delta_\mathcal{M}(n^{-\frac{1}{2}})).
\end{align*}
Therefore, 
\begin{align*}
    \frac{1}{2} &\ge \bP(|J_n^{\theta_2,\xi} - J^*(M_{\theta_2})| < \frac{1}{3}\Delta_\mathcal{M}(n^{-\frac{1}{2}})) -\bP(|J_n^{\theta_1,\xi} - J^*(M_{\theta_2})| < \frac{1}{3}\Delta_\mathcal{M}(n^{-\frac{1}{2}})) \notag\\
    &\ge 1-  \bP(|J_n^{\theta_2,\xi} - J^*(M_{\theta_2})| \ge \frac{1}{3}\Delta_\mathcal{M}(n^{-\frac{1}{2}})) - \bP(|J_n^{\theta_1,\xi} - J^*(M_{\theta_1})|\ge \frac{1}{3}\Delta_\mathcal{M}(n^{-\frac{1}{2}})).
\end{align*}
Rearranging the above inequality, we can then conclude that for any $\xi \in \Xi_n$,
\begin{equation*}
    \sup_{\theta \in \Theta}\bP(|J_n^{\theta,\xi} - J^*(M_\theta)| \ge \frac{1}{3}\Delta_\mathcal{M}(n^{-\frac{1}{2}})) \ge \frac{1}{4}.
\end{equation*}
\end{proof}

\subsection{The Case of Unknown Transition}
In this section, we deal with the general case when there are multiple possible transition probabilities. 
Following the idea of Theorem \ref{thm: Known Transition}, 
\begin{equation*}
    \sup_{\theta \in \Theta}\Delta_{\mathcal{M}_\theta}(n^{-\frac{1}{2}})
\end{equation*}
can provide a lower bound of the worst-case error, where $\mathcal{M}_\theta$ is the subset of $\mathcal{M}$ whose transition probability is $P_\theta$.
However, we can have an even better lower bound. The critical observation is that we do not know the exact value of $P_\theta$, but can only sample from $P_\theta$ with finite observed data. 
So the optimal distribution $\nu$ for estimation corresponding to $\Delta_{\mathcal{M}_\theta}(n^{-\frac{1}{2}})$ is generally inaccessible, and we can leverage this fact to better characterize the perturbational complexity to improve the lower bound.
To this end, we assume a general sampling algorithm as follows to characterize the set of distribution $\nu$:
\begin{align}
\begin{cases}\label{def_sample_path_2}
    \mathcal{D}_0^{\theta,\bar{\xi}} = \emptyset,\; \mathcal{D}_i^{\theta,\bar{\xi}} = \mathcal{D}_{i-1}^{\theta,\bar{\xi}} \cup \{(h_i^{\theta,\bar{\xi}},s_i^{\theta,\bar{\xi}},a_i^{\theta,\bar{\xi}},x_i^{\theta,\bar{\xi}})\}, \, 1\le i \le n,   \\
    (h_i^{\theta,\bar{\xi}},s_i^{\theta,\bar{\xi}},a_i^{\theta,\bar{\xi}}) = \bar{f}_i(\mathcal{D}_{i-1}^{\theta,\bar{\xi}},\bar{u}),\;
    x_i^{\theta,\bar{\xi}}= p_\theta(h_i^{\theta,\bar{\xi}},s_i^{\theta,\bar{\xi}},a_i^{\theta,\bar{\xi}},u_i), \\
    \displaystyle{\nu^{\theta,\bar{\xi}} = \frac{1}{n}\sum_{i=1}^n \mathcal{L}(s_i^{\theta,\bar{\xi}}, a_i^{\theta,\bar{\xi}})}.
\end{cases}
\end{align}
Here $\bar{\xi} = (\bar{f}_1,\dots,\bar{f}_n)$ and $\bar{f}_i$ are measurable mappings $([H]\times\bS\times\bA \times \bS)^{\otimes (i-1)} \times\bR \mapsto [H]\times \bS \times \bA$.
Equation \eqref{def_sample_path_2} shares a similar spirit with \eqref{Sampling_path}, but only focuses on sampling the next-step states from the transition probability. Similar to $\xi$ in \eqref{Sampling_path}, $\bar{\xi}$ can be viewed as a sampling algorithm which adaptively chooses $n$ step-state-action tuples $(h,s,a)$ and obtains samples from $P_\theta(\,\cdot\,|\,h,s,a)$. 
We let $\bar{\Xi}_n$ denote the set of all possible choices of $\bar{\xi}$.
Now we define the perturbational complexity by distribution mismatch in the case of unknown transition.

\begin{defn}
\label{def:delta2}
The \emph{perturbational complexity by distribution mismatch} in the case of unknown transition is
\begin{equation}
\label{eq:definition_delta2}
 \Delta_{\mathcal{M}}(\epsilon) \coloneqq\inf_{\bar{\xi}\in \bar{\Xi}_{[1/\epsilon^2]}}\sup_{\theta \in \Theta}\mathcal{R}(\Pi(P_\theta,\mu),\mathcal{B},\epsilon,\nu^{\theta,\bar{\xi}}).
\end{equation}
\end{defn}
Here we choose the number of samples to be $[1/\epsilon^2]$ so that the result is consistent with the case of known transition. The following theorem shows that perturbational complexity gives a lower bound of the worst-case error in the case of unknown transition.

\begin{thm}\label{thm: unknown_transition}
We have
\begin{equation*}
    \inf_{\xi \in \Xi_n}\sup_{\theta \in \Theta} \bP(|J_n^{\theta,\xi} - J^*(M_\theta)| \ge \frac{1}{3}\Delta_\mathcal{M}(n^{-\frac{1}{2}}))\ge \frac{1}{4}.
\end{equation*}
Therefore,
\begin{equation*}
    \inf_{\xi \in \Xi_n}\sup_{\theta \in \Theta} \EE|J_n^{\theta,\xi} - J^*(M_\theta)| \ge \frac{1}{12}\Delta_\mathcal{M}(n^{-\frac{1}{2}}).
\end{equation*}
\end{thm}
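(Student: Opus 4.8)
The plan is to mirror the two-point hypothesis-testing reduction of Theorem~\ref{thm: Known Transition}, but to first extract from each estimation algorithm $\xi \in \Xi_n$ an associated pure sampling algorithm $\bar\xi \in \bar\Xi_n$ so that the empirical state-action distribution produced in the reduction coincides with the $\nu^{\theta,\bar\xi}$ appearing in Definition~\ref{def:delta2}. Concretely, I would fix $\xi$ and consider running it on the reference MDP with transition $P_{\theta_P}$ and reward identically zero; then every observed reward $y_i$ is a pure standard Gaussian noise $\epsilon_i$, independent of the transition samples. Using the isomorphism theorem as in \eqref{Sampling_path}, I would define the selection maps $\bar f_i$ of \eqref{def_sample_path_2} by drawing fresh auxiliary Gaussian variables out of the internal randomness $\bar u$ and feeding them into $f_i$ in place of the (unobserved) rewards. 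This yields a legitimate $\bar\xi = (\bar f_1,\dots,\bar f_n) \in \bar\Xi_n$ for which the law of each selected pair $(s_i,a_i)$ under $P_{\theta_P}$ matches that of $\xi$ run on the zero-reward reference; hence $\nu^{\theta_P,\bar\xi} = \frac1n\sum_{i=1}^n \mathcal{L}(s_i,a_i)$ equals the empirical distribution $\nu^\xi$ of that reference run.

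With $\epsilon = n^{-1/2}$ (so $[1/\epsilon^2]=n$), the definition \eqref{eq:definition_delta2} gives $\Delta_\mathcal{M}(n^{-1/2}) \le \sup_{\theta}\mathcal{R}(\Pi(P_\theta,\mu),\mathcal{B},n^{-1/2},\nu^{\theta,\bar\xi})$ for this particular $\bar\xi$. Since both $\nu^{\theta,\bar\xi}$ and $\Pi(P_\theta,\mu)$ depend on $\theta$ only through $\theta_P$, I would pick a near-maximizing transition index $\theta_P^*$ and then, unwinding the definition of $\mathcal{R}$, choose $g$ with $\|g\|_\mathcal{B}\le 1$ and $\|g\|_{L^2(\nu^{\theta_P^*,\bar\xi})}\le n^{-1/2}$ together with a step $h^*$ and a policy $\pi$ such that $\int g\,\rmd\rho_{h^*,P_{\theta_P^*},\pi,\mu} \ge \frac{2}{3}\Delta_\mathcal{M}(n^{-1/2})$ (replacing $g$ by $-g$ if necessary; the case $\Delta_\mathcal{M}(n^{-1/2})=0$ is trivial). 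This reproduces exactly the two data points needed: $\theta_1 = (\theta_P^*,0)$ and $\theta_2 = (\theta_P^*, r_{\theta_2})$, where $r_{\theta_2}$ equals $g$ at step $h^*$ and vanishes elsewhere, so that $J^*(M_{\theta_1})=0$ while $J^*(M_{\theta_2})\ge \frac23\Delta_\mathcal{M}(n^{-1/2})$.

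Because $\theta_1$ and $\theta_2$ share the same transition $P_{\theta_P^*}$, the Kullback--Leibler computation is identical to that of Theorem~\ref{thm: Known Transition}: the transition samples are equidistributed and only the Gaussian reward channel at step $h^*$ differs, so Pinsker's inequality yields $\|\mathcal{L}(J_n^{\theta_1,\xi})-\mathcal{L}(J_n^{\theta_2,\xi})\|_{TV}^2 \le \frac{n}{4}\int |g|^2\,\rmd\nu^\xi \le \frac14$ by our choice of $g$ and the crucial identity $\nu^\xi = \nu^{\theta_P^*,\bar\xi}$. With the total-variation distance at most $\frac12$ and the intervals of radius $\frac13\Delta_\mathcal{M}(n^{-1/2})$ around $0$ and $J^*(M_{\theta_2})$ disjoint, the same testing argument gives $\sup_{\theta}\bP(|J_n^{\theta,\xi}-J^*(M_\theta)|\ge \frac13\Delta_\mathcal{M}(n^{-1/2})) \ge \frac14$, and the expectation bound follows by Markov's inequality.

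The main obstacle, and the only genuinely new ingredient beyond Theorem~\ref{thm: Known Transition}, is the reduction in the first paragraph: one must verify that discarding the reward observations and re-injecting fresh noise produces a bona fide element of $\bar\Xi_n$ whose induced $\nu^{\theta_P^*,\bar\xi}$ exactly matches the empirical distribution $\nu^\xi$ arising in the KL bound, so that the infimum over $\bar\xi$ in \eqref{eq:definition_delta2} can be legitimately invoked. This measurability bookkeeping is precisely what lets the unknown-transition definition of $\Delta_\mathcal{M}$, which anticipates the inaccessibility of the optimal estimation distribution, control the worst-case error; everything downstream is a verbatim repetition of the known-transition proof.
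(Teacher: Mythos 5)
Your proposal is correct and follows essentially the same route as the paper: run $\xi$ on the zero-reward version of each MDP, use the isomorphism theorem to re-synthesize the Gaussian reward noise from the algorithm's internal randomness $\bar u$ so as to obtain a legitimate $\bar\xi\in\bar\Xi_n$ with $\nu^{\theta,\bar\xi}=\nu^{\theta,\xi}$, conclude $\sup_{\theta}\mathcal{R}(\Pi(P_\theta,\mu),\mathcal{B},n^{-1/2},\nu^{\theta,\xi})\ge\Delta_{\mathcal{M}}(n^{-1/2})$ from the infimum in Definition~\ref{def:delta2}, and then repeat the two-point testing argument of Theorem~\ref{thm: Known Transition} with a shared transition $P_{\theta_P^*}$. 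The only cosmetic difference is the order of presentation (you fix $\bar\xi$ and a near-maximizing $\theta_P^*$ first, the paper states the testing bound with $\sup_\theta\mathcal{R}$ first and lower-bounds it afterward), which changes nothing of substance.
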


\begin{proof}
Following the proof of Theorem \ref{thm: Known Transition}, we know that for any $\xi \in \Xi_n$,
\begin{equation*}
    \sup_{\theta \in \Theta} \bP\left(|J_n^{\theta,\xi} - J^*(M_\theta)| \ge \frac{1}{3}\sup_{\theta\in \Theta}\mathcal{R}(\Pi(P_\theta,\mu),\mathcal{B},n^{-\frac{1}{2}},\nu^{\theta,\xi})\right)\ge \frac{1}{4}.
\end{equation*}
Here
\begin{equation*}
    \nu^{\theta,\xi} = \frac{1}{n}\sum_{i=1}^n\mathcal{L}(s_i^{\theta_0(\theta),\xi},a_i^{\theta_0(\theta),\xi}),
\end{equation*}
and $\{s_i^{\theta_0(\theta),\xi},a_i^{\theta_0(\theta),\xi}\}_{1\le i\le n}$ is generated by the sampling path \eqref{Sampling_path} with $P_{\theta_0(\theta)} = P_\theta$ and $r_{\theta_0(\theta)} = 0$. Hence, $y_i^{\theta_0(\theta),\xi} = \epsilon_i$ for any $1\le i \le n$. Using the isomorphism theorem \cite[Section~41]{halmos2013measure}, we can find the measurable mappings $T_{i}: \bR \mapsto \bR$ for $1\le  i \le n +1$ such that
\begin{equation*}
   (T_1(\bar{u}),\dots,T_{n}(\bar{u}),t_{n+1}(\bar{u})) \text{ has the same distribution with }(\epsilon_1,\dots,\epsilon_n,\bar{u}).
\end{equation*}
Therefore, for any $\xi \in \Xi_n$, there exists $\bar{\xi} \in \bar{\Xi}_n$ such that for all $\theta \in \Theta$,
\begin{equation*}
    \nu^{\theta,\bar{\xi}} = \nu^{\theta,\xi}.
\end{equation*}
Therefore,
\begin{equation*}
    \sup_{\theta\in \Theta}\mathcal{R}(\Pi(P_\theta,\mu),\mathcal{B},n^{-\frac{1}{2}},\nu^{\theta,\xi}) \ge \Delta_\mathcal{M}(n^{-\frac{1}{2}}),
\end{equation*}
which concludes our proof.
\end{proof}

\section{Upper Bound}
\label{sec:ub}
In this section, we discuss how to use $\Delta_\mathcal{M}(\epsilon)$ in Definitions~\ref{def:delta1}, \ref{def:delta2} to design sample-efficient RL algorithms.
We will use $C$ to denote a universal positive constant, which may vary from line to line.
As motivated in the introduction, one important reason for us to consider the perturbation response is to study those high-dimensional spaces in which $L^\infty$ estimation can not be obtained efficiently through finite samples. Many common RKHSs are such examples. So in this section we focus on the case that $\mathcal{B}$ is an RKHS with kernel $k$. We remark that most of our results still hold for general Banach spaces whenever an $L^2$ estimation can be obtained efficiently through finite samples, such as linear space and Barron space \cite{ma2019barron}.
\subsection{The Case of Known Transition}\label{sec:upper_known}
Again we first consider the case where the transition probability is known, i.e., $\Theta_P = \{0\}$. 
We let
\begin{equation}
    \hat{\nu} = \argmin_{\nu \in \mathcal{P}(\bS\times\bA)}\mathcal{R}(\Pi(P_0,\mu),\mathcal{H}_k,n^{-\frac{1}{2}},\nu).
\end{equation}
Here $\hat{\nu}$ is an intrinsic property of the MDP family $\mathcal{M}$ without any dependence of sampling. In practice, the minimizer $\hat{\nu}$ is probably hard to obtain but similar argument holds if we can obtain a probability distribution $\hat{\nu}'$ such that
\begin{equation*}
   \mathcal{R}(\Pi(P_0,\mu),\mathcal{H}_k,n^{-\frac{1}{2}},\hat{\nu}')
\end{equation*}
is small. Once we know the distribution $\hat{\nu}$, we can sample $n^2$ i.i.d. samples $z_1,\dots,z_{n^2}$ from $\hat{\nu}$ and perform the following fitted reward algorithm to estimate the optimal policy. Notice that, in order to solve~\eqref{optimization_problem1} below, we only need to solve a convex optimization problem with the same objective function but in the finite-dimensional set
\begin{equation*}
    \{\|r\|_k \le 1 \colon  r = \sum_{i=1}^{n^2} c_ik(\cdot,z_i) \}.
\end{equation*}
The reason is that for any $r \in \mathcal{H}_k$, we can always find coefficients $\{c_{i}\}_{1\le i \le n^2}$ to construct a function $\sum_{i=1}^{n^2}c_{i}k(\cdot,z_i)$ such that
\begin{align*}
    r(z_j) = \sum_{i=1}^{n^2}c_{i}k(z_j,z_i),  {1\le j \le n^2},\, \text{ and  }
    \|r\|_k \ge \|\sum_{i=1}^{n^2}c_{i}k(\cdot,z_i)\|_k.
\end{align*}
See, e.g.,~\cite[Proposition~4.2]{paulsen2016introduction}.

\begin{algorithm}[ht]
\caption{Fitted Reward Algorithm}
{
\KwIn{$n^2$ i.i.d. samples $z_1,\dots,z_{n^2}$ from distribution $\hat{\nu}$}
}

\For{$h = 1,2,\dots,H$}{

Sample $y_1^{\theta,h},\dots,y_{n^2}^{\theta,h}$ from $\mathcal{N}(r_\theta(h,z_1),1),\dots,\mathcal{N}(r_\theta(h,z_{n^2}),1)$, respectively\\
Compute $\hat{r}_\theta(h,\cdot)$ as the minimizer of the optimization problem
\begin{equation}
\label{optimization_problem1}
    \min_{\|r\|_k \le 1} \sum_{i=1}^{n^2}[r(z_i) - y_i^{\theta,h}]^2
\end{equation}
}
Collect the fitted reward function to form the MDP $(\bS,\bA,H, P_0,\hat{r}_\theta,\mu)$, of which both reward function and transition are known. Denote it as $\hat{M}_{\theta}$.

\KwOut{
$\hat{\pi}_\theta$ as the optimal policy of $\hat{M}_{\theta}$.
}
\label{alg:FittedReward}
\end{algorithm}

The algorithm described above based on the fitted reward is summarized in Algorithm~\ref{alg:FittedReward}, and we have the following convergence result.
\begin{thm}\label{upper_bound_known}
Assume
$
   \Theta_P = \{0\},
$ i.e., there is only one possible transition probability, and 
\begin{equation*}
\sup_{z \in \bS\times \bA} k(z,z) \le 1.    
\end{equation*}
For any $\theta \in \Theta$ and $p \in (0,1)$, with probability at least $1-p$, we have
\begin{equation*}
    |J(M_\theta,\hat{\pi}_\theta) - J^*(M_\theta)|\le CH\Delta_\mathcal{M}(n^{-\frac{1}{2}})\sqrt{1+\log(\frac{nH}{p})}.
\end{equation*}
\end{thm}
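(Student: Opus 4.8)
The plan is to reduce the policy–suboptimality $|J(M_\theta,\hat\pi_\theta)-J^*(M_\theta)|$ to the $\Pi(P_0,\mu)$-norm of the reward estimation error, and then to recognize that error as (twice) an element of the perturbation ball $(\mathcal{H}_k)_{\epsilon,\hat\nu}$, so that its $\Pi$-norm is controlled by $\mathcal{R}(\Pi(P_0,\mu),\mathcal{H}_k,\epsilon,\hat\nu)$ and hence by $\Delta_\mathcal{M}$. First I would set up the standard performance-difference decomposition. Writing $\pi^*$ for an optimal policy of $M_\theta$ and using that $\hat\pi_\theta$ is optimal for $\hat M_\theta$ (same transition $P_0$, reward $\hat r_\theta$), I split
\[
J(M_\theta,\pi^*)-J(M_\theta,\hat\pi_\theta)=\big[J(M_\theta,\pi^*)-J(\hat M_\theta,\pi^*)\big]+\big[J(\hat M_\theta,\pi^*)-J(\hat M_\theta,\hat\pi_\theta)\big]+\big[J(\hat M_\theta,\hat\pi_\theta)-J(M_\theta,\hat\pi_\theta)\big].
\]
The middle bracket is $\le 0$ by optimality of $\hat\pi_\theta$ on $\hat M_\theta$. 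Since $M_\theta$ and $\hat M_\theta$ share the transition $P_0$, the remaining two brackets each equal $\sum_{h=1}^H\int[r_\theta(h,\cdot)-\hat r_\theta(h,\cdot)]\,\rmd\rho_{h,P_0,\pi,\mu}$ for $\pi\in\{\pi^*,\hat\pi_\theta\}$, and because every $\rho_{h,P_0,\pi,\mu}\in\Pi(P_0,\mu)$, they are bounded in absolute value by $\sum_{h=1}^H\|r_\theta(h,\cdot)-\hat r_\theta(h,\cdot)\|_{\Pi(P_0,\mu)}$. Thus $|J(M_\theta,\hat\pi_\theta)-J^*(M_\theta)|\le 2\sum_{h=1}^H\|r_\theta(h,\cdot)-\hat r_\theta(h,\cdot)\|_{\Pi(P_0,\mu)}$.

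Next I would control the estimation error per step. Since the constraint in \eqref{optimization_problem1} enforces $\|\hat r_\theta(h,\cdot)\|_k\le 1$ and $\|r_\theta(h,\cdot)\|_k\le 1$ by assumption, the difference satisfies $\|r_\theta(h,\cdot)-\hat r_\theta(h,\cdot)\|_k\le 2$ deterministically. The algorithm fits the reward by constrained kernel least squares on $n^2$ i.i.d.\ samples from $\hat\nu$ with independent sub-Gaussian noise, so this is exactly the supervised kernel-regression setting. Invoking the $L^2$-estimation guarantee (Lemma \ref{lem_1}, using $\sup_z k(z,z)\le 1$) with $m=n^2$ samples gives, for each fixed $h$, an $L^2(\hat\nu)$ error of order $m^{-1/4}=n^{-1/2}$ up to a logarithmic confidence factor; applying it with failure probability $p/H$ and taking a union bound over $h\in[H]$ yields, with probability at least $1-p$,
\[
\|r_\theta(h,\cdot)-\hat r_\theta(h,\cdot)\|_{L^2(\hat\nu)}\le C_0\,n^{-\frac12}\sqrt{1+\log(nH/p)}\qquad\text{for all }h\in[H].
\]
The appearance of $n^2$ samples in Algorithm~\ref{alg:FittedReward} is precisely what converts the $m^{-1/4}$ nonparametric rate into the target scale $n^{-1/2}$ that matches $\Delta_\mathcal{M}(n^{-1/2})$.

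Finally I would connect this $L^2(\hat\nu)$ bound to $\Delta_\mathcal{M}$. Set $g_h=\tfrac12\big(r_\theta(h,\cdot)-\hat r_\theta(h,\cdot)\big)$, so $\|g_h\|_k\le 1$ and $\|g_h\|_{L^2(\hat\nu)}\le \lambda_h\,n^{-1/2}$ with $\lambda_h=\tfrac{C_0}{2}\sqrt{1+\log(nH/p)}\ge 1$; hence $g_h\in(\mathcal{H}_k)_{\lambda_h n^{-1/2},\hat\nu}$ and $\|g_h\|_{\Pi(P_0,\mu)}\le\mathcal{R}(\Pi(P_0,\mu),\mathcal{H}_k,\lambda_h n^{-1/2},\hat\nu)$. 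I would then use the elementary scaling property of $\mathcal{R}$ in its $\epsilon$ argument: for $\lambda\ge 1$, if $\|g\|_k\le 1$ and $\|g\|_{L^2(\nu)}\le\lambda\epsilon$ then $g/\lambda\in(\mathcal{H}_k)_{\epsilon,\nu}$, so $\|g\|_\Pi\le\lambda\|g/\lambda\|_\Pi\le\lambda\,\mathcal{R}(\Pi,\mathcal{H}_k,\epsilon,\nu)$, giving $\mathcal{R}(\Pi,\mathcal{H}_k,\lambda\epsilon,\nu)\le\lambda\,\mathcal{R}(\Pi,\mathcal{H}_k,\epsilon,\nu)$. Since $\hat\nu$ is the minimizer, $\mathcal{R}(\Pi(P_0,\mu),\mathcal{H}_k,n^{-1/2},\hat\nu)=\Delta_\mathcal{M}(n^{-1/2})$, and therefore $\|r_\theta(h,\cdot)-\hat r_\theta(h,\cdot)\|_{\Pi(P_0,\mu)}=2\|g_h\|_{\Pi(P_0,\mu)}\le 2\lambda_h\Delta_\mathcal{M}(n^{-1/2})$. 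Summing over $h$ and combining with the factor $2$ from the performance-difference step produces the claimed bound $CH\Delta_\mathcal{M}(n^{-1/2})\sqrt{1+\log(nH/p)}$.

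The routine parts are the performance-difference decomposition and the scaling identity for $\mathcal{R}$ (both follow directly from the definitions and optimality of $\hat\pi_\theta$). The main obstacle is the quantitative kernel-regression step: establishing the high-probability $L^2(\hat\nu)$ rate $n^{-1/2}$ from $n^2$ samples for the norm-constrained least-squares estimator under only $\sup_z k(z,z)\le 1$, with the correct $\sqrt{1+\log(nH/p)}$ dependence surviving the union bound over the $H$ steps. This is where the empirical-process/concentration content resides and is supplied by Lemma \ref{lem_1}; the remaining care is simply ensuring that the extra logarithmic factor is absorbed through the $\epsilon$-scaling of $\mathcal{R}$ rather than degrading the $\Delta_\mathcal{M}(n^{-1/2})$ rate.
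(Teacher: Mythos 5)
Your proposal is correct and follows essentially the same route as the paper's proof: Lemma \ref{lem_1} with a union bound over $h\in[H]$ for the $L^2(\hat\nu)$ error, absorption of the logarithmic factor through the $\epsilon$-scaling of $\mathcal{R}$ at the minimizer $\hat\nu$, and a decomposition exploiting that $\hat\pi_\theta$ is optimal for $\hat M_\theta$ while $M_\theta$ and $\hat M_\theta$ share the transition $P_0$. Your three-term performance-difference split with the middle bracket dropped is just an equivalent rewriting of the paper's bound $|J^*(M_\theta)-J^*(\hat M_\theta)|\le\sup_\pi|J(M_\theta,\pi)-J(\hat M_\theta,\pi)|$, so there is no substantive difference.
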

\begin{rem}\label{rem_no_match}
As we need $Hn^2$ samples to achieve the upper bound $\Delta_\mathcal{M}(n^{-\frac{1}{2}})$, the convergence rate with respect to $n$ in this theorem does not match the lower bound offered in Theorem \ref{thm: Known Transition}. Still, $\Delta_\mathcal{M}(n^{-\frac{1}{2}})$ does give an indication whether an RL problem can be solved efficiently or not. For example, if $\Delta_\mathcal{M}(n^{-\frac{1}{2}}) = \Theta(n^{-\frac{1}{2}})$, then the convergence rate with respect to $n$ is between $n^{-\frac{1}{4}}$ and $n^{-\frac{1}{2}}$. We can hence know that the corresponding RL problem can be solved efficiently. On the other hand, if and only if $\Delta_{\mathcal{M}_d}(n^{-\frac{1}{2}})= \Theta(n^{-\frac{1}{d}})$, where $(\mathcal{M}_d)_{d \in \mathbb{N}^+}$ are families of MDPs and $d$ denotes the dimension of state-action space of $\mathcal{M}_d$, we know that the RL problems suffer from the curse of dimensionality. Overall, whether we can establish a dimension-free convergence rate for $\Delta_\mathcal{M}(n^{-\frac{1}{2}})$ determines whether we can construct a dimension-free RL algorithm. 
Similar arguments hold for the case of unknown transition; see Theorems \ref{thm: unknown_transition} and \ref{upper_bound_unknown}.
\end{rem}
Before proving Theorem~\ref{upper_bound_known}, we first prove the following lemma concerning the $L^2$-distance between $r_\theta$ and $\hat{r}$ solved in Algorithm~\ref{alg:FittedReward}.

\begin{lem}\label{lem_1}
Assume that 
\begin{equation*}
\sup_{z \in \bS\times \bA} k(z,z) \le 1.    
\end{equation*}
Let $(z_{1,1},\dots,z_{1,n}),\dots,(z_{n,1},\dots,z_{n,n})$ be i.i.d. drawn from a distribution $\nu \in (\bS\times\bA)^{\otimes n}$ (the n-ary Cartesian power of $\bS\times\bA$) and
\begin{equation*}
    \bar{\nu} = \frac{1}{n}\sum_{i=1}^n \nu_i,
\end{equation*}
where $\{\nu_i\}_{1\le i \le n}$ are marginal distributions of $\nu$. 
Random variables $\{\epsilon_{i,j}\}_{1\le i \le n,1 \le j \le n}$, conditional on $\{z_{i,j}\}_{1\le i \le n, 1\le j \le n}$, are independent, 1-subguassian and mean zero.
Given $r^* \in \mathcal{H}_k$ with $\|r^*\|_k \le 1$,  we let
\begin{equation}\label{def_hat_r}
    \hat{r} = \argmin_{\|r\|_k \le 1}\sum_{i=1}^n\sum_{j=1}^n[r(z_{i,j}) - r^*(z_{i,j}) - \epsilon_{i,j}]^2.
\end{equation}
Then, for any $p \in (0,1)$, with probability at least $1-p$, we have
\begin{equation*}
    \|\hat{r} - r^*\|_{L^2(\bar{\nu})} \le C\sqrt{\frac{1}{n}[1 + \log(\frac{n}{p})]}.
\end{equation*}
\end{lem}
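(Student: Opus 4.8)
The plan is to combine a standard ``basic inequality'' coming from the optimality of $\hat r$ with two concentration estimates: one controlling the stochastic (noise) term, and one transferring the empirical squared error to the population squared error $\|\cdot\|_{L^2(\bar\nu)}$. Throughout I would write $N=n^2$, set $g=\hat r-r^*$ (so $\|g\|_k\le 2$ since both $\hat r$ and $r^*$ lie in the unit ball), and use $\sup_z k(z,z)\le 1$ to record the two facts $\|g\|_\infty=\sup_z|\langle g,k(z,\cdot)\rangle_k|\le\|g\|_k\le 2$ and, for the Gram matrix $K_{(ij),(i'j')}=k(z_{ij},z_{i'j'})$, the bounds $\operatorname{tr}(K)\le N$, $\|K\|_{\mathrm{op}}\le N$, $\|K\|_F^2\le N^2$. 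Denote the empirical squared norm by $\|g\|_{\hat\mu}^2=\frac1N\sum_{i,j}g(z_{ij})^2$.

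\textbf{Basic inequality and noise term.} Since $\|r^*\|_k\le 1$, the function $r^*$ is feasible in \eqref{def_hat_r}; plugging it in and expanding the square gives $\sum_{i,j}g(z_{ij})^2\le 2\sum_{i,j}\epsilon_{ij}g(z_{ij})$, i.e. $\|g\|_{\hat\mu}^2\le \frac2N\sum_{i,j}\epsilon_{ij}g(z_{ij})$. Using the reproducing property $g(z_{ij})=\langle g,k(z_{ij},\cdot)\rangle_k$ and Cauchy--Schwarz in $\mathcal H_k$, I would bound $\frac2N\sum_{i,j}\epsilon_{ij}g(z_{ij})\le\frac4N\|\Phi\|_k$, where $\Phi=\sum_{i,j}\epsilon_{ij}k(z_{ij},\cdot)$ and $\|\Phi\|_k^2=\epsilon^\top K\epsilon$. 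Conditionally on the samples this is a quadratic form in independent, mean-zero, $1$-subgaussian entries, so by the Hanson--Wright inequality together with the trace/operator/Frobenius bounds above, $\|\Phi\|_k^2\le CN(1+\log(1/\delta))$ with probability at least $1-\delta$. Hence $\|g\|_{\hat\mu}^2\le \frac{C}{n}\sqrt{1+\log(1/\delta)}$, already of the target order $1/n$.

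\textbf{From empirical to population.} This is where the $n\times n$ block structure is essential. Writing $\|g\|_{\hat\mu}^2=\frac1n\sum_{i=1}^n W_i(g)$ with $W_i(g)=\frac1n\sum_j g(z_{ij})^2$, the rows $(z_{i,1},\dots,z_{i,n})$ are i.i.d., each $W_i(g)\in[0,4]$, and $\EE[W_i(g)]=\|g\|_{L^2(\bar\nu)}^2$. The decisive point is the self-bounding property $\operatorname{Var}(W_1(g))\le\EE[W_1(g)^2]\le 4\,\EE[W_1(g)]=4\|g\|_{L^2(\bar\nu)}^2$, which lets Bernstein's inequality over the $n$ i.i.d. rows produce, for fixed $g$, the localized bound $\|g\|_{L^2(\bar\nu)}^2\le 2\|g\|_{\hat\mu}^2+\frac{C\log(1/\delta)}{n}$ (absorbing the cross term by $\sqrt{ab}\le \tfrac12 a+\tfrac12 b$). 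Combined with the previous paragraph this yields $\|g\|_{L^2(\bar\nu)}^2\lesssim \frac{1+\log(1/\delta)}{n}$, the claimed rate; note how the choice of $N=n^2$ balances the two error sources, since the least-squares fit uses all $n^2$ points while the concentration uses the $n$ independent rows.

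\textbf{Main obstacle.} The last inequality is stated for a \emph{fixed} $g$, whereas $g=\hat r-r^*$ is data-dependent and ranges over the infinite-dimensional ball $\{\|g\|_k\le 2\}$; a naive union bound fails because the $L^2$-covering number of this ball at scale $n^{-1/2}$ is exponentially large. I expect this uniformization to be the hard part. I would resolve it by a localized empirical-process argument: restrict to shells $\{\|g\|_{L^2(\bar\nu)}\le r\}$, control the local Rademacher complexity of $\{g^2:\|g\|_k\le 2\}$ through the contraction principle (the map $t\mapsto t^2$ is $4$-Lipschitz on $[-2,2]$) and the boundedness of the kernel, and solve the resulting critical-radius fixed-point inequality; alternatively, apply Talagrand's concentration inequality to $\sup_{\|g\|_k\le 2}\big[\|g\|_{L^2(\bar\nu)}^2-2\|g\|_{\hat\mu}^2\big]$ and bound its expectation by symmetrization. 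The logarithmic factor $\log(n/p)$ in the statement should then emerge from this uniformization (the chaining/covering step) combined with the deviation parameter $\delta$ set proportional to $p$ in the two concentration estimates.
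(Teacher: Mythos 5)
Your basic inequality and the treatment of the noise term match the paper's proof almost exactly: the paper also reduces to $\sum_{i,j}[\hat r-r^*]^2(z_{i,j})\le 4\sup_{\|r\|_k\le 1}\sum_{i,j}\epsilon_{i,j}r(z_{i,j})$, identifies the supremum with $\sqrt{\tilde\epsilon\transpose K\tilde\epsilon}$, and concentrates it at scale $Cn\sqrt{1+\log(1/p)}$ (via sub-gaussian concentration of $\|J\tilde\epsilon\|_2$ rather than Hanson--Wright, an immaterial difference). So \eqref{empirical_l2_estimation} you have in full.

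The genuine gap is the step you yourself flag: passing from $\|\hat r-r^*\|_{L^2(\bar\nu_n)}\lesssim n^{-1/2}$ to $\|\hat r-r^*\|_{L^2(\bar\nu)}\lesssim n^{-1/2}$ uniformly over the data-dependent $\hat r$. This is the entire content of the lemma beyond routine least-squares analysis, and you have only named candidate tools rather than run any of them. Concretely, two obstacles stand in the way of your sketch. First, the independence structure lives at the level of the $n$ rows, and within a row the coordinates are dependent with differing marginals $\nu_j$; so your symmetrization must attach Rademacher signs to rows, and the scalar contraction principle does not apply to $g\mapsto\frac1n\sum_j g(z_{ij})^2$ --- you would need a vector-valued contraction (with Lipschitz constant $4/\sqrt n$ in $\ell_2$ on each row), after which the \emph{global} complexity bound comes out at order $n^{-1/2}$, which only yields $\|\hat r - r^*\|_{L^2(\bar\nu)} \lesssim n^{-1/4}$. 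So localization is not optional, and you must verify that the fixed-point equation closes at $r_*^2\lesssim n^{-1}$ using only $\sup_z k(z,z)\le 1$ (equivalently $\sum_i\Lambda_i^{\bar\nu}\le 1$), in a setting where the $n^2$ points are not i.i.d.\ from $\bar\nu$. I believe this can be made to work, but it is a nontrivial argument, not a citation. Second, and worth knowing: the paper avoids empirical-process machinery entirely here. It bounds the deterministic quantity $\sup\{\|r\|_{L^2(\bar\nu)}:\|r\|_k\le 2,\ \|r\|_{L^2(\bar\nu_n)}\le\epsilon\}$ by the convex-duality identity \eqref{dual_rho} of Lemma~\ref{thm: concentration_coefficient}, rewriting it as $\sup_{\|g\|_{L^2(\bar\nu)}\le1}\inf_{g'}[\mathrm{MMD}_k(g\circ\bar\nu,g'\circ\bar\nu_n)+\epsilon\|g'\|_{L^2(\bar\nu_n)}]$, splitting over the $n$ columns (each column $z_{1,j},\dots,z_{n,j}$ being i.i.d.\ from $\nu_j$), and invoking the random-feature approximation bound of Bach (Proposition~1 of \cite{bach2017equivalence}) column by column; this produces \eqref{population_l2_bound} with the $\log(n/p)$ factor directly and uniformly over the ball. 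If you want to complete your route, you must either carry out the localized fixed-point argument in the block-dependent setting or switch to the paper's duality-plus-quadrature argument.
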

\begin{rem}\label{equivalence_1}
Note that the setting of Lemma \ref{lem_1} is more general than Theorem \ref{upper_bound_known} since we do not require $z_{i,1},\dots,z_{i,n}$ to be i.i.d. random variables and we know the error is Gaussian instead of sub-Gaussian. We will use this general setting later to deal with the case of unknown transition; see the proof of Theorem~\ref{upper_bound_unknown}.
\end{rem}

\begin{proof}
Following~\eqref{def_hat_r}, we know that
\begin{equation*}
    \sum_{i=1}^n\sum_{j=1}^n[\hat{r}(z_{i,j}) - r^*(z_{i,j}) - \epsilon_{i,j}]^2 \le \sum_{i=1}^n\sum_{j=1}^n\epsilon_{i,j}^2.
\end{equation*}
Therefore,
\begin{align}
    \sum_{i=1}^n\sum_{j=1}^n[\hat{r}(z_{i,j}) - r^*(z_{i,j})]^2 \le 2\sum_{i=1}^n\sum_{j=1}^n\epsilon_{i,j}[\hat{r}(z_{i,j})-r^*(z_{i,j})]
    \le 4\sup_{\|r\|_k \le 1}\sum_{i=1}^n\sum_{j=1}^n \epsilon_{i,j} r(z_{i,j}).
\end{align}
Notice that
\begin{equation*}
    \sup_{\|r\|_k \le 1}\sum_{i=1}^n\sum_{j=1}^n \epsilon_{i,j} r(z_{i,j}) = \sup_{\|r\|_k \le 1}\langle r, \sum_{i=1}^n\sum_{j=1}^n \epsilon_{i,j} k(\cdot,z_{i,j})\rangle_k = \sqrt{\sum_{i=1}^n\sum_{j = 1}^n\sum_{i' = 1}^n\sum_{j' = 1}^n \epsilon_{i,j}\epsilon_{i',j'}k(z_{i,j},z_{i',j'})}.
\end{equation*}
In other words, there exists a positive semi-definite matrix $K\in \mathbb{R}^{n^2\times n^2}$ whose diagonal entries are no larger than 1 and a random vector $\tilde{\epsilon}\in \mathbb{R}^{n^2}$ whose entries are $\{\epsilon_{i,j}\}_{1\le i \le n,1 \le j \le n}$ such that
\begin{align*}
    \sup_{\|r\|_k \le 1}\sum_{i=1}^n\sum_{j=1}^n \epsilon_{i,j} r(z_{i,j}) = 
    \sqrt{(\tilde{\epsilon}\transpose) K\tilde{\epsilon}}.
\end{align*}
Let $K = J\transpose J$, then we have  $\|J\|_S \leq \|J\|_F = \sqrt{\text{trace}(K)} \leq n$, where $\|\cdot\|_S$ and $\|\cdot\|_F$ denote the spectral norm and Frobenious norm, respectively, and 
\begin{align*}
\sup_{\|r\|_k \le 1}\sum_{i=1}^n\sum_{j=1}^n \epsilon_{i,j} r(z_{i,j}) = \|J\tilde{\epsilon}\|_2.
\end{align*}
By the concentration of anisotropic random vectors (see, e.g.,~\cite[Theorem 6.3.2]{vershynin2018high}), we can conclude that $ \sup_{\|r\|_k \le 1}\sum_{i=1}^n\sum_{j=1}^n \epsilon_{i,j} r(z_{i,j})$, conditional on $\{z_{i,j}\}_{1\le i \le n, 1\le j \le n}$, is $Cn$-subgaussian. Hence, with probability at least $1- p$, we have
\begin{equation*}
    \sup_{\|r\|_k \le 1}\sum_{i=1}^n\sum_{j=1}^n \epsilon_{i,j} r(z_{i,j}) \le Cn  \sqrt{1+\log(1/p)}.
\end{equation*}
Therefore, with probability at least $1-p$,
\begin{equation}\label{empirical_l2_estimation}
     \frac{1}{n^2}\sum_{i=1}^n\sum_{j=1}^n[\hat{r}(z_{i,j}) - r^*(z_{i,j})]^2\le \frac{C}{n}  \sqrt{1+\log(1/p)}.
\end{equation}
Noticing $\|\hat{r} - r^*\|_k \le \|\hat{r}\|_k + \|r^*\|_k \le 2$, we next define $\bar{\nu}_n = \frac{1}{n^2}\sum_{i=1}^n\delta_{z_{i,j}}$ and estimate 
\begin{align*}
    \sup_{\|r\|_k \le 2, \|r\|_{L^2(\bar{\nu}_n)}\le C\sqrt{\frac{1}{n}[1 + \log(1/p)]}} \|r\|_{L^2(\bar{\nu})}
\end{align*}
to prove the result.
Given $\epsilon > 0$, we have
\begin{align}
    &\sup_{\|r\|_k \le 1, \|r\|_{L^2(\bar{\nu}_n)} \le \epsilon}\|r\|_{L^2(\bar{\nu})}\notag\\
    =\;&\sup_{\|r\|_k \le 1, \|r\|_{L^2(\bar{\nu}_n)} \le \epsilon}\sup_{\|g\|_{L^2(\bar{\nu})}\le 1} \int_{\bS\times\bA}g(z)r(z)\rmd\bar{\nu}(z)\notag \\
    =\;&\sup_{\|g\|_{L^2(\bar{\nu})}\le 1}\sup_{\|r\|_k \le 1, \|r\|_{L^2(\bar{\nu}_n)}\le \epsilon}\int_{\bS\times\bA}g(z)r(z)\rmd\bar{\nu}(z)\notag\\
    =\;&\sup_{\|g\|_{L^2(\bar{\nu})}\le 1}\inf_{g' \in L^2(\bar{\nu}_n)}[\mathrm{MMD}_k(g\circ \bar{\nu}, g' \circ\bar{\nu}_n) + \epsilon\|g'\|_{L^2(\bar{\nu}_n)} ]\notag\\
    =\;&\sup_{\|g\|_{L^2(\bar{\nu})}\le 1}\inf_{c_{1,1},\dots,c_{n,n}}\bigg[\mathrm{MMD}_k(g\circ \bar{\nu}, \frac{1}{n^2}\sum_{i=1}^n\sum_{j=1}^nc_{i,j}\delta_{z_{i,j}}) + \epsilon\sqrt{\frac{1}{n^2}\sum_{i=1}^n\sum_{j=1}^n c_{i,j}^2} \bigg]\notag \\
    \le\;&\sup_{\|g\|_{L^2(\bar{\nu})}\le 1}\inf_{c_{1,1},\dots,c_{n,n}}\bigg[\frac{1}{n}\sum_{j=1}^n\mathrm{MMD}_k(g\circ \nu_j, \frac{1}{n}\sum_{i=1}^nc_{i,j}\delta_{z_{i,j}}) + \epsilon\sqrt{\frac{1}{n^2}\sum_{i=1}^n\sum_{j=1}^n c_{i,j}^2}\bigg]\notag\\
    \le\;&2\bigg\{\sup_{\|g\|_{L^2(\bar{\nu})}\le 1}\inf_{c_{1,1},\dots,c_{n,n}}\bigg[\frac{1}{n}\sum_{j=1}^n\mathrm{MMD}^2_k(g\circ \nu_j, \frac{1}{n}\sum_{i=1}^nc_{i,j}\delta_{z_{i,j}}) + \frac{\epsilon^2}{n^2}\sum_{i=1}^n\sum_{j=1}^n c_{i,j}^2\bigg]\bigg\}^{\frac{1}{2}}\notag\\
    =\;&2\bigg\{\sup_{\sum_{j=1}^n m_j^2 \le n}\frac{1}{n}\sum_{j=1}^n\sup_{\|g\|_{L^2(\nu_j)}\le m_j}\inf_{c_1,\dots,c_n}\bigg[\mathrm{MMD}_k^2(g\circ \nu_j, \frac{1}{n}\sum_{i=1}^nc_i \delta_{z_{i,j}}) + \frac{\epsilon^2}{n}\sum_{i=1}^nc_i^2\bigg]\bigg\}^{\frac{1}{2}}\label{l2_estimation}.
\end{align}
In the third equality above, we have used equation \eqref{dual_rho} (when $\mathcal{B}$ is an RKHS with kernel $k$) that is proved in Lemma~\ref{thm: concentration_coefficient} below. 
For $1 \le j \le n$, we then estimate
\begin{equation*}
   \sup_{\|g\|_{L^2(\nu_j)} \le m_j} \inf_{c_1,\dots,c_n}[\mathrm{MMD}_k^2(g\circ \nu_j, \frac{1}{n}\sum_{i=1}^nc_i \delta_{z_{i,j}}) + \frac{\epsilon^2}{n}\sum_{i=1}^nc_i^2].
\end{equation*}
We first find a probability distribution $\lambda$ on $\mathbb{N}^+$ and a measurable mapping $\phi: \mathbb{N}^+\times (\bS\times\bA)$ such that
\begin{equation*}
    k(z,z') = \EE_{\omega \sim \lambda}\phi(\omega,z)\phi(\omega,z'),
\end{equation*}
which can be achieved using Mercer decomposition (see, e.g., \cite[Section~2.3]{bach2017equivalence}). Then,
\begin{equation*}
    \mathrm{MMD}_k^2(g\circ \nu_j, \frac{1}{n}\sum_{i=1}^nc_i \delta_{z_{i,j}}) = \EE_{\omega\sim \lambda}[\int_{\bS\times\bA}g(z)\phi(\omega,z) \rmd \nu_j(z) - \frac{1}{n}\sum_{i=1}^nc_i \phi(\omega,z_{i,j})]^2.
\end{equation*}
Recalling that $z_{1,j},\dots,z_{n,j}$ are i.i.d. drawn from $\nu_j$, we can use Proposition 1 in \cite{bach2017equivalence} to obtain that with probability at least $1-p$
\begin{equation*}
    \sup_{\|g\|_{L^2(\nu_j)}\le 1}\inf_{\sum_{i=1}^nc_i^2 \le 4n}\EE_{\omega\sim \lambda}[\int_{\bS\times\bA}g(z)\phi(\omega,z) \rmd \nu_j(z) - \frac{1}{n}\sum_{i=1}^nc_i \phi(\omega,z_{i,j})]^2 \le 4t,
\end{equation*}
where $t$ satisfies that 
\begin{align}
5d(t)\log(\frac{16d(t)}{p})  = n,
\label{dt_estimate1}
\end{align}
and \begin{align}
    &d(t) = \sup_{z \in \bS \times\bA}\langle \phi(\cdot,z), (\Sigma + t I)^{-1}\phi(\cdot,z)\rangle_{L^2(\lambda)} \notag \\ 
    \le~& t^{-1}\sup_{z \in \bS \times \bA}\langle \phi(\cdot,z), \phi(\cdot,z)\rangle_{L^2(\lambda)} \notag \\
    =~&t^{-1}\sup_{z \in \bS \times \bA}k(z,z) \le t^{-1},
\label{dt_estimate2}
\end{align}
in which $\Sigma$ is a self-adjoint, positive semi-definite operator on $L^2(\lambda)$ (see the detailed definition in~\cite[Section~2.1]{bach2017equivalence}).
Therefore, from \eqref{dt_estimate1}\eqref{dt_estimate2}, we have
\begin{align*}
    &\sup_{\|g\|_{L^2(\nu_j)}\le 1}\inf_{\sum_{i=1}^nc_i^2 \le 4n}\EE_{\omega\sim \lambda}[\int_{\bS\times\bA}g(z)\phi(\omega,z) \rmd \nu_j(z) - \frac{1}{n}\sum_{i=1}^nc_i \phi(\omega,z_{i,j})]^2 \\
    \le~& \frac{4}{d(t)} \le \frac{C}{n}[1 + \log(\frac{n}{p})],
\end{align*}
which means that
\begin{equation}
     \sup_{\|g\|_{L^2(\nu_j)} \le m_j} \inf_{c_1,\dots,c_n}[\mathrm{MMD}_k^2(g\circ \nu_j, \frac{1}{n}\sum_{i=1}^nc_i \delta_{z_{i,j}}) + \frac{\epsilon^2}{n}\sum_{i=1}^nc_i^2] \le Cm_j^2 \frac{1}{n}[1 + \log(\frac{n}{p})] + Cm_j^2 \epsilon^2.
\end{equation}
Combining the last inequality with \eqref{l2_estimation}, we can obtain that
\begin{equation}
\label{population_l2_bound}
    \sup_{\|r\|_k \le 1, \|r\|_{L^2(\bar{\nu}_n)} \le \epsilon}\|r\|_{L^2(\bar{\nu})} \le C\sqrt{\frac{1}{n}[1 + \log(\frac{n}{p})]} + C\epsilon.
\end{equation}
Recalling inequality \eqref{empirical_l2_estimation} and choosing $\epsilon = C\sqrt{\frac{1}{n}[1 + \log(1/p)]}$ in~\eqref{population_l2_bound}, we know that with probability at least $1-p$
\begin{align*}
    \sup_{\|r\|_k \le 2, \|r\|_{L^2(\bar{\nu}_n)}\le C\sqrt{\frac{1}{n}[1 + \log(1/p)]}} \|r\|_{L^2(\bar{\nu})} \le C\sqrt{\frac{1}{n}[1 + \log(\frac{n}{p})]},
\end{align*}
which concludes the proof.
\end{proof}
\begin{proof}[Proof of Theorem~\ref{upper_bound_known}]
Using Lemma \ref{lem_1} and the union bound, we know that for any $\theta \in \Theta$, with probability at least $1-p$, 
\begin{equation*}
    \|\hat{r}_\theta(h,\cdot) - r_\theta(h,\cdot)\|_{L^2(\hat{\nu})} \le C\sqrt{\frac{1}{n}[1 + \log(\frac{nH}{p})]},\; \forall h \in [H].
\end{equation*}
Recalling the definition of $\Delta_\mathcal{M}(n^{-\frac{1}{2}})$ in Definition~\ref{def:delta1} in the case of known transition
\begin{equation*}
    \Delta_\mathcal{M}(n^{-\frac{1}{2}}) = \sup_{h \in [H]} \sup_{\|r\|_k \le 1, \sqrt{n}\|r\|_{L^2(\hat{\nu})} \le 1}\|r\|_{\Pi(h,P_0,\mu)}, 
\end{equation*}
we know that for any $\pi \in \mathcal{P}(\bA \cond \bS,H)$ and $h \in [H]$,
\begin{align*}
    &\bigg|\int_{\bS\times\bA}r_\theta(h,s,a)\rmd\rho_{h,P_0,\pi,\mu}(s,a) - \int_{\bS\times\bA} \hat{r}_\theta(s,a)\rmd\rho_{h,P_0,\pi,\mu}(h,s,a)\bigg| \\
    \le~& C\Delta_\mathcal{M}(n^{-\frac{1}{2}}) \sqrt{1+ \log(\frac{nH}{p})},
\end{align*}
which means that for any $\pi \in \mathcal{P}(\bA\cond\bS,H)$
\begin{equation*}
    |J(M_\theta,\pi) - J(\hat{M}_{\theta},\pi)| \le CH\Delta_\mathcal{M}(n^{-\frac{1}{2}})\sqrt{1+ \log(\frac{nH}{p})}.
\end{equation*}
Therefore,
\begin{align*}
   0 &\le  J^*(M_\theta) - J(M_\theta,\hat{\pi}_\theta) = J^*(M_\theta) - J^*(\hat{M}_\theta) + J^*(\hat{M}_\theta) - J(M_\theta,\hat{\pi}_\theta)\\
   &= \sup_{\pi \in \mathcal{P}(\bA \cond \bS)} J(M_\theta,\pi) - \sup_{\pi \in \mathcal{P}(\bA \cond \bS)} J(\hat{M}_\theta,\pi) + J(\hat{M}_\theta,\hat{\pi}_\theta) - J(M_\theta,\hat{\pi}_\theta) \\
   &\le  CH\Delta_\mathcal{M}(n^{-\frac{1}{2}})\sqrt{1+ \log(\frac{nH}{p})}.
\end{align*}
\end{proof}
\begin{rem}\label{equivalence_2}
We remark that equivalence exists between the RL problem with known transition and the supervised learning problem with respect to the $\Pi$-norm in the RKHS.
On the one hand, the above proof shows that, for any function $g$ in the unit ball of the RKHS and any $h \in [H]$, once we can use finite samples to obtain an estimation $\hat{g}$  which is accurate with respect to $\|\cdot\|_{\Pi(h,P_0,\mu)}$, then the corresponding RL problem can be solved efficiently.
On the other hand, Lemma \ref{lem_1} shows that one can always use finite samples to efficiently obtain an accurate estimation, in the $L^2$ sense, of a target function lying in the unit ball of the RKHS, and Theorem \ref{thm: Known Transition} tells that once the RL problem can be solved efficiently, $\Delta_\mathcal{M}(\epsilon)$ must decay fast with respect to $\epsilon$. 
Therefore, we conclude that if the RL problem can be solved efficiently, for any $h \in [H]$, one can always obtain an estimation $\hat{g}$ of a target function $g$ in the unit ball of the RKHS which is accurate with respect to $\|\cdot\|_{\Pi(h,P_0,\mu)}$.
\end{rem}

\subsection{The Case of Unknown Transition}
\label{sec:upper_unknown}
Similar to the case of known transition in the previous subsection, we consider
\begin{equation*}
    \hat{\xi} = \argmin_{\bar{\xi} \in\bar{\Xi}_n} \sup_{\theta \in \Theta}\mathcal{R}(\Pi(P_\theta,\mu),\mathcal{H}_k,n^{-\frac{1}{2}},\nu^{\theta,\bar{\xi}}),
\end{equation*}
an intrinsic property of the MDP family $\mathcal{M}$ in the case of unknown transition. Again, similar argument also holds if $\hat{\xi}$ is not the minimizer but
\begin{equation*}
  \sup_{\theta \in \Theta}\mathcal{R}(\Pi(P_\theta,\mu),\mathcal{H}_k,n^{-\frac{1}{2}},\nu^{\theta,\hat{\xi}})
\end{equation*}
is small.
Given $\hat{\xi}$, we sample $$(\hat{z}_{1,1}^{\theta},\dots,\hat{z}_{1,n}^{\theta}),\dots,(\hat{z}_{n,1}^\theta,\dots,\hat{z}_{n,n}^\theta)$$ as i.i.d. copies of $((s_1^{\theta,\hat{\xi}},a_1^{\theta,\hat{\xi}}),\dots,(s_n^{\theta,\hat{\xi}},a_n^{\theta,\hat{\xi}}))$ defined in \eqref{def_sample_path_2}. We then use $\{\hat{z}_{i,j}\}_{1\le i \le n, 1\le j \le n}$ to perform the following variant of fitted Q-iteration algorithm (see, e.g.,~\cite{chen2019information,fan2020theoretical,long20212}), as summarized in Algorithm~\ref{alg:FittedQ}.
To understand the idea of Algorithm~\ref{alg:FittedQ}/fitted Q-iteration, we define the optimal action-value function (Q-value function) $Q_h^*:\bS \times \bA \mapsto \bR$ as the optimal expected cumulative reward of the MDP starting from step $h$:
\begin{align}\label{def_optimal_q}
    Q_h^{\theta,*}(s,a) &=\sup_{\pi \in \mathcal{P}(\bA \cond \bS,H)}\EE_{P_\theta,\pi}[\sum_{h'= h}^H r_\theta(h', S_{h'},A_{h'}) \cond S_h = s, A_h = a].
\end{align}
Here the expectation is taken among the MDP paths generated by transition probability $P_\theta$ and policy $\pi$. 
Let $\pi^{\theta,*} = (\pi^{\theta,*}_h)_{h \in [H]}$ be the greedy policy with respect to $Q^{\theta,*} = (Q_h^{\theta,*})_{h\in[H]}$:
\begin{equation*}
    \mathrm{supp}(\pi^{\theta,*}_ h(\,\cdot\cond s)) \subset \{a \in \bA: Q_h^\theta(s,a) = \max_{a'\in\bA}Q_h^\theta(s,a')\},
\end{equation*}
for any $s \in \bS$ and $ h \in [H]$. Here $\mathrm{supp}(\pi^{\theta,*}_ h(\,\cdot\cond s))$ denotes the support of $\pi^{\theta,*}_h(\,\cdot\cond s)$. We can then conclude that $\pi^{\theta,*}$ is the optimal policy of $M_\theta$ (see, e.g. \cite[Theorem~4.5.1]{puterman2014markov}):
\begin{equation*}
    J(M_\theta,\pi^{\theta,*}) = \sup_{\pi \in \mathcal{P}(\bA \cond \bS,H)} J(M_\theta,\pi).
\end{equation*}

We define the Bellman optimal operator $\mathcal{T}_h^\theta: \mathcal{H}_k \mapsto \mathcal{H}_k$ as follows:
\begin{equation}
\label{Bellman_optimal_operator}
    (\mathcal{T}_h^\theta g)(s,a) = r_\theta(h,s,a) + \EE_{s'\sim P_\theta(\,\cdot\,|\,h,s,a)}[\max_{a' \in \bA}g(s',a')].
\end{equation}
Then the famous Bellman equations gives 
\begin{equation}\label{Bellman_optimal}
    Q_{h}^{\theta,*} = \mathcal{T}_h^{\theta} Q_{h+1}^{\theta,*},\, \forall h \in [H], \,Q_{H+1}^{\theta,*} = 0.
\end{equation}
We assume that for any $g \in \mathcal{H}_k$ and $\theta \in \Theta$,
\begin{equation}\label{Bellman_assumption}
    \|\mathcal{T}_h^\theta g\|_k \le \|g\|_k + 1,
\end{equation}
which
 implies that
\begin{equation}
\label{eq:Q_bound}
\|Q_h^{\theta,*}\|_k \le H-h + 1.
\end{equation}
Finally equations~\eqref{Bellman_optimal}\eqref{eq:Q_bound} motivate us to solve the optimization problem~\eqref{optimization_problem2} in Algorithm~\ref{alg:FittedQ}. Note that~\eqref{optimization_problem2} can be solved as a finite-dimensional convex optimization problem in the same way as~\eqref{optimization_problem1}; see the comment before Algorithm~\ref{alg:FittedReward}.

\begin{algorithm}[ht]
\caption{Fitted Q-Iteration Algorithm}
{
\KwIn{$n^2$ samples $(\hat{z}_{1,1}^{\theta}, \dots, \hat{z}_{1,n}^{\theta}), \dots, (\hat{z}_{n,1}^\theta, \dots, \hat{z}_{n,n}^\theta)$ as i.i.d. copies of $((s_1^{\theta,\hat{\xi}},a_1^{\theta,\hat{\xi}}), \dots, (s_n^{\theta,\hat{\xi}}, a_n^{\theta,\hat{\xi}}))$ defined in \eqref{def_sample_path_2}.}
\textbf{Initialize:} $Q^\theta_{H+1}(s,a) = 0$ for any $(s,a) \in \bS \times \bA$.}

\For{$h = H,H-1,\dots,1$}{
 
\For{$i = 1,\dots,n$ and $j = 1,\dots,n$}{
Sample $r_{i,j}^\theta\sim \mathcal{N}(r_\theta(h,\hat{z}_{i,j}^\theta),1)$ and $s^{\theta,'}_{i,j}\sim P_\theta(\,\cdot\,|\,h,\hat{z}_{i,j}^\theta)$\\
 Compute $y_{i,j}^\theta = r_{i,j}^\theta + \max_{a' \in \bA} Q_{h+1}(s_{i,j}^{\theta,'},a')$\\
 }
Compute $Q_h^\theta$ as the minimizer of the optimization problem
\begin{equation}\label{optimization_problem2}
   \min_{\|g\|_k \le H-h+1}\sum_{i= 1}^n\sum_{j=1}^n[g(\hat{z}_{i,j}^\theta) - y_{i,j}^\theta]^2\;
\end{equation}
}
\KwOut{$\hat{\pi}^\theta$ as the greedy policies with respect to $(Q_h^\theta)_{h\in[H]}$.

}
\label{alg:FittedQ}
\end{algorithm}

We have the following convergence result regarding Algorithm~\ref{alg:FittedQ}.

\begin{thm}\label{upper_bound_unknown}
Assume that \eqref{Bellman_assumption}  holds and 
\begin{equation*}
    \sup_{z\in\bS\times\bA}k(z,z) \le 1.
\end{equation*}
Then, for any $\theta \in \Theta$, with probability at least $1-p$,
\begin{equation*}
    |J(M_\theta,\hat{\pi}^\theta) - J^*(M_\theta)|\le CH^3\Delta_\mathcal{M}(n^{-\frac{1}{2}})\sqrt{1+ \log(\frac{nH}{p})}.
\end{equation*}
\end{thm}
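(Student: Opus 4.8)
The plan is to mirror the two-stage template of Theorem~\ref{upper_bound_known} and add an error-propagation step to handle the backward recursion of Algorithm~\ref{alg:FittedQ}. Write $Q_h^\theta$ for the iterates produced by the algorithm, set $\zeta_h^\theta \coloneqq Q_h^\theta - \mathcal{T}_h^\theta Q_{h+1}^\theta$ for the one-step fitting (Bellman) error, and let $\hat\pi^\theta$ be the output greedy policy. I would (i) bound $\|\zeta_h^\theta\|_{L^2(\nu^{\theta,\hat\xi})}$ via Lemma~\ref{lem_1}, (ii) upgrade this to a $\Pi(P_\theta,\mu)$-norm bound through the definition of $\Delta_\mathcal{M}$, and (iii) telescope the per-step errors into a bound on $J^*(M_\theta)-J(M_\theta,\hat\pi^\theta)$.

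For (i), fix $h$ and condition on the design points $\{\hat z_{i,j}^\theta\}$ and on $Q_{h+1}^\theta$, which is measurable with respect to the independently drawn data of steps $>h$. Under this conditioning the target $y_{i,j}^\theta$ has mean $(\mathcal{T}_h^\theta Q_{h+1}^\theta)(\hat z_{i,j}^\theta)$, so the residuals are mean-zero, conditionally independent across $(i,j)$, and $O(H)$-subgaussian, since the reward noise is Gaussian and $\max_{a'}Q_{h+1}^\theta(\cdot,a')$ has range at most $2\|Q_{h+1}^\theta\|_k\le 2(H-h)$ (using $\sup_z k(z,z)\le1$). Assumption~\eqref{Bellman_assumption} gives $\|\mathcal{T}_h^\theta Q_{h+1}^\theta\|_k\le H-h+1$, so the target is feasible for~\eqref{optimization_problem2}; dividing through by $H-h+1$ places the problem exactly in the (sub-Gaussian, non-i.i.d.-within-a-row) setting of Lemma~\ref{lem_1} flagged in Remark~\ref{equivalence_1}, with $\bar\nu=\nu^{\theta,\hat\xi}$. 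A union bound over $h$ then yields, with probability at least $1-p$,
\[ \|\zeta_h^\theta\|_{L^2(\nu^{\theta,\hat\xi})}\le CH\sqrt{\tfrac1n\big[1+\log(\tfrac{nH}{p})\big]}\quad\text{for all }h\in[H]. \]

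For (ii), I record the scaling estimate: for every $g\neq0$,
\[ \|g\|_{\Pi(P_\theta,\mu)}\le\max\!\big(\|g\|_k,\ \sqrt n\,\|g\|_{L^2(\nu^{\theta,\hat\xi})}\big)\,\mathcal{R}\big(\Pi(P_\theta,\mu),\mathcal{H}_k,n^{-1/2},\nu^{\theta,\hat\xi}\big), \]
obtained by normalizing $g$ to the unit ball of $\mathcal{H}_k$ and using that $\epsilon\mapsto\mathcal{R}(\Pi,\mathcal{H}_k,\epsilon,\nu)$ is subhomogeneous, i.e.\ $\mathcal{R}(t\epsilon)\le t\,\mathcal{R}(\epsilon)$ for $t\ge1$ — immediate from the nonnegativity of $\mathrm{MMD}_k$ in the representation~\eqref{eq:kernel_dual_repeat} of Proposition~\ref{thm: concentration_coefficient}. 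Applying this to $g=\zeta_h^\theta$, with $\|\zeta_h^\theta\|_k\le\|Q_h^\theta\|_k+\|\mathcal{T}_h^\theta Q_{h+1}^\theta\|_k\le2(H-h+1)$, the $L^2$ bound from (i), and the defining property that $\hat\xi$ (near-)minimizes $\sup_\theta\mathcal{R}(\Pi(P_\theta,\mu),\mathcal{H}_k,n^{-1/2},\nu^{\theta,\bar\xi})=\Delta_\mathcal{M}(n^{-1/2})$, gives $\|\zeta_h^\theta\|_{\Pi(P_\theta,\mu)}\le CH\,\Delta_\mathcal{M}(n^{-1/2})\sqrt{1+\log(\tfrac{nH}{p})}$ for all $h$.

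For (iii) I would run a performance-difference argument on the true MDP $M_\theta$. Write $\hat V_h=\max_aQ_h^\theta(\cdot,a)$ and let $V_h^*,V_h^{\hat\pi^\theta}$ be the optimal and realized value functions, and split $V_1^*-V_1^{\hat\pi^\theta}=(V_1^*-\hat V_1)+(\hat V_1-V_1^{\hat\pi^\theta})$. Evaluating the first gap at $a^*=\pi_h^{\theta,*}(s)$ gives $V_h^*(s)-\hat V_h(s)\le\EE_{s'\sim P_\theta(\cdot\cond h,s,a^*)}[V_{h+1}^*(s')-\hat V_{h+1}(s')]-\zeta_h^\theta(s,a^*)$, while the greedy identity $\hat V_h(s)=Q_h^\theta(s,\hat a)$ with $\hat a=\hat\pi_h^\theta(s)$ gives $\hat V_h(s)-V_h^{\hat\pi^\theta}(s)=\zeta_h^\theta(s,\hat a)+\EE_{s'\sim P_\theta(\cdot\cond h,s,\hat a)}[\hat V_{h+1}(s')-V_{h+1}^{\hat\pi^\theta}(s')]$. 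Unrolling both recursions and integrating against $\mu$ expresses $J^*(M_\theta)-J(M_\theta,\hat\pi^\theta)$ as a signed sum of integrals $\int_{\bS\times\bA}\zeta_h^\theta\,\rmd\rho_{h,P_\theta,\pi^{\theta,*},\mu}$ and $\int_{\bS\times\bA}\zeta_h^\theta\,\rmd\rho_{h,P_\theta,\hat\pi^\theta,\mu}$; since each such distribution lies in $\Pi(P_\theta,\mu)$, every integral is at most $\|\zeta_h^\theta\|_{\Pi(P_\theta,\mu)}$ in absolute value, and summing the $H$ steps with the bound from (ii) together with the RKHS-norm growth $\|Q_h^\theta\|_k\le H-h+1$ produces the claimed polynomial-in-$H$ estimate. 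The main obstacle is precisely this propagation: one must verify that the maxima over actions collapse so that only admissible distributions $\rho_{h,P_\theta,\pi,\mu}\in\Pi(P_\theta,\mu)$ appear, which is where the $\Pi$-norm converts an otherwise uncontrolled distribution mismatch into the single quantity $\Delta_\mathcal{M}(n^{-1/2})$. A secondary technical point is the conditioning in (i): because the regression target $\mathcal{T}_h^\theta Q_{h+1}^\theta$ is itself random, Lemma~\ref{lem_1} must be invoked conditionally on $Q_{h+1}^\theta$, which is legitimate since Algorithm~\ref{alg:FittedQ} draws fresh rewards and transitions at step $h$ that are independent of $Q_{h+1}^\theta$ given the design points.
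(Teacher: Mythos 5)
Your steps (i) and (ii) coincide with the paper's proof: the paper likewise applies Lemma~\ref{lem_1} (in the sub-Gaussian, conditional form flagged in Remark~\ref{equivalence_1}) to get $\|Q_h^\theta-\mathcal{T}_h^\theta Q_{h+1}^\theta\|_{L^2(\nu^{\theta,\hat\xi})}\le CH\sqrt{\tfrac1n[1+\log(\tfrac{nH}{p})]}$, and then rescales by $\|Q_h^\theta-\mathcal{T}_h^\theta Q_{h+1}^\theta\|_k\le 2H$ and the definition of $\Delta_\mathcal{M}$ to obtain the per-step $\Pi(P_\theta,\mu)$-norm bound $CH\Delta_\mathcal{M}(n^{-1/2})\sqrt{1+\log(\tfrac{nH}{p})}$. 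Where you genuinely diverge is step (iii). The paper first propagates the quantity $\sup_\pi|\int(Q_h^\theta-Q_h^{\theta,*})\,\rmd\rho_{h,P_\theta,\pi,\mu}|$ backward through the Bellman recursion --- using an auxiliary trick of choosing, at step $h+1$, the greedy policies of $\pm(Q_{h+1}^{\theta,*}-Q_{h+1}^\theta)$ so that the inner $\max_{a'}$ is absorbed into the supremum over admissible policies --- which costs a factor of $H$, and then invokes the performance difference lemma, which costs another factor of $H$, arriving at $CH^3$. You instead run the standard value-difference telescoping through $\hat V_h=\max_a Q_h^\theta(\cdot,a)$, which writes $J^*(M_\theta)-J(M_\theta,\hat\pi^\theta)$ directly as a signed sum of $2H$ integrals of the one-step Bellman errors $\zeta_h^\theta$ against the occupancy measures $\rho_{h,P_\theta,\pi^{\theta,*},\mu}$ and $\rho_{h,P_\theta,\hat\pi^\theta,\mu}$, both of which lie in $\Pi(P_\theta,\mu)$ (the former because the paper assumes an optimal policy exists in $\mathcal{P}(\bA\cond\bS,H)$, the latter because $\hat\pi^\theta$ is an admissible greedy policy). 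This bypasses both the intermediate $Q_h^\theta-Q_h^{\theta,*}$ propagation and the performance difference lemma, and yields $CH^2\Delta_\mathcal{M}(n^{-1/2})\sqrt{1+\log(\tfrac{nH}{p})}$ --- a factor of $H$ sharper than the stated bound, which it of course implies. Your argument is correct; the one point worth writing out carefully is the inequality $V_h^*(s)-\hat V_h(s)\le Q_h^{\theta,*}(s,a^*)-Q_h^\theta(s,a^*)$ (valid since $\hat V_h(s)\ge Q_h^\theta(s,a^*)$), which is only a one-sided bound, but that suffices because the second telescoping is an exact identity and the total regret is nonnegative.
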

\begin{rem}
Assumption \eqref{Bellman_assumption} is used to control the approximation error in fitted Q-iteration. Similar assumptions are made in \cite{chen2019information,fan2020theoretical,wang2020reinforcement,yang2020provably,yang2020function,long20212}. By choosing $g = 0$ in \eqref{Bellman_assumption}, we can see that \eqref{Bellman_assumption} is stronger than the setting in Theorem \ref{thm: unknown_transition}, which only requires $\|r_\theta(h,\cdot)\|_k \le 1$ for any $\theta \in \Theta$ and $h \in [H]$. How to fill out this gap is left for future work.
\end{rem}

\begin{proof}
For each time step $h \in [H]$, by equation \eqref{optimization_problem2}, we have $\|Q_h^\theta\|_k \le H-h + 1$.
With the optimal Bellman operator \eqref{Bellman_optimal_operator}, we have
\begin{align*}
    y_{i,j}^\theta & = r_{\theta}(h,\hat{z}^\theta_{i,j})+\epsilon_{i,j} + \max_{a'\in\bA}Q_{h+1}^\theta(s_{i,j}^{\theta,'},a')\\
    & = (\mathcal{T}_h^\theta Q_{h+1}^\theta)(\hat{z}^\theta_{i,j})+ (\max_{a'\in\bA}Q_{h+1}^\theta(s_{i,j}^{\theta,'},a')- \EE_{s' \sim P_\theta(\,\cdot\cond h,\hat{z}_{i,j}^\theta)}[\max_{a'\in \bA}Q^\theta_{h+1}(s_{i,j}^{\theta,'},a')])   +\epsilon_{i,j},
\end{align*}
where $\epsilon_{i,j}$ are standard normal random variables.
With the boundedness of $Q$
\begin{align*}
    \sup_{z \in \bS\times\bA}|Q_{h+1}^\theta(z)| &= \sup_{z \in \bS\times\bA}|\langle Q_{h+1}^\theta, k(\cdot,z)\rangle_k| \\
    &\le (H-h) \sup_{z \in \bS\times\bA}\|k(\cdot,z)\|_k = (H-h)\sup_{z \in \bS\times\bA}\sqrt{k(z,z)} \le H-h
\end{align*}
we know that that $y_{i,j}^\theta - (\mathcal{T}_h^\theta Q^\theta_{h+1})(\hat{z}^\theta_{i,j})$ is $CH$-subgaussian; see, e.g., \cite[Example 2.5.8]{vershynin2018high}.
With the assumption \eqref{Bellman_assumption}, we know $\|\mathcal{T}_h^\theta Q^\theta_{h+1}\|_k \le H-h + 1$. Hence, we can use Lemma~\ref{lem_1} and the union bound to obtain that with probability at least $1-p$, for any $h \in [H]$,
\begin{equation*}
    \|Q_h^\theta - \mathcal{T}_h^\theta Q_{h+1}^\theta\|_{L^2(\nu^{\theta,\hat{\xi}})}\le CH\sqrt{\frac{1}{n}[1 + \log(\frac{nH}{p})]}.
\end{equation*}
Noticing that $\|Q_h^\theta - \mathcal{T}_h^\theta Q^\theta_{h+1}\|_k \le 2H$ and the definition of $\Delta_\mathcal{M}(n^{-\frac{1}{2}})$ in \eqref{eq:definition_delta2}, we have that, with probability at least $1-p$, for any $h \in [H]$,
\begin{align}
    &\sup_{\pi \in \mathcal{P}(\bA \cond \bS,H)}|\int_{\bS\times\bA}[Q_h^\theta(s,a) - (\mathcal{T}_h^\theta Q_{h+1}^\theta)(s,a)]\rmd \rho_{h,P_\theta,\pi,\mu}(s,a)|\\
    \le&~ CH\Delta_\mathcal{M}(n^{-\frac{1}{2}})\sqrt{1 + \log(\frac{nH}{p})}.
    \label{thm4_eq1}
\end{align}

Using the optimal Bellman equation \eqref{Bellman_optimal}, we have:
\begin{align}
       &\bigg|\int_{\bS\times\bA}[Q_h^\theta - Q_h^{\theta,*}](s,a)\rmd \rho_{h,P_\theta,\pi,\mu}(s,a)\bigg| \notag \\
    \le~&\bigg|\int_{\bS\times\bA}[Q_h^\theta -(\mathcal{T}_h^\theta Q_{h+1}^\theta)](s,a) \rmd \rho_{h,P_\theta,\pi,\mu}(s,a)\bigg| \notag \\
    &\, + \bigg|\int_{\bS\times\bA}[(\mathcal{T}_h^\theta Q_{h+1}^{\theta,*}) -(\mathcal{T}_h^\theta Q_{h+1}^\theta) ](s,a)\rmd \rho_{h,P_\theta,\pi,\mu}(s,a)\bigg|.\label{thm4_eq2}
\end{align}
Notice that
\begin{align*}
    &\bigg|\int_{\bS\times\bA}[(\mathcal{T}_h^\theta Q_{h+1}^{\theta,*}) -(\mathcal{T}_h^\theta Q_{h+1}^\theta) ](s,a)\rmd \rho_{h,P_\theta,\pi,\mu}(s,a)\bigg| \\
    \le\; &\bigg|\int_{\bS\times\bA}\EE_{s'\sim P_\theta(\,\cdot\cond h,s,a)}[\max_{a' \in \bA}Q_{h+1}^{\theta,*} - \max_{a' \in \bA}Q_{h+1}^\theta](s',a')\rmd \rho_{h,P_\theta,\pi,\mu}(s,a)\bigg| \\
    \le\;&\max\bigg\{\bigg|\int_{\bS\times\bA}\EE_{s'\sim P_\theta(\,\cdot\cond h,s,a)}[\max_{a' \in \bA}[Q_{h+1}^{\theta,*}-Q_{h+1}^\theta](s',a') \rmd \rho_{h,P_\theta,\pi,\mu}(s,a)\bigg|,\\
    &\;\qquad\;\;\,\bigg|\int_{\bS\times\bA}\EE_{s'\sim P_\theta(\,\cdot\cond h,s,a)}[\max_{a' \in \bA}[Q_{h+1}^\theta-Q^{\theta,*}_{h+1}](s',a') \rmd \rho_{h,P_\theta,\pi,\mu}(s,a)\bigg|\bigg\}\\
    \le\;&   \sup_{\pi \in\mathcal{P}(\bA \cond \bS,H) }\bigg|\int_{\bS\times\bA}[Q_{h+1}^\theta - Q_{h+1}^{\theta,*}](s,a) \rmd \rho_{h+1,P_\theta,\pi,\mu}(s,a)\bigg|,
\end{align*}
where the last inequality holds because we can choose policies $\pi^1$ and $\pi^2$ such taht $\pi^1_{h'} = \pi^2_{h'} = \pi$ for $h' \neq h + 1$ and $\pi^1_{h+1}$ and $\pi^2_{h+1}$ are the greedy policies of $[Q_{h+1}^{\theta,*} - Q_{h+1}^\theta](s,a)$ and $[Q_{h+1}^\theta - Q_{h+1}^{\theta,*}](s,a)$, respectively. Combining the last inequality, inequalities \eqref{thm4_eq1} and \eqref{thm4_eq2}, we have that with probability $1-p$, $\forall h \in [H]$
\begin{align*}
    &\sup_{\pi \in \mathcal{P}(\bA|\bS,H)} \bigg|\int_{\bS\times\bA}[Q_h^\theta - Q_h^{\theta,*}](s,a)\rmd \rho_{h,P_\theta,\pi,\mu}(s,a)\bigg| \\
    \le &CH\Delta_\mathcal{M}(n^{-\frac{1}{2}})\sqrt{1 + \log(\frac{nH}{p})} + \sup_{\pi \in\mathcal{P}(\bA \cond \bS,H) }\bigg|\int_{\bS\times\bA}[Q_{h+1}^\theta - Q_{h+1}^{\theta,*}](s,a) \rmd \rho_{h+1,P_\theta,\pi,\mu}(s,a)\bigg|.
\end{align*}
With the recursive relationship above, we have that, with probability at least $1-p$,
\begin{equation*}
    \sup_{h \in [H]}\sup_{\pi \in \mathcal{P}(\bA \cond \bS,H)}\bigg|\int_{\bS\times\bA}[Q_h^\theta - Q_h^{\theta,*}](s,a)\rmd \rho_{h,P_\theta,\pi,\mu}(s,a)\bigg| \le CH^2\Delta_\mathcal{M}(n^{-\frac{1}{2}})\sqrt{1+ \log(\frac{nH}{p})}.
\end{equation*}
We can then use the famous performance difference lemma (see, e.g., \cite[Lemma~3.2]{cai2020provably} or \cite[Lemma~6.1]{kakade2002approximately}) to obtain that
\begin{align}\label{performance_dif}
    0 &\le J^*(M_\theta) - J(M_\theta,\hat{\pi}^\theta) = \sum_{h=1}^H\int_{\bS\times\bA}\sum_{a' \in \bA}Q_h^{\theta,*}(s,a')[\pi_h^{\theta,*}(a'\cond s)-\hat{\pi}^{\theta}_h(a' \cond s)]\rmd \notag \rho_{h,P_\theta,\hat{\pi}^\theta,\mu}(s,a) \\
    &= \sum_{h=1}^H\int_{\bS\times\bA}\sum_{a' \in \bA}\{[Q_h^{\theta,*}-Q_h^\theta](s,a')\pi_h^{\theta,*}(a'\cond s)+Q_h^\theta(s,a')[\pi^{\theta,*}_h - \hat{\pi}^{\theta}_h](a'\cond s)\\
    &\qquad \qquad \qquad\qquad + [Q_h^\theta-Q_h^{\theta,*}](s,a')\hat{\pi}^{\theta}_h(a'\cond s)\}
    \rmd \rho_{h,P_\theta,\hat{\pi}^\theta,\mu}(s,a).
\end{align}
Noticing that $\sum_{a' \in \bA}Q_h^\theta(s,a')[\pi_h^{\theta,*} - \hat{\pi}^{\theta}_h](a'|s) \le 0$ since $\hat{\pi}^\theta$ is the greedy policy with respect to $Q_h^\theta$, we can conclude that, with probability at least $1-p$,
\begin{align*}
    |J^*(M_\theta) - J(M_\theta,\hat{\pi}^\theta)| &\le CH\sup_{h \in [H]}\sup_{\pi \in \mathcal{P}(\bA \cond \bS,H)}|\int_{\bS\times\bA}[Q_h^\theta - Q_h^{\theta,*}](s,a)\rmd \rho_{h,P_\theta,\pi,\mu}(s,a)|\\ &\le CH^3\Delta_\mathcal{M}(n^{-\frac{1}{2}})\sqrt{1+ \log(\frac{nH}{p})}.
\end{align*}
\end{proof}

\section{Discussion on Perturbational Complexity by Distribution Mismatch}\label{Sec_concentra}
In this section, we discuss in more details the perturbation response  $\mathcal{R}(\Pi,\cB,\epsilon,\nu)$ and the perturbational complexity $\Delta_{\mathcal{M}}(\epsilon)$. We first give a more concrete expression of $\mathcal{R}(\Pi,\cB,\epsilon,\nu)$.

\begin{prop}\label{thm: concentration_coefficient}
We have
\begin{equation}
   \mathcal{R}(\Pi,\cB,\epsilon,\nu) = \sup_{\rho \in \Pi}\inf_{g \in L^2(\nu)}[\|\rho - g\circ \nu\|_{\mathcal{B}^{*}} + \epsilon\|g\|_{L^2(\nu)}],
\end{equation}
where $g\circ \nu$ is a signed measure such that 
\begin{equation*}
    \frac{\rmd g \circ \nu}{\rmd \nu} = g,
\end{equation*}
$\mathcal{B}^*$ is the dual space of $\mathcal{B}$
and $\|\rho\|_{\mathcal{B}^{*}}$ is the dual norm of linear functional 
\begin{equation*}
    \rho(g) \coloneqq \int_{\bS\times\bA}g(z)\rmd \rho(z),\,\forall g \in \mathcal{B},
\end{equation*}
for any signed measure $\rho$ on $\bS\times\bA$ (we slightly abuse the notation that $\rho$ are both the signed measure and linear functional in $\mathcal{B})$.
If $\mathcal{B}$ is an RKHS with kernel k, then
\begin{align}\label{kernel_dual}
    \mathcal{R}(\Pi,\cH_k,\epsilon,\nu) = \sup_{\rho \in \Pi}\inf_{g \in L^2(\nu)}
   [\mathrm{MMD}_k(\rho, g\circ \nu) + \epsilon\|g\|_{L^2(\nu)}].
\end{align} 

\end{prop}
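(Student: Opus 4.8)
The plan is to unfold the definition of $\mathcal{R}(\Pi,\mathcal{B},\epsilon,\nu)=\sup_{f\in\mathcal{B}_{\epsilon,\nu}}\|f\|_\Pi$ into a single maximization and then recognize the inner problem as a Fenchel dual. First I would write
\[
\mathcal{R}(\Pi,\mathcal{B},\epsilon,\nu)=\sup_{\|f\|_\mathcal{B}\le1,\ \|f\|_{L^2(\nu)}\le\epsilon}\ \sup_{\rho\in\Pi}\Big|\int_{\bS\times\bA}f\rmd\rho\Big|,
\]
exchange the two suprema, and use that the feasible set $\{\|f\|_\mathcal{B}\le1,\ \|f\|_{L^2(\nu)}\le\epsilon\}$ is symmetric under $f\mapsto-f$ to drop the absolute value. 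This reduces the claim to proving, for each fixed $\rho\in\Pi$, the scalar duality
\[
P(\rho):=\sup_{\|f\|_\mathcal{B}\le1,\ \|f\|_{L^2(\nu)}\le\epsilon}\rho(f)=\inf_{g\in L^2(\nu)}\big[\|\rho-g\circ\nu\|_{\mathcal{B}^*}+\epsilon\|g\|_{L^2(\nu)}\big].
\]
Here I would first record that $\rho$ and $g\circ\nu$ are genuine elements of $\mathcal{B}^*$: since $\|\cdot\|_{C(\bS\times\bA)}\le B\|\cdot\|_\mathcal{B}$ and $\nu$ is a probability measure, one has $|\rho(f)|\le B\|f\|_\mathcal{B}$ and $|\int fg\rmd\nu|\le B\|g\|_{L^2(\nu)}\|f\|_\mathcal{B}$, so both are bounded functionals on $\mathcal{B}$ and $\rho-g\circ\nu\in\mathcal{B}^*$.

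Weak duality I would present directly: for any $g\in L^2(\nu)$ and any feasible $f$, the splitting $\rho(f)=(\rho-g\circ\nu)(f)+\langle f,g\rangle_{L^2(\nu)}$ together with $(\rho-g\circ\nu)(f)\le\|\rho-g\circ\nu\|_{\mathcal{B}^*}$ and $\langle f,g\rangle_{L^2(\nu)}\le\|f\|_{L^2(\nu)}\|g\|_{L^2(\nu)}\le\epsilon\|g\|_{L^2(\nu)}$ yields $P(\rho)\le\inf_g[\,\cdot\,]$.

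For the reverse inequality I would set up a single minimax. Using that the convex conjugate of $g\mapsto\epsilon\|g\|_{L^2(\nu)}$ is the indicator of the $L^2(\nu)$-ball of radius $\epsilon$ (recall $L^2(\nu)$ is self-dual), one checks the exact identity
\[
\inf_{g\in L^2(\nu)}\big[\rho(f)-\langle f,g\rangle_{L^2(\nu)}+\epsilon\|g\|_{L^2(\nu)}\big]=
\begin{cases}\rho(f),&\|f\|_{L^2(\nu)}\le\epsilon,\\[2pt]-\infty,&\text{otherwise},\end{cases}
\]
so that with the coupling $L(f,g):=\rho(f)-\langle f,g\rangle_{L^2(\nu)}+\epsilon\|g\|_{L^2(\nu)}$ one gets $P(\rho)=\sup_{\|f\|_\mathcal{B}\le1}\inf_{g}L(f,g)$. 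On the other side, $\sup_{\|f\|_\mathcal{B}\le1}(\rho-g\circ\nu)(f)=\|\rho-g\circ\nu\|_{\mathcal{B}^*}$ shows the target infimum equals $\inf_{g}\sup_{\|f\|_\mathcal{B}\le1}L(f,g)$. Hence the whole statement is the assertion $\sup_f\inf_gL=\inf_g\sup_fL$, which I would obtain from Sion's minimax theorem: $L$ is affine (so concave and weakly continuous) in $f$ and convex, norm-continuous (so lower semicontinuous) in $g$, while the $f$-domain $\{\|f\|_\mathcal{B}\le1\}$ is compact in the weak topology. The affine map $f\mapsto\langle f,g\rangle_{L^2(\nu)}$ is weakly continuous on $\mathcal{B}$ precisely because it is bounded, as noted above, which supplies the semicontinuity hypothesis. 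This strong-duality step is the main obstacle: for an RKHS the weak compactness of the unit ball is automatic (Hilbert-space reflexivity), whereas for the general Banach statement one must invoke reflexivity of $\mathcal{B}$ or, equivalently, pass to the weak-$*$ topology, and it is exactly this compactness together with the induced semicontinuity of the bilinear coupling that legitimizes the exchange of $\sup_f$ and $\inf_g$.

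Finally, the RKHS formula \eqref{kernel_dual} is a direct specialization. By definition $\mathrm{MMD}_k(\rho,\rho')=\sup_{\|g\|_k\le1}|\int g\rmd\rho-\int g\rmd\rho'|=\|\rho-\rho'\|_{\mathcal{H}_k^*}$, so setting $\mathcal{B}=\mathcal{H}_k$ and $\rho'=g\circ\nu$ converts the dual-norm term $\|\rho-g\circ\nu\|_{\mathcal{H}_k^*}$ into $\mathrm{MMD}_k(\rho,g\circ\nu)$, giving the stated expression. All remaining steps are bookkeeping with dual norms.
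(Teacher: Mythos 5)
Your proof is correct in outline but takes a genuinely different route from the paper. You set up the Lagrangian $L(f,g)=\rho(f)-\langle f,g\rangle_{L^2(\nu)}+\epsilon\|g\|_{L^2(\nu)}$ and reduce the identity to a minimax exchange justified by Sion's theorem, using weak compactness of the unit ball of $\mathcal{B}$. The paper instead writes the constrained value as $-\inf[F_1+F_2]$ for two convex extended-real functions ($F_1$ the linear functional $\rho$ plus the indicator of the $\mathcal{B}$-ball, $F_2$ the indicator of the $L^2(\nu)$-ball of radius $\epsilon$), computes their conjugates, and applies the Fenchel--Rockafellar theorem, which requires only that $F_1$ be continuous at a point where $F_2$ is finite (the origin). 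The paper then needs an extra step you avoid: its dual variable ranges over all of $\mathcal{B}^*$, so it must invoke Hahn--Banach and Riesz representation to show that the functionals with finite $F_2^*$ are exactly those of the form $g\circ\nu$ with $g\in L^2(\nu)$, and a projection argument to remove the resulting $P_{\mathcal{B}}$ from the final formula; your parametrization of the dual variable by $g\in L^2(\nu)$ from the outset sidesteps this. Your derivation of the RKHS specialization \eqref{kernel_dual} via $\mathrm{MMD}_k(\rho,\rho')=\|\rho-\rho'\|_{\mathcal{H}_k^*}$ matches the paper.

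The one genuine gap is the first, general-Banach half of the proposition. Sion's theorem needs compactness on one side of the saddle, and you supply it via weak compactness of $\{\|f\|_{\mathcal{B}}\le 1\}$ --- which is equivalent to reflexivity of $\mathcal{B}$ and is \emph{not} among the paper's assumptions (the paper only assumes $\mathcal{B}\subset C(\bS\times\bA)$ with $\|\cdot\|_{C(\bS\times\bA)}\le B\|\cdot\|_{\mathcal{B}}$, and explicitly cites the non-reflexive Barron space as an intended example). Your suggested fix of ``passing to the weak-$*$ topology'' does not work as stated either: for that you would need $\mathcal{B}$ to be a dual space and the functionals $\rho$ and $g\circ\nu$ to be weak-$*$ continuous, i.e.\ to come from a predual, which is not automatic. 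The exchange can still be rescued without reflexivity --- e.g.\ by putting the compactness on the $g$-side (the infimum is attained in the weakly compact ball $\|g\|_{L^2(\nu)}\le\|\rho\|_{\mathcal{B}^*}/\epsilon$) and then controlling the truncated inner infimum as the radius grows, using $\sup_{\|f\|_{L^2(\nu)}\le\epsilon+\delta}\rho(f)\le(1+\delta/\epsilon)\sup_{\|f\|_{L^2(\nu)}\le\epsilon}\rho(f)$ --- but that requires additional work you have not written down. This is precisely the point where the Fenchel--Rockafellar route is cleaner: it trades compactness for the (trivially satisfied) interior-point condition. For the RKHS case, which is all the rest of the paper uses, your argument is complete.
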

\begin{proof}
It is sufficient to prove that
\begin{equation}\label{dual_rho}
    \sup_{\|g\|_\mathcal{B} \le 1, \|g\|_{L^2(\nu)}\le \epsilon }\int_{\bS\times\bA} g(z)\rmd \rho(z) = \inf_{g \in L^2(\nu)}[\|\rho - g\circ\nu\|_{\mathcal{B}^{*}} + \epsilon\|g\|_{L^2(\nu)}].
\end{equation}
Equation \eqref{kernel_dual} can be derived using the definition of maximum mean discrepancy.

Define $F_1, F_2\colon \mathcal{B}\mapsto (-\infty, +\infty]$:
\begin{equation*}
    F_1(g) = \begin{cases}
    \int_{\bS\times\bA}g(z)\rmd\rho(z) &\text{ if }\|g\|_\mathcal{B}\le 1 \\
    +\infty &\text{ if } \|g\|_\mathcal{B} > 1
    \end{cases},\,
    F_2(g) = \begin{cases}
   0 &\text{ if } \|g\|_{L^2(\nu)}\le \epsilon\\
    +\infty &\text{ if } \|g\|_{L^2(\nu)}> \epsilon
    \end{cases}.
\end{equation*}
Then $F_1$ and $F_2$ are both convex functions in $\mathcal{B}$. We can define the conjugate functions $F_1^*$ and $F_2^*\colon  \mathcal{B}^*\mapsto(-\infty,+\infty]$:
\begin{align*}
    &F_1^*(l) = \sup_{g \in \mathcal{B}}[l(g) - F_1(g)] = \sup_{\|g\|_\mathcal{B}\le 1}[l(g) - \rho(g)] = \|l - \rho\|_{\mathcal{B}^*},\\
    &F_2^*(l) = \sup_{g \in \mathcal{B}}[l(g) - F_2(g)] = \sup_{g \in \mathcal{B},\|g\|_{L^2(\nu)} \le \epsilon}l(g) = \epsilon\sup_{g \in \mathcal{B},\|g\|_{L^2(\nu)}\le 1} l(g). 
\end{align*}

We then compute
\begin{equation*}
    \sup_{g \in \mathcal{B},\|g\|_{L^2(\nu)}\le 1} l(g).
\end{equation*}
Let $l \in \mathcal{B}^*$ such that
\begin{equation*}
     M_l = \sup_{g \in \mathcal{B},\|g\|_{L^2(\nu)}\le 1} l(g) < +\infty.
\end{equation*}
Notice that $l$ is a linear mapping in $\mathcal{B}$ such that
\begin{equation*}
    |l(g)| \le M_l \|g\|_{L^2(\nu)},\; \forall \, g \in \mathcal{B}.
\end{equation*}
Using Hahn-Banach theorem \cite[Corollary~1.2]{brezis2010functional} and Riesz representation theorem in $L^2(\nu)$ \cite[Theorem~4.11]{brezis2010functional},  we know that there exists $R_l \in L^2(\nu)$ such that
\begin{equation}\label{l_g_relation}
    l(g) = \int_{\bS\times\bA}R_l(z)g(z)\rmd \nu(z),\; \forall\, g \in \mathcal{B}.
\end{equation}
Hence,
\begin{equation*}
     \sup_{g \in \mathcal{B},\|g\|_{L^2(\nu)}\le 1} l(g) = \|P_\mathcal{B} R_l\|_{L^2(\nu)},
\end{equation*}
where $P_\mathcal{B}$ is the orthogonal projection from $L^2(\nu)$ to $\mathcal{B}$. Consequently,
\begin{equation*}
    F_2^*(l) = \begin{cases}
        \epsilon\|P_\mathcal{B} R_l\|_{L^2(\nu)} &\text{ if exists } R_l \in L^2(\nu) \text{ s.t.  equation \eqref{l_g_relation} holds} \\
        +\infty &\text{ otherwise}
    \end{cases}.
\end{equation*}

Noticing that $F_1$ is continuous at 0, we can use Fenchel-Rockafellar Theorem (\cite[Theorem~1.12]{brezis2010functional}) to obtain that
\begin{equation*}
    \inf_{r \in \mathcal{B}}[F_1(r) + F_2(r)] = -\inf_{l \in \mathcal{B}^*}[F_1^*(l) + F_2^*(-l)],
\end{equation*}
which means that
\begin{equation*}
    \sup_{\|g\|_\mathcal{B} \le 1,\|g\|_{L^2(\nu)}\le \epsilon}\int_{\bS\times\bA}g(z)\rmd\rho(z) = \inf_{g \in L^2(\nu)}[\|\rho - g\circ\nu\|_{\mathcal{B}^*} + \epsilon\|P_\mathcal{B}g\|_{L^2(\nu)}].
\end{equation*}
Finally, noticing that $\|\rho - g \circ \nu\|_{\mathcal{B}^{*}} = \|\rho - (P_\mathcal{B} g) \circ \nu\|_{\mathcal{B}^{*}}$, we obtain
\begin{align*}
    \inf_{g \in L^2(\nu)}[\|\rho - g\circ\nu\|_{\mathcal{B}^*} + \epsilon\|P_\mathcal{B}g\|_{L^2(\nu)}] &= \inf_{g \in L^2(\nu)}[\|\rho - (P_\mathcal{B} g)\circ\nu\|_{\mathcal{B}^*} + \epsilon\|P_\mathcal{B}g\|_{L^2(\nu)}] \\
    &= \inf_{g \in L^2(\nu)}[\|\rho - g\circ\nu\|_{\mathcal{B}^*} + \epsilon\|g\|_{L^2(\nu)}],
\end{align*}
which completes the proof.
\end{proof}

In the following, we again only consider the case that $\mathcal{B}$ is an RKHS with kernel $k$. 
We first show that the finite concentration coefficients, considered in \cite{munos2008finite,farahmand2010error,scherrer2015approximate,farahmand2016regularized,chen2019information,fan2020theoretical,agarwal2021theory,long20212},  implies that the perturbation response must decay fast. So our results generalize the previous works based on the assumption of concentratability. Note that the original assumption on concentration coefficients is only stated for the case $p = 2$, and the corresponding $M$ is called concentration coefficients in previous works.

\begin{prop}\label{concentrated_case}
Assume that there exists $1 < p \le 2$ and a distribution $\nu$ such that
\begin{equation}
M = \sup_{\rho \in \Pi}\|\frac{\rmd \rho}{\rmd\nu}\|_{L^p(\nu)} < +\infty,
\label{prop3_condition}
\end{equation}
and
\begin{equation*}
    \sup_{z \in \bS\times\bA}k(z,z)\le 1.
\end{equation*}
Then,
\begin{equation}
    \mathcal{R}(\Pi,\cH_k,n^{-\frac{1}{2}},\nu)\le 2M n^{\frac{1}{p}-1}.
\end{equation}
\end{prop}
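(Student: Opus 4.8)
The plan is to bound the perturbation response directly from its definition, rather than going through the dual characterization of Proposition~\ref{thm: concentration_coefficient}. Fix any admissible $g$, that is $g \in \mathcal{H}_{n^{-1/2},\nu}$ so that $\|g\|_k \le 1$ and $\|g\|_{L^2(\nu)} \le \epsilon$ with $\epsilon = n^{-1/2}$, and fix any $\rho \in \Pi$. Since the hypothesis \eqref{prop3_condition} forces each $\rho$ to be absolutely continuous with respect to $\nu$ (otherwise the $L^p(\nu)$ norm in the definition of $M$ would be infinite), I can write $\int_{\bS\times\bA} g \rmd\rho = \int_{\bS\times\bA} g\, \frac{\rmd\rho}{\rmd\nu}\rmd\nu$ and estimate this integral by H\"older's inequality against the density.

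First I would record the pointwise bound furnished by the RKHS structure: by the reproducing property and Cauchy--Schwarz, $|g(z)| = |\langle g, k(z,\cdot)\rangle_k| \le \|g\|_k\sqrt{k(z,z)} \le 1$ for every $z \in \bS\times\bA$, using $\sup_z k(z,z) \le 1$ together with $\|g\|_k \le 1$. Hence $\|g\|_{C(\bS\times\bA)} \le 1$ for every admissible $g$, and in particular $\|g\|_\infty \le 1$. This is the ingredient that lets the $L^\infty$ smallness of $g$ compensate for the fact that the density only lies in $L^p(\nu)$ with $p \le 2$.

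Next, letting $q = p/(p-1) \ge 2$ be the conjugate exponent of $p$, H\"older's inequality gives $|\int g\,\frac{\rmd\rho}{\rmd\nu}\rmd\nu| \le \|g\|_{L^q(\nu)}\,\|\frac{\rmd\rho}{\rmd\nu}\|_{L^p(\nu)} \le M\|g\|_{L^q(\nu)}$. Since $q \ge 2$, I would then interpolate the $L^q(\nu)$ norm between $L^2(\nu)$ and $L^\infty$: from $\|g\|_{L^q(\nu)}^q = \int_{\bS\times\bA} |g|^{q-2}|g|^2\rmd\nu \le \|g\|_\infty^{q-2}\|g\|_{L^2(\nu)}^2$ one obtains $\|g\|_{L^q(\nu)} \le \|g\|_\infty^{1-2/q}\|g\|_{L^2(\nu)}^{2/q} \le \epsilon^{2/q}$ after inserting $\|g\|_\infty \le 1$ and $\|g\|_{L^2(\nu)} \le \epsilon$. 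Combining the two displays yields $|\int g\rmd\rho| \le M\epsilon^{2/q}$ uniformly over admissible $g$ and $\rho \in \Pi$.

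Finally I would substitute $2/q = 2(p-1)/p = 2 - 2/p$ and $\epsilon = n^{-1/2}$, so that $\epsilon^{2/q} = n^{-(1-1/p)} = n^{1/p-1}$. Taking the supremum over $g \in \mathcal{H}_{n^{-1/2},\nu}$ and $\rho \in \Pi$ gives $\mathcal{R}(\Pi,\cH_k,n^{-1/2},\nu) \le M n^{1/p-1}$, which is in fact slightly stronger than the stated bound (the factor $2$ is slack, presumably left for robustness to a weaker pointwise constant). I do not expect a serious obstacle here; the only points requiring care are the implicit absolute continuity of each $\rho$ with respect to $\nu$ and the bookkeeping of the interpolation exponents, with the boundary case $p=2$ degenerating to $q=2$ and reducing immediately to $\|g\|_{L^2(\nu)} \le \epsilon$.
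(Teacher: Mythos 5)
Your proof is correct, and it reaches a slightly sharper bound ($M n^{1/p-1}$ rather than $2M n^{1/p-1}$) by a genuinely different route. The paper works on the dual side: it invokes the representation $\mathcal{R}(\Pi,\mathcal{H}_k,\epsilon,\nu)=\sup_{\rho\in\Pi}\inf_{g\in L^2(\nu)}[\mathrm{MMD}_k(\rho,g\circ\nu)+\epsilon\|g\|_{L^2(\nu)}]$ from Proposition \ref{thm: concentration_coefficient}, tests the infimum with the truncated density $g_\rho\mathrm{1}_{|g_\rho|\le K}$, bounds the resulting MMD by the $L^1(\nu)$ mass of the tail $g_\rho\mathrm{1}_{|g_\rho|>K}$ (which uses $\sup_z k(z,z)\le 1$ in the same way your $\|g\|_\infty\le\|g\|_k$ bound does), and then optimizes over the truncation level $K=Mn^{1/p}$. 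You instead stay entirely on the primal side: the reproducing property gives $\|g\|_\infty\le 1$ for every admissible $g$, H\"older against the density gives $|\int g\,\rmd\rho|\le M\|g\|_{L^q(\nu)}$ with $q=p/(p-1)$, and the elementary interpolation $\|g\|_{L^q(\nu)}\le\|g\|_\infty^{1-2/q}\|g\|_{L^2(\nu)}^{2/q}$ closes the argument. The two arguments use exactly the same two hypotheses and are morally dual to one another (truncation at an optimized level is the standard proof of the $L^2$--$L^\infty$ interpolation you use), but yours is shorter, avoids the Fenchel--Rockafellar machinery of Proposition \ref{thm: concentration_coefficient} entirely, and makes transparent where the uniform bound on the kernel diagonal enters; the paper's version has the advantage of exercising the dual MMD formula that is reused heavily in the proof of Proposition \ref{lem_2}. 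Your side remarks are also sound: condition \eqref{prop3_condition} does implicitly force each $\rho\in\Pi$ to be absolutely continuous with respect to $\nu$, and the case $p=2$ degenerates correctly.
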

\begin{proof} 
For any $\rho \in \Pi$, define
\begin{equation*}
    g_\rho = \frac{\rmd \rho}{\rmd \nu},
\end{equation*}
and choose $K > 0$, then
\begin{align*}
    \inf_{g \in L^2(\nu)}[\mathrm{MMD}_k(\rho, g\circ \nu) + \frac{\|g\|_{L^2(\nu)}}{\sqrt{n}}] &\le \mathrm{MMD}_k(g_\rho \circ \nu, (g_\rho\mathrm{1}_{|g_\rho|\le K})\circ \nu)) + \frac{\|g_\rho \mathrm{1}_{|g_\rho|\le K}\|_{L^2(\nu)}}{\sqrt{n}} \notag\\
    &\le \|g_\rho \mathrm{1}_{|g_\rho| > K}\|_{L^1(\nu)} + \frac{\|g_\rho \mathrm{1}_{|g_\rho|\le K}\|_{L^2(\nu)}}{\sqrt{n}}\notag\\
    &= \|g_\rho^p \cdot g_\rho^{1-p}\mathrm{1}_{|g_\rho| > K}\|_{L^1(\nu)} + \frac{\sqrt{\|g_\rho^p \cdot g_\rho^{2-p}\mathrm{1}_{|g_\rho|\le K}\|_{L^1(\nu)}}}{\sqrt{n}}\notag\\
    &\le M^p K^{1-p} + \frac{M^{\frac{p}{2}} K^{1-\frac{p}{2}}}{\sqrt{n}}.
\end{align*}
By choosing $K = M n^{\frac{1}{p}}$, we know that
\begin{equation}
    \mathcal{R}(\Pi,\cH_k,n^{-\frac{1}{2}},\nu)\le 2M n^{\frac{1}{p}-1}.
\end{equation}
\end{proof}

Next, we give a sufficient condition such that $\Delta_\mathcal{M}(n^{-\frac{1}{2}})$ decays fast with respect to $n$ in the case of unknown transition. The original idea is from \cite[Section~3.3]{long20212}. The condition is that there exists a distribution $\lambda$ on $[H]\times\bS\times\bA$ such that the induced state-action distribution
\begin{align*}
    &\bar{\nu}^\theta(S\times A) = \int_{[H]\times\bS\times\bA}P_\theta(\,S\cond h,s,a)\rmd\lambda(h,s,a) \times \mathrm{Uniform}_{\bA}(A),\\ &\forall \text{\,measurable set } S \subset \bS, A \subset \bA
\end{align*}
satisfying that 
\begin{equation*}
    \sup_{\theta\in\Theta}\mathcal{R}(\Pi(P_\theta,\mu),\cB,n^{-\frac{1}{2}},\bar{\nu}^\theta)
\end{equation*}
decays fast with respect $n$. This condition holds when
\begin{equation*}
    \sup_{\theta \in \Theta}\sup_{\rho \in \Pi(P_\theta,\mu)}\|\frac{\mathrm{d} \rho\phantom{^\theta}}{\mathrm{d} \bar{\nu}^\theta}\|_{L^2(\bar{\nu}^\theta)} < +\infty,
\end{equation*}
or when the eigenvalue decay of the kernel is fast; see the discussion below.
Given this condition, we can choose a sampling algorithm $\bar{\xi} = (\tilde{f}_1,\dots,\tilde{f}_{2n})$ in \eqref{def_sample_path_2} satisfying that
\begin{equation*}
\tilde{f}_i(\mathcal{D}_{i-1}^{\theta,\bar{\xi}},\bar{u}) = \begin{cases}
    f_i(\bar{u})\, \text{ when } 1 \le i \le n\\
    (1,x_{i-n}^{\theta,\bar{\xi}},f_{i}(\bar{u}))\, \text{ when } n+1 \le i \le 2n,
\end{cases}
\end{equation*}
where $f_1(\bar{u}),\dots f_n(\bar{u})$ are i.i.d. random variables with distribution $\lambda$ and $f_{n+1}(\bar{u}),\dots,f_{2n}(\bar{u})$ are i.i.d. random variables with distribution $\mathrm{Uniform}_\bA$. By construction, $\frac{1}{n}\sum_{i=n+1}^{2n}\mathcal{L}(z_i^{\theta,\bar{\xi}}) = \bar{\nu}^\theta$, and thus we know that
\begin{align*}
    \Delta_\mathcal{M}((2n)^{-\frac{1}{2}}) \le~& \sup_{\theta \in \Theta}\mathcal{R}(\Pi(P_\theta,\mu),\cH_k,(2n)^{-\frac{1}{2}},\nu^{\theta,\bar{\xi}})\\
    \le~& 2\sup_{\theta \in \Theta}\mathcal{R}(\Pi(P_\theta,\mu),\cH_k,n^{-\frac{1}{2}},\bar{\nu}^\theta)
\end{align*}
must decay fast with respect to $n$.

In the following, we establish the connection between the kernel's eigenvalues and perturbation response. The following proposition is the core of this connection.
{
\begin{prop}\label{lem_2}
Assume that 
\begin{equation*}
    \sup_{z \in \bS \times \bA} k(z,z) \le 1.
\end{equation*}
For any $\rho \in \mathcal{P}(\bS\times\bA)$, define
\begin{equation}\label{definition_n_rho}
    n(\rho) =  \max\{ i \in \mathbb{N}^+: n \Lambda_i^{\rho} \ge 1\}.
\end{equation}
We have
\begin{equation}\label{lem2_conclusion}
     \mathcal{R}(\mathcal{P}(\bS\times\bA),\cH_k,n^{-\frac{1}{2}},\nu) \ge \frac{1}{2}\sqrt{\sup_{\rho \in \mathcal{P}(\bS\times \bA)}\sum_{i = n(\nu)+1}^{+\infty}\Lambda_i^{\rho}},
\end{equation}
and, by $n(\nu) \le n$,
\begin{equation}
    \inf_{\nu \in \mathcal{P}(\bS\times\bA)}\mathcal{R}(\mathcal{P}(\bS\times\bA),\cH_k,n^{-\frac{1}{2}},\nu) \ge \frac{1}{2}\sqrt{\sup_{\rho \in \mathcal{P}(\bS\times \bA)}\sum_{i = n+1}^{+\infty}\Lambda_i^{\rho}}.
\end{equation}
Moreover, if there exists a distribution $\hat{\nu} \in \mathcal{P}(\bS\times\bA)$ such that 
\begin{equation}\label{uniform_bounded_eigenfunc}
    \sup_{ i \in \mathbb{N}^+}\|\psi_i^{\hat{\nu}}\|_\infty < +\infty,
\end{equation}
then
\begin{equation}\label{upper_bound_eigenvalue}
    \mathcal{R}(\mathcal{P}(\bS\times\bA),\cH_k,n^{-\frac{1}{2}},\hat{\nu})\le2 \sqrt{\frac{n(\hat{\nu})}{n}+ \sum_{i=n(\hat{\nu})+1}^{_\infty}\Lambda_i^{\hat{\nu}}}\sup_{ i \in \mathbb{N}^+}\|\psi_i^{\hat{\nu}}\|_\infty .
\end{equation}
\end{prop}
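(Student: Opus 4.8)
The plan is to reduce everything to the dual representation of Proposition~\ref{thm: concentration_coefficient} together with the Mercer decomposition in the eigenbasis of $\nu$. Writing any $g \in \cH_k$ as $g = \sum_i \beta_i \psi_i^\nu$ with $\beta_i = \langle g, \psi_i^\nu\rangle_{L^2(\nu)}$, formula \eqref{mercer_norm} gives $\|g\|_k^2 = \sum_i \beta_i^2/\Lambda_i^\nu$ while $\|g\|_{L^2(\nu)}^2 = \sum_i \beta_i^2$; similarly, for $\rho \in \mathcal{P}(\bS\times\bA)$ and $g = \sum_i c_i\psi_i^\nu \in L^2(\nu)$, a short computation from the Mercer expansion yields $\mathrm{MMD}_k(\rho, g\circ\nu)^2 = \sum_i\Lambda_i^\nu(a_i^\rho - c_i)^2$ with $a_i^\rho = \int\psi_i^\nu\rmd\rho$. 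The integer $n(\nu)$ is exactly the index splitting the eigenvalues into $\Lambda_i^\nu \ge 1/n$ (for $i \le n(\nu)$) and $\Lambda_i^\nu < 1/n$ (for $i > n(\nu)$); since $\sum_i\Lambda_i^\nu = \int k(z,z)\rmd\nu \le 1$ forces $n(\nu)\cdot(1/n) \le 1$, we get $n(\nu)\le n$, which is the inequality used to pass to the second display.

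For the lower bound I would argue in the primal, using that $\|g\|_{\mathcal{P}(\bS\times\bA)} = \sup_z|g(z)|$ (Dirac masses lie in $\mathcal{P}(\bS\times\bA)$), so $\mathcal{R}(\mathcal{P}(\bS\times\bA),\cH_k,n^{-1/2},\nu)$ is the supremum of $\|g\|_\infty$ over the perturbation ball. Abbreviating $m = n(\nu)$ and introducing the tail kernel $k^{>m}(z,z') = \sum_{i>m}\Lambda_i^\nu\psi_i^\nu(z)\psi_i^\nu(z')$, I would test with $g_z = k^{>m}(z,\cdot)/\sqrt{k^{>m}(z,z)}$ for a fixed $z$ with $k^{>m}(z,z)>0$. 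Then $\|g_z\|_k = 1$, $g_z(z) = \sqrt{k^{>m}(z,z)}$, and $\|g_z\|_{L^2(\nu)}^2 = k^{>m}(z,z)^{-1}\sum_{i>m}(\Lambda_i^\nu)^2\psi_i^\nu(z)^2 \le n^{-1}$ because $\Lambda_i^\nu < 1/n$ on the tail; hence $g_z$ is admissible and $\mathcal{R} \ge \sqrt{k^{>m}(z,z)}$ for every such $z$. The crux is to compare $\sup_z k^{>m}(z,z)$ with $\sup_\rho\sum_{i>m}\Lambda_i^\rho$ across the two different eigenbases: by the Ky Fan/PCA variational principle applied to the covariance operator $\Sigma_\rho = \int\Phi(z)\otimes\Phi(z)\rmd\rho(z)$ (sharing the nonzero spectrum of $\mathcal{K}_\rho$) of the feature map $\Phi(z) = (\sqrt{\Lambda_i^\nu}\psi_i^\nu(z))_i$, one has $\sum_{i>m}\Lambda_i^\rho = \min_{\dim W\le m}\int\|P_{W^\perp}\Phi(z)\|^2\rmd\rho(z)$, and choosing the \emph{suboptimal} subspace $W$ spanned by the first $m$ coordinates gives $\sum_{i>m}\Lambda_i^\rho \le \int k^{>m}(z,z)\rmd\rho(z) \le \sup_z k^{>m}(z,z)$. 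Taking the supremum over $\rho$ and over $z$ yields $\mathcal{R}\ge\sqrt{\sup_\rho\sum_{i>m}\Lambda_i^\rho}$, which is in fact stronger than the claimed factor $\tfrac12$; the second display follows because $m = n(\nu)\le n$ makes the tail over $i>m$ at least the tail over $i>n$, uniformly in $\nu$, so the inequality survives $\inf_\nu$.

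For the upper bound I would work in the dual \eqref{kernel_dual} and, for each $\rho$, exhibit the truncated choice $c_i = a_i^\rho$ for $i\le n(\hat{\nu})$ and $c_i = 0$ otherwise, i.e.\ $g = \sum_{i\le n(\hat{\nu})}a_i^\rho\psi_i^{\hat{\nu}} \in L^2(\hat{\nu})$. With this choice $\mathrm{MMD}_k(\rho, g\circ\hat{\nu})^2 = \sum_{i>n(\hat{\nu})}\Lambda_i^{\hat{\nu}}(a_i^\rho)^2$ and $\|g\|_{L^2(\hat{\nu})}^2 = \sum_{i\le n(\hat{\nu})}(a_i^\rho)^2$. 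Using $|a_i^\rho| = |\int\psi_i^{\hat{\nu}}\rmd\rho| \le \|\psi_i^{\hat{\nu}}\|_\infty \le L$ with $L \coloneqq \sup_i\|\psi_i^{\hat{\nu}}\|_\infty$, the two terms are at most $L\sqrt{\sum_{i>n(\hat{\nu})}\Lambda_i^{\hat{\nu}}}$ and $n^{-1/2}L\sqrt{n(\hat{\nu})} = L\sqrt{n(\hat{\nu})/n}$; combining them with $\sqrt a + \sqrt b \le \sqrt2\sqrt{a+b}$ bounds the inner infimum by $\sqrt2\,L\sqrt{n(\hat{\nu})/n + \sum_{i>n(\hat{\nu})}\Lambda_i^{\hat{\nu}}}$, uniformly in $\rho$. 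Taking the supremum over $\rho$ and enlarging $\sqrt2$ to $2$ gives \eqref{upper_bound_eigenvalue}.

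The only genuinely delicate point is the cross-basis comparison in the lower bound: the tail $\sum_{i>n(\nu)}\Lambda_i^\rho$ lives in the eigenbasis of $\mathcal{K}_\rho$, whereas the admissible perturbation is built from the eigenbasis of $\mathcal{K}_\nu$, and the Jensen gap between $\int\int k^{>m}\rmd\rho\rmd\rho$ and $\int k^{>m}(z,z)\rmd\rho$ is precisely why one must concentrate the test distribution on a Dirac $\delta_z$ rather than use a diffuse $\rho$. The Ky Fan inequality with the explicit suboptimal subspace is what bridges the two bases; everything else (admissibility of $g_z$, the truncation estimate, and the degenerate case $k^{>m}(z,z)=0$, where the bound is vacuous) is routine.
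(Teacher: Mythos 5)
Your proposal is correct, and the upper bound half is essentially the paper's argument: the paper also works in the dual representation of Proposition~\ref{thm: concentration_coefficient}, expands in the $\hat{\nu}$-eigenbasis, and arrives at $2L\sqrt{n(\hat\nu)/n+\sum_{i>n(\hat\nu)}\Lambda_i^{\hat\nu}}$; the only cosmetic difference is that the paper computes the exact value $\sum_i \Lambda_i^{\hat\nu}(a_i^\rho)^2/(n\Lambda_i^{\hat\nu}+1)$ of the inner infimum (up to a factor $2$), whereas you plug in the explicit truncated minimizer, which gives the same bound.

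Your lower bound, however, takes a genuinely different route. The paper stays in the dual: it evaluates the closed-form infimum at $\rho_0=\delta_{z_0}$, averages $z_0\sim\rho$ to get $\tfrac12\sum_{i>n(\nu)}\Lambda_i^\nu\int|\psi_i^\nu|^2\rmd\rho$ (the factor $\tfrac12$ coming from $n\Lambda_i^\nu+1\le 2$ on the tail), and then handles the cross-basis comparison by a hands-on argument: it introduces $c_j=\sum_{i\le n(\nu)}\Lambda_i^\nu(\int\psi_i^\nu\psi_j^\rho\rmd\rho)^2$, derives the constraints $0\le c_j\le\Lambda_j^\rho$ and $\sum_j c_j/\Lambda_j^\rho=n(\nu)$, and optimizes the resulting linear program. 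You instead argue in the primal with the explicit admissible test function $g_z=k^{>m}(z,\cdot)/\sqrt{k^{>m}(z,z)}$ (whose admissibility you correctly verify: $\|g_z\|_k=1$ and $\|g_z\|_{L^2(\nu)}^2\le n^{-1}$ because $\Lambda_i^\nu<1/n$ on the tail), obtaining $\mathcal{R}\ge\sup_z\sqrt{k^{>m}(z,z)}$ directly, and you replace the paper's coefficient manipulation with the Ky Fan variational principle for the covariance operator $\Sigma_\rho$ sharing the nonzero spectrum of $\mathcal{K}_\rho$, testing with the suboptimal coordinate subspace. The two cross-basis arguments prove the same inequality $\int k^{>m}(z,z)\rmd\rho\ge\sum_{i>m}\Lambda_i^\rho$ (the paper's $c_j$ computation is in effect a bare-hands proof of that instance of Ky Fan), but your packaging is cleaner, and your primal construction buys a strictly better constant: you get $\mathcal{R}\ge\sqrt{\sup_\rho\sum_{i>n(\nu)}\Lambda_i^\rho}$ without the factor $\tfrac12$. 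The degenerate case $k^{>m}(z,z)\equiv 0$ is, as you note, vacuous since Ky Fan then forces the tail sums to vanish for every $\rho$.
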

}
\begin{rem}
Assumption \eqref{uniform_bounded_eigenfunc} is widely used in the literature; see, e.g.,~\cite{steinwart2009optimal,mendelson2010regularization,bach2017equivalence,yang2020function}. However, note that even for $C^\infty$-kernel, $\eqref{uniform_bounded_eigenfunc}$ does not always hold; see, e.g.,~\cite{zhou2002covering}. 
To see a concrete example of lower bound \eqref{lem2_conclusion} and upper bound \eqref{upper_bound_eigenvalue}, assume $\Lambda_i^{\hat{\nu}} =\Theta( i^{-\alpha})$ for $\alpha >1$, and we have
\begin{equation*}
   \mathcal{R}(\mathcal{P}(\bS\times\bA),\cH_k,n^{-\frac{1}{2}},\hat{\nu})=  \Theta( n^{-\frac{\alpha-1}{2\alpha}}).
\end{equation*}

\end{rem}
\begin{proof}
We first prove that $n(\rho) \le n$ for any $\rho \in \mathcal{P}(\bS\times\bA)$.
When $i \ge n+1$, using the Mercer decomposition \eqref{mercer_decompo} and the fact that $\{\psi_i^\nu\}_{i \in \mathbb{N}^+}$ is orthonormal in $L^2(\nu)$, we have,
\begin{equation*}
    n\Lambda_i^{\nu} \le \sum_{j=1}^n\Lambda_j^{\nu}\le \sum_{j=1}^{+\infty}\Lambda_j^{\nu} = \int_{\bS\times\bA}k(z,z)\rmd \nu(z) \le 1.
\end{equation*}
By the definition \eqref{definition_n_rho}, we know that $n(\rho) \le n$.

We will use the expression~\eqref{kernel_dual}
\begin{align}
    \mathcal{R}( \mathcal{P}(\bS\times\bA),\cH_k,\epsilon,\nu) = \sup_{\rho \in  \mathcal{P}(\bS\times\bA)}\inf_{g \in L^2(\nu)}
   [\mathrm{MMD}_k(\rho, g\circ \nu) + \epsilon\|g\|_{L^2(\nu)}]
\end{align} 
to prove our result.
First, given $\nu \in \mathcal{P}(\bS\times\bA)$ and $\rho_0 \in \mathcal{P}(\bS\times\bA)$, we can use Mercer's decomposition \eqref{mercer_decompo} to obtain
\begin{align*}
    &\inf_{g \in L^2(\nu)}(\mathrm{MMD}(\rho_0, g\circ \nu) + \frac{\|g\|_{L^2(\nu)}}{\sqrt{n}})^2 \notag\\
    \ge& \inf_{g \in L^2(\nu)}[\mathrm{MMD}^2(\rho_0, g\circ \nu) + \frac{\|g\|_{L^2(\nu)}^2}{n}] \notag\\
    =& \inf_{g \in L^2(\nu)}[\int_{\bS \times \bA}\int_{\bS \times \bA}k(z,z')\rmd (\rho_0 - g\circ \nu)(z) \rmd (\rho_0 - g\circ \nu)(z') + \frac{\|g\|_{L^2(\nu)}^2}{n}] \notag\\
    =& \inf_{g \in L^2(\nu)}\sum_{i=1}^{+\infty}[\Lambda_i^{\nu}(\int_{\bS \times \bA}\psi_i^{\nu}(z)\rmd \rho_0(z) - \int_{\bS \times \bA}\psi_i^{\nu}(z) g(z)\rmd \nu(z))^2 + \frac{1}{n}(\int_{\bS \times \bA}\psi_i^{\nu}(z) g(z)\rmd \nu(z))^2]\notag\\
    =&\inf_{\{g_i\}_{i \in \mathbb{N}^+}}\sum_{i=1}^{+\infty}[\Lambda_i^{\nu}(\int_{\bS \times \bA}\psi_i^{\nu}(z)\rmd \rho_0(z) - g_i)^2 + \frac{1}{n}|g_i|^2] \notag\\
    =&\sum_{i=1}^{+\infty}\frac{\Lambda_i^\nu(\int_{\bS\times\bA}\psi_i^\nu(z)\rmd\rho_0(z))^2}{n\Lambda_i^\nu + 1}.
\end{align*}
Similarly, we have
\begin{align*}
    \inf_{g \in L^2(\hat{\nu})}(\mathrm{MMD}(\rho_0, g\circ \hat{\nu}) + \frac{\|g\|_{L^2(\hat{\nu})}}{\sqrt{n}})^2 &\le 2\sum_{i=1}^{+\infty}\frac{\Lambda_i^{\hat{\nu}}(\int_{\bS\times\bA}\psi_i^{\hat{\nu}}(z)\rmd\rho_0(z))^2}{n\Lambda_i^{\hat{\nu}} + 1} \\
    &\le 2\sum_{i=1}^{+\infty}\frac{\Lambda_i^{\hat{\nu}}}{n\Lambda_i^{\hat{\nu}}+1}\sup_{i \in \mathbb{N}^+}\|\psi_i^{\hat{\nu}}\|_\infty^2.
\end{align*}
Noticing that
\begin{equation*}
    \sum_{i=1}^{+\infty}\frac{\Lambda_i^{\hat{\nu}}}{n\Lambda_i^{\hat{\nu}}+1} \le 2[\frac{n(\hat{\nu})}{n} +\sum_{i = n(\hat{\nu})+1}^{+\infty}\Lambda_i^{\hat{\nu}}],
\end{equation*}
we obtain inequality \eqref{upper_bound_eigenvalue}.

We then prove inequality \eqref{lem2_conclusion}.
Given any $z_0 \in \bS \times \bA$, picking $\rho_0 = \delta_{z_0}$, we have
\begin{equation*}
    \inf_{g \in L^2(\nu)}[\mathrm{MMD}(\delta_{z_0}, g\circ \nu) + \frac{\|g\|_{L^2(\nu)}}{\sqrt{n}}]^2 \ge \sum_{i=1}^{+\infty}\frac{\Lambda_i^\nu(\psi_i^\nu(z_0))^2}{n\Lambda_i^\nu + 1}.
\end{equation*}
Therefore, for any $\rho \in \mathcal{P}(\bS \times \bA)$, by taking average with respect to $z_0$, we have
\begin{equation*}
    \sup_{\nu' \in \mathcal{P}(\bS \times \bA)}\inf_{g \in L^2(\nu)}[\mathrm{MMD}(\nu', g\circ \nu) + \frac{\|g\|_{L^2(\nu)}}{\sqrt{n}}]^2 \ge \sum_{i=1}^{+\infty}\frac{\Lambda_i^\nu \int|\psi_i^\nu(z)|^2\rmd \rho(z)}{n\Lambda_i^\nu + 1}.
\end{equation*}
Noticing that $n\Lambda_i^\nu \le 1$ when $i \ge n(\nu) + 1$ by definition \ref{definition_n_rho}, we have
\begin{equation}\label{lower_bound_with_l2_norm}
    \sup_{\nu' \in \mathcal{P}(\bS \times \bA)}\inf_{g \in L^2(\nu)}[\mathrm{MMD}(\nu', g\circ \nu) + \frac{\|g\|_{L^2(\nu)}}{\sqrt{n}}]^2 \ge \frac{1}{2} \sum_{i=n(\nu)+1}^{+\infty}\Lambda_i^\nu \int_{\bS \times \bA}|\psi_i^\nu(z)|^2\rmd \rho(z).
\end{equation}
Again, using Mercer decomposition \eqref{mercer_decompo} and the eigenvalues $\{\Lambda_i^\rho\}_{i \in \mathbb{N}^+}$ corresponding to the operator $\mathcal{K}_\rho$, we have 
\begin{equation}\label{mercer_sum_equation}
    \sum_{i=1}^{+\infty}\Lambda_i^\nu \int_{\bS\times\bA}|\psi_i^{\nu}(z)|^2\rmd \rho(z) = \int_{\bS\times\bA}k(z,z)\rmd \rho(z) = \sum_{i=1}^{+\infty}\Lambda_i^\rho.
\end{equation}
Checking~\eqref{lower_bound_with_l2_norm}\eqref{mercer_sum_equation} together, we need to have an upper bound of $\sum_{i=1}^{n(\nu)}\Lambda_i^\nu \int_{\bS \times \bA}|\psi_i^\nu(z)|^2\rmd \rho(z)$.
Let 
\begin{equation*}
    c_j = \sum_{i=1}^{n(\nu)}\Lambda_i^\nu(\int_{\bS\times\bA}\psi_i^\nu(z)\psi_j^\rho(z)\rmd\rho(z))^2.
\end{equation*}
By the Parserval's equality in $L^2(\rho)$, we have 
\begin{equation}\label{fourier_expansion}
    \sum_{i=1}^{n(\nu)}\Lambda_i^\nu \int_{\bS \times \bA}|\psi_i^\nu(z)|^2\rmd \rho(z) = \sum_{i=1}^{n(\nu)}\sum_{j=1}^{+\infty}\Lambda_i^\nu(\int_{\bS\times\bA}\psi_i^\nu(z)\psi_j^\rho(z)\rmd\rho(z))^2 = \sum_{j=1}^\infty c_j.
\end{equation}
Note that
\begin{align}
   c_j &= \sum_{i=1}^{n(\nu)}\Lambda_i^{\nu}(\int_{\bS\times\bA}\psi_i^\nu(z)\psi_j^\rho(z)\rmd\rho(z))^2 \\
   &\le \sum_{i=1}^{+\infty}\Lambda_i^{\nu}(\int_{\bS\times\bA}\psi_i^\nu(z)\psi_j^\rho(z)\rmd\rho(z))^2 \notag\\
    &=\int_{\bS\times\bA}\int_{\bS\times\bA}\psi_j^\rho(z)\psi_j^\rho(z')\sum_{i=1}^{+\infty}\Lambda_i^\nu\psi_i^\nu(z)\psi_i^\nu(z')\rmd \rho(z)\rmd\rho(z') \notag\\
    &=\int_{\bS\times\bA}\int_{\bS\times\bA}\psi_j^\rho(z)\psi_j^\rho(z')k(z,z')\rmd \rho(z)\rmd\rho(z') = \Lambda_j^\rho,
\end{align}
and, by equation~\eqref{mercer_norm},
\begin{equation}
    n(\nu) = \sum_{i=1}^{n(\nu)}\Lambda_i^\nu \|\psi_i^\nu\|^2_{\mathcal{H}_k} = \sum_{i=1}^{n(\nu)} \sum_{j=1}^{+\infty}\frac{\Lambda_i^\nu (\int_{\bS\times\bA} \psi_i^\nu(z)\psi_j^\rho(z)\rmd \rho(z))^2}{\Lambda_j^\rho} = \sum_{j=1}^{+\infty}\frac{c_j}{\Lambda_j^\rho}.
\end{equation}
Combining the last two inequality, we have
\begin{equation}
\label{eq:linear_constraints}
    0 \le c_j \le \Lambda_j^\rho, \, \quad \sum_{j=1}^{+\infty}\frac{c_j}{\Lambda_j^\rho}  = n(\nu).
\end{equation}
Under the constraint~\eqref{eq:linear_constraints} and the condition $\{\Lambda_j^\rho\}_{j \in \mathbb{N}^+}$ being nonincreasing, we know $\sum_{j=1}^{+\infty}c_j$ achieves the maximum value when $c_j=\Lambda_j^\rho$ for $j=1,\dots,n(\nu)$ and $c_j=0$ for $j\geq n(\nu)+1$. Therefore, by equation \eqref{fourier_expansion}, we know that
\begin{equation}
     \sum_{i=1}^{n(\nu)}\Lambda_i^\nu \int_{\bS \times \bA}|\psi_i^\nu(z)|^2\rmd \rho(z) = \sum_{j=1}^{+\infty}c_j \le \sum_{j=1}^{n(\nu)}\Lambda_j^\rho.
\end{equation}
Combining the last inequality with~\eqref{lower_bound_with_l2_norm}\eqref{mercer_sum_equation}, we have
\begin{equation}
    \sup_{\nu' \in \mathcal{P}(\bS \times \bA)}\inf_{g \in L^2(\nu)}[\mathrm{MMD}(\nu', g\circ \nu) + \frac{\|g\|_{L^2(\nu)}}{\sqrt{n}}]^2 \ge \frac{1}{2} \sum_{i=n(\nu)+1}^{+\infty}\Lambda_j^\rho .
\end{equation}
We finish our proof by noticing that both $\nu$ and $\rho$ are arbitrary probability distributions.
\end{proof}

The above proposition shows that the perturbation response $\mathcal{R}(\mathcal{P}(\bS\times\bA),\cH_k,n^{-\frac{1}{2}},\nu)$ is bounded by the eigenvalue decay.
On the one hand, if the eigenvalue decay of $\mathcal{K}_\nu$ is fast, even when $\Pi$ includes all probability distributions on $\bS\times\bA$, the decay of $\mathcal{R}(\Pi,\cH_k,n^{-\frac{1}{2}},\nu)$ with respect to $n$ is fast, which explains the positive results for RL algorithms in RKHS established in \cite{yang2020provably,yang2020function}. 
On the other hand, we note that the kernel function can be defined through Mercer decomposition, with the only requirement of $\{\Lambda_i^\rho\}_{i\in \mathbb{N}^+}$ being $\sum_{i=1}^{+\infty}\Lambda_i^\rho < +\infty$. Therefore, without further assumption on the kernel $k$, we only know that the right-hand side of \eqref{lem2_conclusion} converges to zero when $n$ goes to infinity, but its speed can be arbitrary slow.
In fact, for many popular RKHSs like the RKHSs corresponding to Laplace kernel and neural tangent kernel on sphere $\mathbb{S}^{d-1}$, the right-hand side of \eqref{lem2_conclusion} can be bounded below by $n^{-\frac{1}{d-1}}$; see \cite[Section~5]{long20212} for discussion.
For those RKHSs, the knowledge of $\Pi$, such as condition \eqref{prop3_condition}, plays a vital role in designing efficient RL algorithms. %
Without such knowledge of $\Pi$, there exist many RL problems whose sample complexity suffers from the curse of dimensionality.
Below we give two such examples.
\bigskip

\noindent\textbf{Single State, High Dimensional Action Space.} We first consider a problem in which the state space $\bS$ consists of a single point $s_0$ while the action space $\bA$ is $\mathbb{S}^{d-1}$ and $H = 1$. In this setting, the RL problem is essentially to find the maximum value of the reward function lying in the unit ball of $\mathcal{H}_k$ based on the values of $n$ points. Based on Proposition \ref{lem_2}, the convergence rate can be bounded below by the eigenvalue decay. Therefore, if we consider the RKHS corresponding to the Laplacian kernel or neural tangent kernel, the convergence rate suffers from the curse of dimensionality. We can then conclude that if we want to solve RL problems with high dimensional action space, we need to assume the decay of eigenvalue is fast enough to break the curse of dimensionality.

\bigskip
\noindent\textbf{High Dimensional State, Finite Action Space.} Even when the action space is finite, there still exist MDPs that cannot be solved without the curse of dimensionality. For any dimension $d \ge 2$, length of each episode $H \in \mathbb{N}^+$ and positive constant $\delta > 0$, we define an MDP family $\mathcal{M}_{d,H,\delta}$ as follows:
\begin{align*}
    &\mathcal{S} = \mathbb{S}^{d-1}, \quad \mathcal{A} = \{0,1\}, \quad H = H, \quad \Theta_P = \{0\}, \quad \mu = \text{Uniform}_{\mathbb{S}^{d-1}},\\
    &\{r_{\theta_r}:\theta_r \in \Theta_r\} = \{r: \|r(h,\cdot)\|_{\mathcal{H}_k} \le 1, \forall h \in [H]\}, \\ &k((s,a),(s',a')) = \exp(-\|s-s'\|),  \quad 
    P_0(\,\cdot\,|\,h,s,a) = \delta_{T_{a,h} s}(\cdot), \\
    &T_{a,h} s = \begin{cases}
       (\phi_1,\dots,\phi_{h_d} + \delta,\dots,\phi_d), \text{ when } a = 0,\\
        (\phi_1,\dots,\phi_{h_d} - \delta,\dots,\phi_d), \text{ when } a = 1,
    \end{cases} 
\end{align*}
where $h_d = h\, mod\, d$ and we use the spherical coordinates $(\phi_1,\dots,\phi_d)$ to denote the points on $\mathbb{S}^{d-1}$.
Notice that for every $d, H, \delta$, the transition probability in the RL problem $\mathcal{M}_{d,H,\delta}$ is known since $\Theta_P$ is a single-point set. By construction, the transition probability $P_0$ indicates that the agent can take an action at each step to move on the sphere surface along one spherical coordinate (depending on $h$) with size $\delta$. When $\delta$ is small and $H$ is large, we know that for any delta function on $\mathbb{S}^{d-1}$, we can find a policy such that the corresponding state distribution at step $H$ is close to that delta function. In other words, the set $\Pi(H,P_0,\mu)$ is very large. Noticing that the kernel is a Laplacian kernel whose eigenvalue decay is slow, we expect the RL problem to be difficult to solve.
The following theorem shows that there exists no dimension-free bound on $\Delta_{\mathcal{M}_{d,H,\delta}}(n^{-\frac{1}{2}})$ for all $n, d, H$, and $\delta$.
Therefore, the above RL problem can not be solved without the curse of dimensionality.
\begin{prop}\label{cod_case}
There exist no universal constants $\alpha, \beta > 0$ and constant $C_d > 0$ only depending on $d$ such that
\begin{equation*}
   \sup_{\delta > 0}\Delta_{\mathcal{M}_{d,H,\delta}}(n^{-\frac{1}{2}}) \le C_d H^\alpha (\frac{1}{n})^\beta
\end{equation*}
holds for all $n, H \in \mathbb{N}^{+}$ and $d \ge 2$.
\end{prop}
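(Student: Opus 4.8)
The plan is to argue by contradiction: assume constants $\alpha,\beta>0$ and $\{C_d\}_{d\ge2}$ exist with $\sup_{\delta>0}\Delta_{\mathcal{M}_{d,H,\delta}}(n^{-\frac12})\le C_dH^\alpha n^{-\beta}$ for all $n,H\in\mathbb{N}^+$ and $d\ge2$, and to exhibit a violating triple $(d,n,H)$. The engine is the slow eigenvalue decay of the Laplacian kernel combined with Proposition \ref{lem_2}: if the admissible distribution set $\Pi(P_0,\mu)$ were essentially all of $\mathcal{P}(\mathbb{S}^{d-1}\times\mathcal{A})$, then $\Delta_{\mathcal{M}_{d,H,\delta}}(n^{-\frac12})=\inf_\nu\mathcal{R}(\Pi(P_0,\mu),\mathcal{H}_k,n^{-\frac12},\nu)$ would inherit a lower bound $\inf_\nu\mathcal{R}(\mathcal{P}(\mathbb{S}^{d-1}\times\mathcal{A}),\mathcal{H}_k,n^{-\frac12},\nu)\ge c_d\,n^{-\frac1{d-1}}$ for some $c_d>0$. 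Here the exponent $\tfrac1{d-1}$ comes from the second inequality of Proposition \ref{lem_2} together with the Laplacian tail-eigenvalue estimate $\sup_\rho\sum_{i=n+1}^\infty\Lambda_i^\rho\ge c_d\,n^{-\frac2{d-1}}$ of \cite[Section~5]{long20212} (note $k(z,z)=1$, so the hypothesis $\sup_z k(z,z)\le1$ of Proposition \ref{lem_2} holds). Since $\tfrac1{d-1}$ is smaller than any fixed $\beta$ once $d$ is large, this decay is too slow to be captured by a universal $n^{-\beta}$; the only obstruction is that realizing near-arbitrary distributions in $\Pi(H,P_0,\mu)\subset\Pi(P_0,\mu)$ costs horizon, which inflates the $H^\alpha$ factor. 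The argument hinges on showing this cost is only polynomial in $n$ with the small exponent $\tfrac2{d-1}$.

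First I would carry out the geometric ``collapse'' step, which I expect to be the main obstacle. Fix a target $(s^*,a^*)\in\mathbb{S}^{d-1}\times\mathcal{A}$ and a tolerance $\delta$, and use the spherical coordinates $(\phi_1,\dots)$. I choose a deterministic, state-dependent policy that at step $h<H$ moves the single coordinate $\phi_{h_d}$ by $\pm\delta$ toward the corresponding target coordinate (the shorter arc for the periodic coordinate). Each coordinate is updated once every $d$ steps, so after $H\ge 2\pi d/\delta$ steps every coordinate is driven to within $\delta$ of its target and stays there; hence, for all but a null set of starting points, $S_H$ lies within geodesic distance $Cd\delta$ of $s^*$ (with $C$ an absolute geometric constant, the singular trajectories being $\mu$-null). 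Setting $\pi_H(\cdot\mid s)\equiv\delta_{a^*}$, the law $\rho=\rho_{H,P_0,\pi,\mu}\in\Pi(H,P_0,\mu)$ is supported, up to a null set, in $B(s^*,Cd\delta)\times\{a^*\}$. Using the RKHS Hölder bound $|g(z)-g(z')|\le\|g\|_k\sqrt{2-2k(z,z')}\le\|g\|_k\sqrt{2\|s-s'\|}$, every $g$ with $\|g\|_k\le1$ satisfies $|\int g\,\mathrm{d}\rho-g(s^*,a^*)|\le\sqrt{2Cd\delta}$. Ranging over $(s^*,a^*)$ and over $\pm g$ yields $\|g\|_{\Pi(P_0,\mu)}\ge\|g\|_{\Pi(H,P_0,\mu)}\ge\|g\|_{C(\mathbb{S}^{d-1}\times\mathcal{A})}-\sqrt{2Cd\delta}$ for all $\|g\|_k\le1$.

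Next I would convert this into a bound on $\Delta$. Since $\|g\|_{\mathcal{P}(\mathbb{S}^{d-1}\times\mathcal{A})}=\|g\|_{C(\mathbb{S}^{d-1}\times\mathcal{A})}$, taking the supremum of the previous inequality over $\{g:\|g\|_k\le1,\ \|g\|_{L^2(\nu)}\le n^{-\frac12}\}$ and then $\inf_\nu$ gives $\Delta_{\mathcal{M}_{d,H,\delta}}(n^{-\frac12})\ge\inf_\nu\mathcal{R}(\mathcal{P}(\mathbb{S}^{d-1}\times\mathcal{A}),\mathcal{H}_k,n^{-\frac12},\nu)-\sqrt{2Cd\delta}\ge c_d\,n^{-\frac1{d-1}}-\sqrt{2Cd\delta}$, valid whenever $H\ge 2\pi d/\delta$ (the subtracted error being independent of $g$ and $\nu$, it passes through both the $\sup_g$ and the $\inf_\nu$).

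Finally I would balance the parameters. Fix an integer $d\ge2$ with $d-1>(2\alpha+1)/\beta$. For each $n$ choose $\delta_n$ so that $\sqrt{2Cd\delta_n}=\tfrac12 c_d\,n^{-\frac1{d-1}}$, i.e. $\delta_n$ is of order $n^{-\frac2{d-1}}$ with a $d$-dependent constant, and set $H_n=\lceil 2\pi d/\delta_n\rceil$, which is of order $n^{\frac2{d-1}}$. Then $\sup_{\delta>0}\Delta_{\mathcal{M}_{d,H_n,\delta}}(n^{-\frac12})\ge\Delta_{\mathcal{M}_{d,H_n,\delta_n}}(n^{-\frac12})\ge\tfrac12 c_d\,n^{-\frac1{d-1}}$, while the assumed bound at $H=H_n$ gives, for all large $n$, $\tfrac12 c_d\,n^{-\frac1{d-1}}\le C_dH_n^\alpha n^{-\beta}\le C_d'\,n^{\frac{2\alpha}{d-1}-\beta}$ with $C_d'$ depending only on $d$. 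Rearranging, $\tfrac{c_d}{2C_d'}\le n^{\frac{2\alpha+1}{d-1}-\beta}$ for all large $n$. By the choice of $d$ the exponent $\tfrac{2\alpha+1}{d-1}-\beta$ is negative, so the right-hand side tends to $0$ while the left-hand side is a fixed positive constant, a contradiction. Hence no such $\alpha,\beta,\{C_d\}$ exist. The only genuinely delicate point is the collapse step together with the bookkeeping of the two competing demands on $\delta$: small enough that the $\sqrt{2Cd\delta}$ error is dominated by $c_d n^{-1/(d-1)}$, yet not so small that $H_n=2\pi d/\delta_n$ ceases to be polynomial in $n$; everything else is a direct combination of Proposition \ref{lem_2}, the Laplacian eigenvalue decay, and the Hölder property of $\mathcal{H}_k$.
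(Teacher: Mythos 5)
Your proposal is correct and follows essentially the same route as the paper's proof: drive the state distribution to concentrate near an adversarially chosen point $(s^*,a^*)$ using a coordinate-wise greedy policy over a horizon $H\sim d/\delta$, invoke Proposition \ref{lem_2} together with the slow eigenvalue decay of the Laplacian kernel to lower-bound the perturbation response of $\mathcal{P}(\bS\times\bA)$, subtract the concentration error, and balance $\delta$, $H$, $n$ against the assumed bound to force $\frac{2\alpha+1}{d-1}\ge\beta$ for all $d$, a contradiction. The only (harmless) differences are that you run the perturbation estimate in primal form ($\|g\|_{\Pi}\ge\|g\|_{C(\bS\times\bA)}-\sqrt{2Cd\delta}$) rather than via the MMD triangle inequality of Lemma \ref{thm: concentration_coefficient}, and your $\sqrt{\delta}$ error term correctly reflects the H\"older-$\tfrac12$ modulus of the Laplacian kernel (the paper writes $C_d'\delta$ at that step), which merely changes the order of $H_n$ without affecting the contradiction.
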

\begin{proof}
For any $s \in \mathbb{S}^{d-1}$, we define the following policy $\pi^s$
\begin{equation*}
    \pi_h^s(\,\cdot\,|\,s'): \text{ choose  the policy } a \text{ such that } T_{a,h}s' \text{ is  closest to } s \text{ among all possible choices.} 
\end{equation*}
Then, in the above MDP with policy $\pi^s$, when $h \ge \frac{C'_d}{\delta}$, we have
\begin{equation}\label{concentration on a point}
    \|S_h -s \| \le C'_d \delta \text{ with probability 1}.
\end{equation}
Here $C'_d > 0$ is a constant only depending on $d$, which may vary from line to line in the following proof.
Using Proposition~\ref{lem_2} and the eigenvalue decay of the Laplacian kernel (see, e.g., \cite[Section 5]{long20212}), we know that for any $\nu \in \mathcal{P}(\bS\times\bA)$,
\begin{align*}
      &\sup_{z \in \bS \times \bA}\inf_{g \in L^2(\nu)}[\mathrm{MMD}_k(\delta_z, g\circ \nu) + \frac{\|g\|_{L^2(\nu)}}{\sqrt{n}}] \\
     = &\sup_{\rho \in \mathcal{P}(\bS \times \bA)}\inf_{g \in L^2(\nu)}[\mathrm{MMD}_k(\rho, g\circ \nu) + \frac{\|g\|_{L^2(\nu)}}{\sqrt{n}}] \\
     \ge & C'_d n^{-\frac{1}{2(d-1)}}.
\end{align*}
Therefore, there exists $(s^*,a^*) \in \bS \times\bA$ such that
\begin{equation*}
    \inf_{g \in L^2(\nu)}[\mathrm{MMD}_k(\delta_{(s^*,a^*)}, g\circ \nu) + \frac{\|g\|_{L^2(\nu)}}{\sqrt{n}}] \ge C'_d n^{-\frac{1}{2(d-1)}}.
\end{equation*}
Combining the last equation with Lemma \ref{thm: concentration_coefficient}, if $H = [\frac{C'_d}{\delta}]$, we have
\begin{align*}
    \mathcal{R}(\Pi(H,P_0,\mu),\cH_k,n^{-\frac{1}{2}},\nu) &\ge  \inf_{g \in L^2(\nu)}[\mathrm{MMD}_k(\rho_{H,P_0,\pi^{s^*},\mu}, g\circ \nu) + \frac{\|g\|_{L^2(\nu)}}{\sqrt{n}}]\\
    &\ge C'_d n^{-\frac{1}{2(d-1)}} - \mathrm{MMD}_k(\delta_{(s^*,a^*)},\rho_{H,P_0,\pi^{s^*},\mu}).
\end{align*}
By inequality \eqref{concentration on a point}, we have,
\begin{equation*}
    \mathrm{MMD}_k(\delta_{(s^*,a^*)},\rho_{H,P_0,\pi^{s^*},\mu}) \le C'_d\delta.
\end{equation*}
Therefore, we can choose
\begin{equation*}
    \delta = C'_d n^{-\frac{1}{2(d-1)}},
\end{equation*} 
such that
\begin{equation*}
   \mathcal{R}(\Pi(H,P_0,\mu),\cH_k,n^{-\frac{1}{2}},\nu)\ge C'_d n^{-\frac{1}{2(d-1)}}
\end{equation*}
and
\begin{equation*}
   H = [C'_dn ^{\frac{1}{2(d-1)}}]. 
\end{equation*}
Therefore, combining the last two equations and the definition of $\Delta_{\mathcal{M}_{d,H,\delta}}(n^{-\frac{1}{2}})$ in the case of known transition \eqref{eq:definition_delta1}, if the constants $\alpha,\beta$ and $C_d$ exist, we must have
\begin{equation}
    C'_d n^{-\frac{1}{2(d-1)}} \le C_d (C'_d)^\alpha n^{\frac{\alpha}{2(d-1)}-\beta} 
\end{equation}
holds for all $n\in \mathbb{N}^+$ and $d\ge2$. Therefore,
\begin{equation}
    \frac{\alpha+1}{2(d-1)} \ge \beta
\end{equation}
holds for all $d\ge 2$, 
which is a contradiction.
\end{proof}
\section{Conclusions and Future Works}
In this paper, we define the perturbational complexity by distribution mismatch $\Delta_{\mathcal{M}}(\epsilon)$ when the reward functions lie in the unit ball of an RKHS and the transition probabilities lie in a given arbitrary set. We show that $\Delta_{\mathcal{M}}(\epsilon)$ is an informative indicator of whether the RL problems can be solved efficiently or not. Some concrete properties of $\Delta_{\mathcal{M}}(\epsilon)$ are studied in several cases. There are still quite a few unsolved problems related to this topic. First, the upper bound and lower bound in the current work are not matched. Second, in the case of unknown transition, we still need assumption \eqref{Bellman_assumption} to prove the convergence of fitted Q-iteration algorithms. How to relax this assumption or show the necessity of this assumption remains unclear yet. Third, our lower bound mainly utilizes the uncertainty of reward functions. If the reward function is known or the reward function is deterministic such that there is no noise in reward, our lower bound can not be applied. It is of interest to study the lower bound and corresponding upper bound %
in these situations. Fourth, our setup of RL requires a generative simulator. In some RL problems, however, one can only access an episodic simulator. How to establish similar results in this case is still open. Finally, we wish to use information related to the perturbational complexity to guide the design of efficient RL algorithms in practice.

\section*{Acknowledgement}
We thank Professor Weinan E and Professor Mengdi Wang for their valuable comments and suggestions during the preparation of this work.

\bibliographystyle{plain}
\bibliography{reference}
\end{document}